\documentclass[envcountsame]{llncs}


\usepackage{srcltx}

\usepackage{times}

\usepackage{amsmath, amssymb}

\usepackage{braket}

\usepackage{stmaryrd}

\usepackage{comment}


\newcommand{\an}[1]{\left\langle #1 \right\rangle}

\newcommand{\C}[1]{\textsf{C}\protect\nobreakdash#1\hspace{0pt}}
\newcommand{\mC}{\mathsf{C}}
\newcommand{\SE}[1]{\textsf{SE}\protect\nobreakdash#1\hspace{0pt}}
\newcommand{\mSE}{\mathsf{SE}}

\newcommand{\mSt}{\mathsf{S}}
\newcommand{\SMR}[1]{\textsf{SMR}\protect\nobreakdash#1\hspace{0pt}}
\newcommand{\mSMR}{\mathsf{SMR}}
\newcommand{\SR}[1]{\textsf{SR}\protect\nobreakdash#1\hspace{0pt}}
\newcommand{\mSR}{\mathsf{SR}}
\newcommand{\SU}[1]{\textsf{SU}\protect\nobreakdash#1\hspace{0pt}}
\newcommand{\mSU}{\mathsf{SU}}
\newcommand{\SUC}[1]{\textsf{SUC}\protect\nobreakdash#1\hspace{0pt}}

\newcommand{\lang}{\mathcal{L}}
\newcommand{\pint}{\mathcal{I}}
\newcommand{\seint}{\mathcal{I}^{\mSE}}

\newcommand{\lif}{\subset}
\newcommand{\lthen}{\supset}
\newcommand{\lequiv}{\equiv}
\newcommand{\bigland}{\bigwedge}
\newcommand{\biglor}{\bigvee}

\newcommand{\lpnot}[1][\!\!]{\sim#1}
\newcommand{\lpif}{\leftarrow}




\newcommand{\at}{p}


\newcommand{\modc}[1]{\mathsf{mod}_{\mC}\left(#1\right)}
\newcommand{\modse}[1]{\mathsf{mod}_{\mSE}\left(#1\right)}
\newcommand{\mcS}{\ensuremath{\mathcal{S}}}
\newcommand{\mcT}{\ensuremath{\mathcal{T}}}

\newcommand{\mcSh}[1]{\ensuremath{\mathcal{S}}\protect\nobreakdash-\hspace{0pt}}
\newcommand{\SPart}[2]{%
	\ensuremath{\mathcal{S}}%
	\protect\nobreakdash-\hspace{0pt}%
	#1%
	\protect\nobreakdash-\hspace{0pt}%
	#2%
}
\newcommand{\SPosHead}{\SPart{positive}{head}}
\newcommand{\SNegHead}{\SPart{negative}{head}}
\newcommand{\SPosBody}{\SPart{positive}{body}}
\newcommand{\SNegBody}{\SPart{negative}{body}}

\newcommand{\synt}[1]{\mathsf{rule}(#1)}

\newcommand{\secan}[1]{\mathsf{can}(#1)}


\newcommand{\pr}[1][\defpr]{\mathcal{#1}}
\newcommand{\prP}{\pr[P]}
\newcommand{\prQ}{\pr[Q]}

\def\qed{\ifmmode\tag*{\squareforqed}\else{\unskip\nobreak\hfil
\penalty50\hskip1em\null\nobreak\hfil\squareforqed
\parfillskip=0pt\finalhyphendemerits=0\endgraf}\fi}

\newenvironment{lemma*}[1]{\par \noindent \textbf{Lemma \ref{#1}.}\it}{}
\newenvironment{theorem*}[1]{\par \noindent \textbf{Theorem \ref{#1}.}\it}{}
\newenvironment{proposition*}[1]{\par \noindent \textbf{Proposition \ref{#1}.}\it}{}

\emergencystretch 20pt

\newenvironment{extended}{}{}



\begin{document}

\title{Back and Forth Between Rules and \SE-Models %
 (Extended Version)\thanks{This is an extended version of the paper accepted for publication at LPNMR 2011. ~ ~ ~ ~ ~ ~ ~ ~ ~ \textbf{Changes on March 1, 2011}: minor substitutions to be in line with the LPNMR version.}%
}
\author{Martin Slota \and Jo{\~a}o Leite}
\institute{
	CENTRIA \& Departamento de Inform{\'a}tica \\
	Universidade Nova de Lisboa \\
	Quinta da Torre \\
	2829-516 Caparica, Portugal
}

\maketitle

\begin{abstract}
Rules in logic programming encode information about mutual interdependencies
between literals that is not captured by any of the commonly used semantics.
This information becomes essential as soon as a program needs to be modified
or further manipulated.

We argue that, in these cases, a program should not be viewed solely as the
set of its models. Instead, it should be viewed and manipulated as the
\emph{set of sets of models} of each rule inside it. With this in mind, we
investigate and highlight relations between the \SE-model semantics and
individual rules. We identify a set of representatives of rule equivalence
classes induced by \SE-models, and so pinpoint the exact expressivity of this
semantics with respect to a single rule. We also characterise the class of
sets of \SE-interpretations representable by a single rule. Finally, we
discuss the introduction of two notions of equivalence, both stronger than
\emph{strong equivalence} \cite{Lifschitz2001} and weaker than \emph{strong
update equivalence} \cite{Inoue2004}, which seem more suitable whenever the
dependency information found in rules is of interest.
\end{abstract}

\section{Motivation}

In this paper we take a closer look at the relationship between the \SE-model
semantics and individual rules of a logic program. We identify a set of
representatives of rule equivalence classes, which we dub \emph{canonical
rules}, characterise the class of sets of \SE-interpretations that are
representable by a single rule, and show how the corresponding canonical rules
can be reconstructed from them. We believe that these results pave the way to
view and manipulate a logic program as the \emph{set of sets of \SE-models} of
each rule inside it. This is important in situations when the set of
\SE-models of the whole program fails to capture essential information encoded
in individual rules inside it, such as when the program needs to be modified
or further manipulated. With this in mind, we briefly discuss two new notions
of equivalence, stronger than \emph{strong equivalence} \cite{Lifschitz2001}
and weaker than \emph{strong update equivalence} \cite{Inoue2004}.

In many extensions of Answer-Set Programming, individual rules of a program
are treated as first-class citizens -- apart from their prime role of encoding
the answer sets assigned to the program, they carry essential information
about mutual interdependencies between literals that cannot be captured by
answer sets. Examples that enjoy these characteristics include the numerous
approaches that deal with dynamics of logic programs, where inconsistencies
between older and newer knowledge need to be resolved by ``sacrificing'' parts
of an older program (such as in
\cite{Damasio1997,Alferes2000,Eiter2002,Sakama2003,Zhang2006,Alferes2005,Delgrande2007,Delgrande2008,Delgrande2010}).
These approaches look at subsets of logic programs in search of plausible
conflict resolutions. Some of them go even further and consider particular
literals in heads and bodies of rules in order to identify conflicts and find
ways to solve them. This often leads to definitions of new notions which are
\emph{too} syntax-dependent. At the same time, however, semantic properties of
the very same notions need to be analysed, and their syntactic basis then
frequently turns into a serious impediment.

Arguably, a \emph{more} syntax-independent method for this kind of operations
would be desirable. Not only would it be theoretically more appealing, but it
would also allow for a better understanding of its properties with respect to
the underlying semantics. Moreover, such a more semantic approach could
facilitate the establishment of bridges with the area of Belief Change (see
\cite{Gardenfors1992} for an introduction), and benefit from the many years of
research where semantic change operations on monotonic logics have been
studied, desirable properties for such operations have been identified, and
constructive definitions of operators satisfying these properties have been
introduced. 

However, as has repeatedly been argued in the literature
\cite{Alferes2000,Slota2010b}, fully semantic methods do not seem to be
appropriate for the task at hand. Though their definition and analysis is
technically possible and leads to very elegant and seemingly desirable
properties, there are a number of simple examples for which these methods fail
to provide results that would be in line with basic intuitions
\cite{Alferes2000}. Also, as shown in \cite{Slota2010b}, these individual
problems follow a certain pattern: intuitively, any purely semantic approach
to logic program updates satisfying a few very straightforward and desirable
properties cannot comply with the property of \emph{support}
\cite{Apt1988,Dix1995a}, which lies at the very heart of semantics for Logic
Programs. This can be demonstrated on simple programs $\prP = \set{p., q.}$
and $\prQ = \set{p., q \lpif p.}$ which are \emph{strongly equivalent}, thus
indistinguishable from the semantic perspective, but while $\prP$ does not
contain any dependencies, $\prQ$ introduces a dependence of atom $q$ upon atom
$p$. This has far-reaching consequences, at least with respect to important
notions from the logic programming point of view, such as that of
\emph{support}, which are themselves defined in syntactic rather than semantic
terms. For example, if we change our beliefs about $p$, and come to believe
that it is false, we may expect different beliefs regarding $q$, depending on
whether we start form $\prP$, in which case $q$ would still be true, or
$\prQ$, in which case $q$ would no longer be true because it is no longer
supported.

We believe that rules indeed contain information that, to the best of our
knowledge, cannot be captured by any of the existing semantics for Logic
Programs. In many situations, this information is essential for making further
decisions down the line. Therefore, any operation on logic programs that is
expected to respect syntax-based properties like \emph{support} cannot operate
solely on the semantic level, but rather has to look inside the program and
acknowledge rules as the atomic pieces of knowledge. At the same time,
however, rules need not be manipulated in their original form. The abstraction
provided by Logic Programming semantics such as \SE-models can be used to
discard unimportant differences between the syntactic forms of rules and focus
on their semantic content. Thus, while a program cannot be viewed as the set
of its models for reasons described above, it can still be viewed as a
\emph{set of sets of models} of rules in it. Such a shift of focus should make
the approach easier to manage theoretically, while not neglecting the
importance of literal dependencies expressed in individual rules. It could
also become a bridge between existing approaches to rule evolution and
properties as well as operator constructions known from Belief Change, not
only highlighting the differences between them, but also clarifying why such
differences arise in the first place.

However, before a deeper investigation of such an approach can begin, we do
need to know more about the relation of \SE-models and individual rules. This
is the aim of this paper, where we:

\begin{itemize}
	\item identify a set of representatives of rule equivalence classes induced
		by the \SE-model semantics, which we dub \emph{canonical rules};

	\item show how to reconstruct \emph{canonical rules} from their sets of
		\SE-models;

	\item based on the above, characterise the sets of \SE-interpretations that
		are representable by a single rule;

	\item reveal connections between the set of \SE-models of a rule and convex
		sublattices of the set of classical interpretations;

	\item introduce two new notions of equivalence -- stronger than \emph{strong
		equivalence} \cite{Lifschitz2001} and weaker than \emph{strong update
		equivalence} \cite{Inoue2004} -- and argue that they are more suitable
		when rules are to be treated as first-class citizens.
\end{itemize}

We believe that these results provide important insights into the workings of
\SE-models with respect to individual rules and will serve as a toolset for
manipulating logic programs at the semantic level.

The rest of this document is structured as follows: We introduce syntax and
semantics of logic programs in Sect. \ref{sec:preliminaries} while in Sect.
\ref{sec:se models of rules} we define the set of representatives for rule
equivalence classes and introduce transformations pinpointing the expressivity
of \SE-model semantics with respect to individual rules. We also give two
characterisations of the sets of \SE-interpretations that are representable by
a single rule. In Sect. \ref{sec:discussion} we discuss the relevance of our
results and propose the two new notions of equivalence.

\section{Preliminaries}

\label{sec:preliminaries}

We assume to be given a nonempty, finite set of propositional atoms $\lang$
from which we construct both propositional formulae and rules.

\emph{Propositional formulae} are formed in the usual way from propositional
atoms in $\lang$, the logical constants $\top$ an $\bot$, and the connectives
$\lnot, \land, \lor, \lif, \lthen, \lequiv$. An \emph{interpretation} is any
subset of $\lang$, naturally inducing a truth assignment to all propositional
formulae. If a formula $\phi$ is true under interpretation $I$, we also say
that $I$ is a \emph{model of $\phi$}. The set of all interpretations is
denoted by $\pint$.

Similarly as for propositional formulae, the basic syntactic building blocks
of rules are propositional atoms from $\lang$. A \emph{negative literal} is an
atom preceded by $\lpnot{}$, denoting default negation. A \emph{literal} is
either an atom or a negative literal. As a convention, double default negation
is absorbed, so that $\lpnot[]\lpnot p$ denotes the atom $p$. Given a set of
literals $X$, we introduce the following notation:
\begin{align*}
	X^+ &= \Set{ p \in \lang | p \in X }
	& X^- &= \Set{ p \in \lang | \lpnot p \in X }
	& \lpnot X &= \Set{ \lpnot p | p \in X \cap \lang }
\end{align*}

Given natural numbers $k, l, m, n$ and atoms $p_1, \dotsc, p_k$, $q_1, \dotsc,
q_l$, $r_1, \dotsc, r_m$, $s_1, \dotsc, s_n$, we say the pair of sets of
literals
\begin{equation} \label{eq:rule_formal}
	\an{
		\Set{p_1, \dotsc, p_k, \lpnot q_1, \dotsc, \lpnot q_l},
		\Set{r_1, \dotsc, r_m, \lpnot s_1, \dotsc, \lpnot s_n}
	}
\end{equation}
is a \emph{rule}. The first component of a rule \eqref{eq:rule_formal} is
denoted by $H(r)$ and the second by $B(r)$.  We say $H(r)$ is the \emph{head
of $r$}, $H(r)^+$ is the \emph{positive head of $r$}, $H(r)^-$ is the
\emph{negative head of $r$}, $B(r)$ is the \emph{body of $r$}, $B(r)^+$ is the
\emph{positive body of $r$} and $B(r)^-$ is the \emph{negative body of $r$}.
Usually, for convenience, instead of a rule $r$ of the form
\eqref{eq:rule_formal} we write the expression
\begin{equation} \label{eq:rule}
	p_1; \dotsc; p_k; \lpnot q_1; \dotsc; \lpnot q_l
		\lpif r_1, \dotsc, r_m, \lpnot s_1, \dotsc, \lpnot s_n.
\end{equation}
or, alternatively, $H(r)^+; \lpnot H(r)^- \lpif B(r)^+, \lpnot B(r)^-$. A rule
is called \emph{positive} if its head and body contain only atoms. A
\emph{program} is any set of rules.

We also introduce the following non-standard notion which we will need
throughout the rest of the paper:

\begin{definition}[Canonical Tautology] \label{def:canonical tautology}
	Let $\at_\varepsilon$ be an arbitrary but fixed atom. The \emph{canonical
	tautology}, denoted by $\varepsilon$, is the rule $\at_\varepsilon \lpif
	\at_\varepsilon$.
\end{definition}

In the following, we define two semantics for rules. One is that of classical
models, where a rule is simply treated as a classical implication. The other
is based on the logic of Here-and-There \cite{Lukasiewicz1941,Pearce1997},
more accurately on a reformulation of the here-and-there semantics, called
\emph{\SE-model semantics}, defined for rules \cite{Turner2003}. This second
semantics is strictly more expressive than both classical models and the
stable model semantics \cite{Gelfond1988}.

\begin{extended}
\paragraph{\C-Models.}~\!\!
\end{extended}
We introduce the classical model of a rule by translating the rule into a
propositional formula: Given a rule $r$ of the form \eqref{eq:rule}, we define
the propositional formula $\overline{r}$ as $\biglor \Set{p_1, \dotsc, p_k,
\lnot q_1, \dotsc, \lnot q_l} \lif \bigland \Set{r_1, \dotsc, r_m, \lnot s_1,
\dotsc, \lnot s_n}$.  Note that $\biglor \emptyset \equiv \bot$ and $\bigland
\emptyset \equiv \top$. A classical model, or \emph{\C-model}, of a rule $r$
is any model of the formula $\overline{r}$.
\begin{extended}
The set of all \C-models of a rule $r$ is denoted by $\modc{r}$.
\end{extended}

\begin{extended}
We say a rule $r$ is a \emph{\C-tautology}, or \emph{\C-tautological}, if
$\modc{r} = \pint$. Otherwise, $r$ is \emph{non-\C-tautological}. We say two
rules $r, r'$ are \emph{\C-equivalent} if they have the same set of \C-models.
Note that the canonical tautology $\varepsilon$ (c.f. Definition
\ref{def:canonical tautology}) is \C-tautological.
\end{extended}

\begin{extended}
\paragraph{\SE-Models.}~\!\!
\end{extended}
Given a rule $r$ and an interpretation $J$, we define the \emph{reduct of $r$
relative to $J$}, denoted by $r^J$, as follows: If some atom from $H(r)^-$ is
false under $J$ or some atom from $B(r)^-$ is true under $J$, then $r^J$ is
$\varepsilon$; otherwise $r^J$ is $H(r)^+ \lpif B(r)^+$. Intuitively, the
reduct $r^J$ is the positive part of a rule $r$ that ``remains'' after all its
negative literals are interpreted under interpretation $J$. The two conditions
in the definition check whether the rule is satisfied based on the negative
atoms in its head and body, interpreted under $J$. If this is the case, the
reduct is by definition the canonical tautology. If none of these conditions
is satisfied, the positive parts of $r$ are kept in the reduct, discarding the
negative ones.

An \emph{\SE-interpretation} is a pair of interpretations $\an{I, J}$ such
that $I$ is a subset of $J$. The set of all \SE-interpretations is denoted by
$\seint$. We say that an \SE-interpretation $\an{I, J}$ is an \emph{\SE-model}
of a rule $r$ if $J$ is a \C-model of $r$ and $I$ is a \C-model of $r^J$. The
set of all \SE-models of a rule $r$ is denoted by $\modse{r}$. The \SE-models
of a program $\prP$ are the \SE-models of all rules in $\prP$. A set of
\SE-interpretations $\mcS$ is called \textbf{rule-representable} if there
exists a rule $r$ such that $\mcS = \modse{r}$.

We say that a rule $r$ is \textbf{\SE-tautological} if $\modse{r} = \seint$.
Note that the canonical tautology $\varepsilon$ (c.f. Definition
\ref{def:canonical tautology}) is \SE-tautological. We say that two rules $r,
r'$ are \emph{strongly equivalent}, or \emph{\SE-equivalent}, if they have the
same set of \SE-models.

\section{Rule Equivalence Classes and their Canonical Rules}

\label{sec:se models of rules}

Our goal is to find useful insights into the inner workings of the \SE-model
semantics with respect to single rules. In order to do so, we first introduce
a set of representatives of rule equivalence classes induced by \SE-models and
show how the representative of a class can be constructed given one of its
members. Then we show how to reconstruct a representative from the set of its
\SE-models. Finally, we pinpoint the conditions under which a set of
\SE-interpretations is rule-representable.

\subsection{Canonical Rules}

We start by bringing out simple but powerful transformations that simplify a
given rule while preserving its \SE-models. Most of these results have already
been formulated in various ways \cite{Inoue1998,Inoue2004,Cabalar2007a}. The
following result summarises the conditions under which a rule is
\SE-tautological:

\begin{lemma}
	[Consequence of Theorem 4.4 in \cite{Inoue2004}; part i) of Lemma 2 in
	\cite{Cabalar2007a}]
	\label{lemma:se:tautology}
	Let $H$ and $B$ be sets of literals and $\at$ be an atom. Then a rule is
	\SE-tautological if it takes any of the following forms:
	\begin{align*}
		\at; H &\lpif \at, B. &
		H; \lpnot \at &\lpif B, \lpnot \at. &
		H \lpif B, \at, \lpnot \at.
	\end{align*}
\end{lemma}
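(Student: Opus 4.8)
The plan is to verify each of the three rule forms separately by unwinding the definition of \SE-model, i.e. by showing that for every \SE-interpretation $\an{I,J}$ with $I \subseteq J$, the pair $\an{I,J}$ is an \SE-model of the rule. For each form, this amounts to two checks: first that $J$ is a \C-model of the rule, and second that $I$ is a \C-model of the reduct of the rule relative to $J$.

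For the first form, $\at; H \lpif \at, B$: I would observe that the associated propositional formula is $(\at \lor \bigvee\nobreak' H) \subset (\at \land \bigwedge\nobreak' B)$, where $\bigvee\nobreak'H$ and $\bigwedge\nobreak'B$ denote the translations of the positive/negative parts of $H$ and $B$. Since $\at$ appears as a disjunct in the consequent and as a conjunct in the antecedent, any interpretation satisfying the antecedent makes $\at$ true, hence makes the consequent true; so every interpretation is a \C-model, in particular $J$ is. For the reduct $r^J$: by definition it is either $\varepsilon$ (if some atom of $H^-$ is false under $J$ or some atom of $B^-$ is true under $J$), which is \C-tautological so $I$ trivially satisfies it; or else it is $\at; H^+ \lpif \at, B^+$, whose formula again has $\at$ on both sides, so every interpretation — in particular $I$ — is a \C-model. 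The second form, $H; \lpnot \at \lpif B, \lpnot \at$, is symmetric: the formula is $(\bigvee\nobreak'H \lor \lnot\at) \subset (\bigwedge\nobreak'B \land \lnot\at)$, and $\lnot\at$ sits on both sides, so $J$ is a \C-model; for the reduct, note that $\at$ occurs in $B^-$, so if $\at$ is true under $J$ the reduct is $\varepsilon$ and we are done, while if $\at$ is false under $J$ the reduct is again $\varepsilon$ because $\at \in H^-$ — so in all cases $r^J = \varepsilon$ and $I$ satisfies it. The third form, $H \lpif B, \at, \lpnot \at$: the antecedent contains both $\at$ and $\lnot\at$, so it is unsatisfiable, hence $J$ is a \C-model vacuously; and for the reduct, $\at$ occurs in $B^-$, so whenever $\at$ is true under $J$ the reduct is $\varepsilon$, while whenever $\at$ is false under $J$ the reduct is $H^+ \lpif B^+, \at$ whose body is still unsatisfiable (it contains $\at$), so $I$ satisfies it vacuously.

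I do not expect a genuine obstacle here; the only thing that needs a little care is bookkeeping the reduct's case split correctly — in particular remembering that $\at \in H^-$ forces the reduct to $\varepsilon$ exactly when $\at$ is \emph{false} under $J$, whereas $\at \in B^-$ forces it to $\varepsilon$ exactly when $\at$ is \emph{true} under $J$ — and making sure that in the residual (non-$\varepsilon$) case the relevant atom still appears in the positive head or positive body so that the same syntactic argument applies to $r^J$ as to $r$. Since the claim is only the "if" direction (these forms are sufficient for being \SE-tautological), no completeness argument is required, and the whole proof is a short direct verification.
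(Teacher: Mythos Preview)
Your approach is exactly the paper's: treat each of the three forms in turn, check that $J$ is a \C-model of $r$ and that $I$ is a \C-model of $r^J$, splitting on whether the reduct collapses to $\varepsilon$. The first two cases are handled correctly and match the paper verbatim.

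There is one slip in the third case. After observing that $\at$ is false under $J$, you write that the reduct $H^+ \lpif B^+, \at$ has a body that is ``still unsatisfiable (it contains $\at$)''. That is not true: the body of the reduct is a conjunction of atoms and is perfectly satisfiable as a formula. What you need (and what the paper uses) is the extra step that $I \subseteq J$ and $\at \notin J$ force $\at \notin I$, so $I$ fails to satisfy the body and is therefore a \C-model of the reduct. Once you replace ``unsatisfiable'' by ``not satisfied by $I$, since $I \subseteq J$'', the argument is complete and coincides with the paper's proof.
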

\begin{proof}
	See Appendix \ref{app:proofs}, page \pageref{proof:lemma:se:tautology}. \qed
\end{proof}

\noindent Thus, repeating an atom in different ``components'' of the rule
frequently causes the rule to be \SE-tautological. In particular, this happens
if the same atom occurs in the positive head and positive body, or in the
negative head and negative body, or in the positive and negative bodies of a
rule. How about the cases when the head contains a negation of a literal from
the body? The following Lemma clarifies this situation:

\begin{lemma}
	[Consequence of (3) and (4) in Lemma 1 in \cite{Cabalar2007a}]
	\label{lemma:se:head repetition}
	Let $H$ and $B$ be sets of literals and $L$ be a literal. Then rules of the
	following forms are \SE-equivalent:
	\begin{align}
		H; \lpnot L &\lpif L, B. & H &\lpif L, B.
	\end{align}
\end{lemma}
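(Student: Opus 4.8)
The plan is to prove the two rules have identical \SE-models by unwinding the definitions directly. Write $r_1$ for the rule $H; \lpnot L \lpif L, B.$ and $r_2$ for $H \lpif L, B.$, and recall that $\an{I, J} \in \modse{r}$ iff $J$ is a \C-model of $r$ and $I$ is a \C-model of $r^J$. So it suffices to establish two facts: (i) $r_1$ and $r_2$ have the same \C-models; and (ii) for every interpretation $J$, the reducts $r_1^J$ and $r_2^J$ have the same \C-models. I would split both parts into the cases $L = \at$ (a positive literal) and $L = \lpnot \at$ (a negative literal), since adding $\lpnot L$ to the head enlarges the negative head in the first case and the positive head in the second.

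For (i), translate both rules to propositional formulae. When $L = \at$, $\overline{r_1}$ is $\bigl(\bigvee H^+ \lor \bigvee_{p \in H^-} \lnot p \lor \lnot \at\bigr) \lif \bigl(\bigwedge B^+ \land \bigwedge_{p \in B^-} \lnot p \land \at\bigr)$, while $\overline{r_2}$ is the same formula without the disjunct $\lnot \at$. Since the body conjunction already contains $\at$, that extra disjunct is classically redundant, so $\overline{r_1} \equiv \overline{r_2}$; the case $L = \lpnot \at$ is symmetric, with $\at$ in place of $\lnot \at$ in the consequent and $\lnot \at$ now supplied by the antecedent. Hence the ``$J$ is a \C-model'' half of the \SE-model test agrees on $r_1$ and $r_2$.

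For (ii), fix $J$ and inspect when each reduct degenerates to $\varepsilon$. When $L = \at$, the reduct $r_2^J$ equals $\varepsilon$ exactly when some atom of $H^-$ is false under $J$ or some atom of $B^-$ is true under $J$, and $r_1^J$ equals $\varepsilon$ under those same conditions or, additionally, when $\at \notin J$ (because $\at \in H(r_1)^-$). If the common conditions hold, both reducts are $\varepsilon$; if neither holds and $\at \in J$, both reducts are literally $H^+ \lpif B^+, \at$; and if $\at \notin J$, then $r_1^J = \varepsilon$ while $r_2^J = H^+ \lpif B^+, \at$, but now $\at \notin I$ for every $I \subseteq J$, so the hypothesis $\bigwedge B^+ \land \at$ of $\overline{r_2^J}$ is false under $I$ and $I$ is vacuously a \C-model of $r_2^J$, matching the fact that it is trivially a \C-model of $\varepsilon$. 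When $L = \lpnot \at$, the triggers for $r_1^J = \varepsilon$ and $r_2^J = \varepsilon$ coincide (the two rules differ only in the positive head), so either both reducts are $\varepsilon$, or both are non-tautological with $\at \notin J$, in which case $r_1^J = H^+, \at \lpif B^+$ and $r_2^J = H^+ \lpif B^+$; since $\at \notin I$ for every $I \subseteq J$, the extra head disjunct $\at$ is false under $I$ and the two reducts have exactly the same \C-models.

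Combining (i) and (ii) yields $\an{I, J} \in \modse{r_1} \iff \an{I, J} \in \modse{r_2}$ for every \SE-interpretation $\an{I, J}$, i.e. $r_1$ and $r_2$ are \SE-equivalent. The only mildly delicate point — the main obstacle — is the case analysis in (ii): one must observe that whenever $r_1^J$ collapses to $\varepsilon$ but $r_2^J$ does not, the head or body literal on which the two reducts differ involves the atom $\at$, which is then excluded from every $I \subseteq J$, so the discrepancy is invisible to the \C-models of the reduct. Once this observation is isolated, the remaining subcases are a routine finite check against the negative parts of the two rules.
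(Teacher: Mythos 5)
Your proof is correct and follows essentially the same route as the paper's: the paper splits the lemma into two auxiliary lemmas according to the polarity of $L$, establishes \C-equivalence of the two rules, and then compares the reducts case by case, using exactly your key observation that whenever one reduct collapses to $\varepsilon$ and the other does not, the discrepancy involves the atom underlying $L$, which is absent from $J$ and hence from every $I \subseteq J$. One small caveat: your claim (ii) is false if read literally over \emph{all} interpretations (when $r_1^J = \varepsilon$ but $r_2^J$ is $H^+ \lpif B^+, \at$, an interpretation containing $\at$ and $B^+$ but no atom of $H^+$ falsifies $r_2^J$); it should be stated, as your argument in fact uses it, only for $I \subseteq J$.
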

\begin{proof}
	See Appendix \ref{app:proofs}, page \pageref{proof:lemma:se:head
	repetition}. \qed
\end{proof}

\noindent So if a literal is present in the body of a rule, its negation can
be removed from the head.

Until now we have seen that a rule $r$ that has a common atom in at
least two of the sets $H(r)^+ \cup H(r)^-$, $B(r)^+$ and $B(r)^-$ is either
\SE-tautological, or \SE-equivalent to a rule where the atom is omitted from
the rule's head. So such a rule is always \SE-equivalent either to the
canonical tautology $\varepsilon$, or to a rule without such repetitions.
Perhaps surprisingly, repetitions in positive and negative head cannot be
simplified away. For example, over the alphabet $\lang_\at = \set{\at}$, the
rule ``$\at; \lpnot \at \lpif.$'' has two \SE-models, $\an{\emptyset,
\emptyset}$ and $\an{\set{\at}, \set{\at}}$, so it is not \SE-tautological,
nor is it \SE-equivalent to any of the facts ``$\at.$'' and ``$\lpnot \at.$''.
Actually, it is not very difficult to see that it is not \SE-equivalent to
\emph{any} other rule, even over larger alphabets. So the fact that an atom is
in both $H(r)^+$ and $H(r)^-$ cannot all by itself imply that some kind of
\SE-models preserving rule simplification is possible.

The final Lemma reveals a special case in which we can eliminate the whole
negative head of a rule and move it to its positive body. This occurs whenever
the positive head is empty.

\begin{lemma}[Related to Corollary 4.10 in \cite{Inoue1998} and Corollary 1 in
	\cite{Cabalar2007a}] \label{lemma:se:constraint}
	Let $H^-$ be a set of negative literals, $B$ be a set of literals and $\at$
	be an atom. Then rules of the following forms are \SE-equivalent:
	\begin{align*}
		\lpnot \at; H^- &\lpif B. & H^- &\lpif \at, B.
	\end{align*}
\end{lemma}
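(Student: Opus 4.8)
We need to show that the rules $\lpnot \at; H^- \lpif B.$ and $H^- \lpif \at, B.$ are \SE-equivalent, i.e. that they have the same set of \SE-models. Call them $r_1$ and $r_2$ respectively. The plan is to compute $\modse{r_1}$ and $\modse{r_2}$ directly from the definitions of \C-model and reduct, and check that the two sets coincide. Since everything in the heads is negative and $r_2$ has an empty negative body, the reduct computation in both cases is driven purely by the truth value under $J$ of the atoms in $H^-$ (and, for $r_2$, of $\at$, which now sits in the positive body). Throughout, let $\an{I, J}$ be an arbitrary \SE-interpretation, so $I \subseteq J$.

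First I would handle the \C-model (outer) condition. For $r_1$, $J \in \modc{r_1}$ means $J$ satisfies $\lnot \at \lor \biglor \lpnot H^- \lpif \bigland \widehat{B}$ (writing $\widehat{B}$ for the conjunction associated with the body $B$); equivalently, if $J \models B$ then either $\at \notin J$ or some atom of $H^-$ is false under $J$. For $r_2$, $J \in \modc{r_2}$ means: if $\at \in J$ and $J \models B$ then some atom of $H^-$ is false under $J$. A one-line case split on whether $\at \in J$ shows these two conditions are literally the same: when $\at \notin J$ both are vacuously true, and when $\at \in J$ both reduce to ``$J \models B$ implies some atom of $H^-$ is false under $J$''. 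So $\modc{r_1} = \modc{r_2}$, and in particular the outer condition agrees on every $\an{I, J}$.

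Next I would handle the reduct (inner) condition, assuming $J$ is already a \C-model of both rules. For $r_1$: if some atom of $H^-$ is false under $J$, then $r_1^J = \varepsilon$ and the inner condition is automatically met. Otherwise every atom of $H^-$ is true under $J$; then, since $J$ is a \C-model of $r_1$, we must have $\at \notin J$, hence also $\at \notin I$ (as $I \subseteq J$); and $r_1^J$ is $H(r_1)^+ \lpif B^+$, i.e. $\bot \lpif \bigland \widehat{B^+}$ since the positive head is empty — so $I \in \modc{r_1^J}$ iff $I \not\models B^+$. For $r_2$: again if some atom of $H^-$ is false under $J$ then $r_2^J = \varepsilon$ and we are done; otherwise every atom of $H^-$ is true under $J$, so $r_2$'s negative-head condition is not triggered, $r_2$ has no negative body, and $r_2^J$ is $H(r_2)^+ \lpif B(r_2)^+$, i.e. $\bot \lpif \at \land \bigland \widehat{B^+}$ — so $I \in \modc{r_2^J}$ iff $\at \notin I$ or $I \not\models B^+$. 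But in this branch we already derived $\at \notin J \supseteq I$, so $\at \notin I$ automatically; thus the $r_2$ inner condition also holds unconditionally here. In the remaining branch (some atom of $H^-$ false under $J$) both reducts are $\varepsilon$. Matching up the branches, $\an{I,J} \in \modse{r_1} \iff \an{I,J} \in \modse{r_2}$.

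The only mildly delicate point — and the one I would be careful with — is the interplay in the ``all of $H^-$ true under $J$'' branch: for $r_1$ the fact $\at \notin J$ is forced by $J$ being a \C-model, whereas for $r_2$ the same fact is needed to collapse the extra conjunct $\at$ in the reduct's body; so the argument really does rely on first establishing $\modc{r_1} = \modc{r_2}$ before analysing the reducts. Once that ordering is respected, the rest is a routine truth-table bookkeeping over the two cases ($\at \in J$ or not, and all of $H^-$ true under $J$ or not), and the two sets of \SE-models are seen to be identical. \qed
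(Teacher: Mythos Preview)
Your overall plan---establish \C-equivalence first, then compare reducts---is essentially the paper's approach too, but the execution of the reduct step has real gaps.

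First, the claim that ``$r_2$ has no negative body'' is simply false: $B$ is an arbitrary set of literals, so $B(r_2)^- = B^-$ can be nonempty. You therefore never handle the sub-case where some atom of $B^-$ lies in $J$, which sends both reducts to $\varepsilon$.

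Second, your treatment of $r_1$ in the branch ``every atom of $H^-$ is true under $J$'' is internally inconsistent. You assert $\at \notin J$ (forced by $J$ being a \C-model) and then compute $r_1^J$ as $\bot \lpif B^+$. But $\at$ itself belongs to $H(r_1)^-$, so $\at \notin J$ already gives $r_1^J = \varepsilon$, not a constraint with body $B^+$. More importantly, the inference ``all of $H^-$ true under $J$ and $J$ a \C-model of $r_1$, hence $\at \notin J$'' is invalid in general: it only goes through when $J$ also satisfies $B$. If $J \not\models B$ (say some atom of $B^+$ is absent from $J$), then $J$ is vacuously a \C-model and $\at$ may well lie in $J$. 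In that scenario your stated inner conditions---``$I \not\models B^+$'' for $r_1$ versus ``always'' for $r_2$---do not coincide on their face, and you give no argument that they do.

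The paper avoids these pitfalls by splitting first on whether $\at \in J$. When $\at \notin J$, both inner conditions are immediate: $r_1^J = \varepsilon$, and $r_2^J$ has $\at$ in its positive body, which $I \subseteq J$ cannot satisfy. When $\at \in J$ and $J$ is a \C-model, the crux is that in the non-$\varepsilon$ sub-case, if $I$ satisfied $B^+$ then $J$ would satisfy the full body of $r_1$ while failing its entirely negative head---contradicting that $J$ is a \C-model of $r_1$. Hence $I \not\models B^+$, and $I$ is a \C-model of both reducts. Your argument is repairable along these lines, but as written the reduct analysis does not go through.
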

\begin{proof}
	See Appendix \ref{app:proofs}, page \pageref{proof:lemma:se:constraint}.
	\qed
\end{proof}

Armed with the above results, we can introduce the notion of a canonical rule.
Each such rule represents a different equivalence class on the set of all
rules induced by the \SE-model semantics. In other words, every rule is
\SE-equivalent to exactly one canonical rule. After the definition, we provide
constructive transformations which show that this is indeed the case. Note
that the definition can be derived directly from the Lemmas above:

\begin{definition}[Canonical Rule]
	We say a rule $r$ is \emph{canonical} if either it is $\varepsilon$, or
	the following conditions are satisfied:
	\begin{enumerate}
		\item The sets $H(r)^+ \cup H(r)^-$, $B(r)^+$ and $B(r)^-$ are pairwise
			disjoint.
		\item If $H(r)^+$ is empty, then $H(r)^-$ is also empty.
	\end{enumerate}
\end{definition}

This definition is closely related with the notion of a \emph{fundamental
rule} introduced in Definition 1 of \cite{Cabalar2007a}. There are two
differences between canonical and fundamental rules: (1) a fundamental rule
must satisfy condition 1. above, but need not satisfy condition 2.; (2) no
\SE-tautological rule is fundamental. As a consequence, fundamental rules do
not cover all rule-representable sets of \SE-interpretations, and two distinct
fundamental rules may still be \SE-equivalent. From the point of view of rule
equivalence classes induced by \SE-model semantics, there is one class that
contains no fundamental rule, and some classes contain more than one
fundamental rule. In the following we show that canonical rules overcome both
of these limitations of fundamental rules. In other words, every rule is
\SE-equivalent to exactly one canonical rule. To this end, we define
constructive transformations that directly show the mutual relations between
rule syntax and semantics.

The following transformation provides a direct way of constructing a
canonical rule that is \SE-equivalent to a given rule $r$.

\begin{definition}[Transformation into a Canonical Rule]
	Given a rule $r$, by $\secan{r}$ we denote a canonical rule constructed as
	follows: If any of the sets $H(r)^+ \cap B(r)^+$, $H(r)^- \cap B(r)^-$ and
	$B(r)^+ \cap B(r)^-$ is nonempty, then $\secan{r}$ is $\varepsilon$.
	Otherwise, $\secan{r}$ is of the form $H^+; \lpnot H^- \lpif B^+, \lpnot
	B^-.$ where
	\begin{itemize}
		\item $H^+ = H(r)^+ \setminus B(r)^-$.
		\item If $H^+$ is empty, then $H^- = \emptyset$ and $B^+ = B(r)^+ \cup
			H(r)^-$.
		\item If $H^+$ is nonempty, then $H^- = H(r)^- \setminus B(r)^+$ and $B^+
			= B(r)^+$.
		\item $B^- = B(r)^-$.
	\end{itemize}
\end{definition}

\noindent Correctness of the transformation follows directly from Lemmas
\ref{lemma:se:tautology} to \ref{lemma:se:constraint}.

\begin{theorem} \label{thm:se:canonical equivalence}
	Every rule $r$ is \SE-equivalent to the canonical rule $\secan{r}$.
\end{theorem}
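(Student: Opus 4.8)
The plan is to prove that $\modse{r} = \modse{\secan{r}}$ by a straightforward case analysis that mirrors exactly the construction of $\secan{r}$, invoking the three preceding Lemmas as the workhorses. The key observation is that $\secan{r}$ is obtained from $r$ by a finite sequence of \SE-model-preserving rewrites, each of which is licensed by one of Lemmas~\ref{lemma:se:tautology}--\ref{lemma:se:constraint}; since \SE-equivalence is transitive, chaining these rewrites yields the result.

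First I would split on whether one of the sets $H(r)^+ \cap B(r)^+$, $H(r)^- \cap B(r)^-$, $B(r)^+ \cap B(r)^-$ is nonempty. In that case $\secan{r} = \varepsilon$, and the corresponding form of Lemma~\ref{lemma:se:tautology} (first form for $H(r)^+ \cap B(r)^+$, second for $H(r)^- \cap B(r)^-$, third for $B(r)^+ \cap B(r)^-$) tells us $r$ is \SE-tautological, i.e. $\modse{r} = \seint = \modse{\varepsilon}$, using that $\varepsilon$ is \SE-tautological by definition. Second, assuming all three intersections are empty, I would transform $r$ in stages. Repeatedly applying Lemma~\ref{lemma:se:head repetition} to every literal $L \in B(r)^+ \cup \lpnot B(r)^-$ that also occurs (in negated-in-head form) in the head lets us delete from $H(r)$ exactly the atoms in $H(r)^+ \cap B(r)^-$ and the negative literals $\lpnot\at$ with $\at \in H(r)^- \cap B(r)^+$; this yields an \SE-equivalent rule $r'$ with head $H(r')^+ = H(r)^+ \setminus B(r)^-$ and $H(r')^- = H(r)^- \setminus B(r)^+$, body unchanged. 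Third, if $H(r')^+ = \emptyset$ but $H(r')^-$ is still nonempty, I would apply Lemma~\ref{lemma:se:constraint} repeatedly, once per atom in $H(r')^-$, to move the entire negative head into the positive body, arriving at a rule with empty negative head and positive body $B(r)^+ \cup H(r)^-$ — which is precisely $\secan{r}$. In the remaining sub-case ($H(r')^+ \neq \emptyset$), $r'$ already coincides with $\secan{r}$, so nothing more is needed.

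There is no deep obstacle here: the theorem is essentially a bookkeeping corollary of the Lemmas, as the text itself signals (``Correctness of the transformation follows directly from Lemmas~\ref{lemma:se:tautology} to \ref{lemma:se:constraint}''). The one point requiring mild care is verifying that after the head-cleanup in stage two the resulting rule really does match the definition of $\secan{r}$ in both sub-cases of how $B^+$ and $H^-$ are defined — in particular that when $H^+$ is empty the prescription sets $H^- = \emptyset$ and $B^+ = B(r)^+ \cup H(r)^-$, which is what the Lemma~\ref{lemma:se:constraint} step delivers, whereas when $H^+$ is nonempty one keeps $H^- = H(r)^- \setminus B(r)^+$ and $B^+ = B(r)^+$ untouched. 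A second minor subtlety is that $B^- = B(r)^-$ throughout and that the three disjointness conditions of a canonical rule are genuinely met by the output: conditions~1 and~2 hold by construction after the set-difference operations, so $\secan{r}$ is indeed canonical, which is needed for the statement to be meaningful (though not strictly for the \SE-equivalence claim itself).

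Finally I would note that the Lemmas are stated for rules written in the schematic forms with a distinguished atom/literal and residual sets $H$, $B$; applying them to concrete $r$ just requires matching $r$ against the pattern, with the residual sets absorbing everything else. Since each Lemma application removes one repeated atom (or one negative-head atom) and there are only finitely many atoms in $\lang$, the process terminates, and transitivity of \SE-equivalence closes the argument: $r$ is \SE-equivalent to $r'$ is \SE-equivalent to $\secan{r}$, hence $\modse{r} = \modse{\secan{r}}$.
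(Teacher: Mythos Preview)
Your proposal is correct and follows essentially the same approach as the paper's own proof: both split into the tautological case (handled by Lemma~\ref{lemma:se:tautology}), then in the principal case use Lemma~\ref{lemma:se:head repetition} iteratively to strip from the head the atoms shared with the opposite-polarity body, and finally invoke Lemma~\ref{lemma:se:constraint} to migrate the negative head into the positive body when $H(r)^+ \setminus B(r)^-$ is empty. Your extra remarks on termination, transitivity, and verifying canonicity of the output are sound bookkeeping that the paper leaves implicit.
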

\begin{proof}
	See Appendix \ref{app:proofs}, page \pageref{proof:thm:se:canonical
	equivalence}. \qed
\end{proof}

What remains to be proven is that no two different canonical rules are
\SE-equivalent. In the next Subsection we show how every canonical rule
can be reconstructed from the set of its \SE-models. As a consequence, no two
different canonical rules can have the same set of \SE-models.

\subsection{Reconstructing Rules}

\label{subsec:induced rules}

In order to reconstruct a rule $r$ from the set $\mcS$ of its \SE-models, we
need to understand how exactly each literal in the rule influences its models.
The following Lemma provides a useful characterisation of the set of
countermodels of a rule in terms of syntax:

\begin{lemma}[Different formulation of Theorem 4 in \cite{Cabalar2007a}]
	\label{lemma:se:model conditions}
	Let $r$ be a rule. An \SE-interpretation $\an{I, J}$ is not an \SE-model of
	$r$ if and only if the following conditions are satisfied:
	\begin{enumerate}
		\item $H(r)^- \cup B(r)^+ \subseteq J$ and $J \subseteq \lang \setminus
			B(r)^-$.

		\item Either $J \subseteq \lang \setminus H(r)^+$ or both $B(r)^+
			\subseteq I$ and $I \subseteq \lang \setminus H(r)^+$.
	\end{enumerate}
\end{lemma}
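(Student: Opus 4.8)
The plan is to prove the biconditional by unfolding the definitions of \C-model and \SE-model and then matching them up with conditions 1 and 2. Recall that $\an{I,J}$ is \emph{not} an \SE-model of $r$ iff either $J$ is not a \C-model of $r$, or $J$ is a \C-model of $r$ but $I$ is not a \C-model of $r^J$. I would first translate "$J$ is not a \C-model of $r$" directly via the formula $\overline{r}$: $J$ fails to satisfy $\overline{r}$ precisely when the body conjunction is true under $J$ and the head disjunction is false under $J$, i.e. when $B(r)^+ \subseteq J$, $B(r)^- \cap J = \emptyset$ (equivalently $J \subseteq \lang \setminus B(r)^-$), $H(r)^+ \cap J = \emptyset$ (equivalently $J \subseteq \lang \setminus H(r)^+$), and $H(r)^- \subseteq J$. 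Notice this already gives condition 1 together with the first disjunct of condition 2.

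Next I would handle the remaining case: $J$ is a \C-model of $r$, but $I$ is not a \C-model of $r^J$. For $I$ to fail $r^J$, the reduct $r^J$ must not be $\varepsilon$ (since $\varepsilon$ is a \C-tautology and every interpretation is its model), so by definition of the reduct we must have $H(r)^- \subseteq J$ and $B(r)^- \cap J = \emptyset$, and $r^J = H(r)^+ \lpif B(r)^+$. Then $I$ fails this positive rule exactly when $B(r)^+ \subseteq I$ and $H(r)^+ \cap I = \emptyset$. Combining: we get $H(r)^- \subseteq J$, $J \subseteq \lang \setminus B(r)^-$, $B(r)^+ \subseteq I \subseteq J$ (the last inclusion because $\an{I,J}$ is an \SE-interpretation), and $B(r)^+ \subseteq I$, $I \subseteq \lang \setminus H(r)^+$. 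Since $B(r)^+ \subseteq I \subseteq J$, condition 1 follows; and the pair "$B(r)^+ \subseteq I$ and $I \subseteq \lang \setminus H(r)^+$" is exactly the second disjunct of condition 2.

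The main remaining work is to verify that the \emph{union} of the two cases is equivalent to the conjunction of conditions 1 and 2, rather than something weaker. For the forward direction this is just the case analysis above. For the converse, I would assume conditions 1 and 2 and split on the disjunction in condition 2: if $J \subseteq \lang \setminus H(r)^+$, then together with condition 1 all four clauses of "$J$ not a \C-model of $r$" hold, so $\an{I,J}$ is not an \SE-model; if instead $B(r)^+ \subseteq I$ and $I \subseteq \lang \setminus H(r)^+$, then condition 1 gives $H(r)^- \subseteq J$ and $J \cap B(r)^- = \emptyset$, so the reduct is $H(r)^+ \lpif B(r)^+$, and the assumptions on $I$ show $I$ is not a \C-model of it, hence again $\an{I,J}$ is not an \SE-model. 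The one subtlety to watch is the interaction with the \SE-interpretation constraint $I \subseteq J$: in the second subcase one should check that $B(r)^+ \subseteq I$ is consistent with $B(r)^+ \subseteq J$ (it is, automatically, since $I \subseteq J$), so no extra hypothesis is needed. I expect this bookkeeping — making sure the two "not a model" cases exactly tile the region described by conditions 1–2 with no gap and no overlap problem — to be the only delicate point; everything else is a direct unfolding of definitions.
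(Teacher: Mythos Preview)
Your proposal is correct and follows essentially the same approach as the paper: both proofs unfold the definitions of \C-model and reduct and match them against conditions 1 and 2. The only cosmetic difference is that where the paper proves the direction ``not conditions $\Rightarrow$ \SE-model'' by a four-case analysis on which clause of the conditions fails, you prove the contrapositive ``not \SE-model $\Rightarrow$ conditions'' by a two-case analysis on which clause of the \SE-model definition fails; the remaining direction is argued identically in both.
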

\begin{proof}
	See Appendix \ref{app:proofs}, page \pageref{proof:lemma:se:model
	conditions}. \qed
\end{proof}

The first condition together with the first disjunct of the second condition
hold if and only if $J$ is not a \C-model of $r$. The second disjunct then
captures the case when $I$ is not a \C-model of $r^J$.

If we take a closer look at these conditions, we find that the presence of a
negative body atom in $J$ guarantees that the first condition is falsified, so
$\an{I, J}$ is a model of $r$, regardless of the content of $I$. Somewhat
similar is the situation with positive head atoms -- whenever such an atom is
present in $I$, it is also present in $J$, so the second condition is
falsified and $\an{I, J}$ is a model of $r$. Thus, if $\mcS$ is the set of
\SE-models of a rule $r$, then every atom $p \in B(r)^-$ satisfies
\begin{equation} \label{eq:lemma:se:negative body}
	p \in J \text{ implies } \an{I, J} \in \mcS \tag{$C_{B^-}$}
\end{equation}
and every atom $p \in H(r)^+$ satisfies
\begin{equation} \label{eq:lemma:se:positive head}
	p \in I \text{ implies } \an{I, J} \in \mcS \enspace. \tag{$C_{H^+}$}
\end{equation}
If we restrict ourselves to canonical rules different from $\varepsilon$, we
find that these conditions are not only necessary, but, when combined
properly, also sufficient to decide what atoms belong to the negative body and
positive head of the rule.

For the rest of this Subsection, we assume that $r$ is a canonical rule
different from $\varepsilon$ and $\mcS$ is the set of \SE-models of $r$.
Keeping in mind that every atom that satisfies condition
\eqref{eq:lemma:se:negative body} also satisfies condition
\eqref{eq:lemma:se:positive head} (because $I$ is a subset of $J$), and that
$B(r)^-$ is by definition disjoint from $H(r)^+$, we arrive at the following
results:

\begin{lemma} \label{lemma:se:negative body and positive head}
	An atom $\at$ belongs to $B(r)^-$ if and only if for all $\an{I, J} \in
	\seint$, the condition \eqref{eq:lemma:se:negative body} is satisfied.  An
	atom $\at$ belongs to $H(r)^+$ if and only if it does not belong to $B(r)^-$
	and for all $\an{I, J} \in \seint$, the condition
	\eqref{eq:lemma:se:positive head} is satisfied.
\end{lemma}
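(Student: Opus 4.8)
The plan is to prove both biconditionals by chaining together the necessity observations already made in the text with a converse argument built from Lemma \ref{lemma:se:model conditions}. For the first statement, necessity is exactly condition \eqref{eq:lemma:se:negative body}: if $\at \in B(r)^-$, then for every $\an{I,J}$ with $\at \in J$, the first condition of Lemma \ref{lemma:se:model conditions} fails (since $J \not\subseteq \lang \setminus B(r)^-$), so $\an{I,J} \in \mcS$. For the converse, suppose $\at \notin B(r)^-$; I must exhibit some $\an{I,J} \in \seint$ with $\at \in J$ and $\an{I,J} \notin \mcS$. The natural candidate is $J = \lang \setminus B(r)^-$ (which contains $\at$) together with $I = B(r)^+$. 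I need to check that this pair is a genuine \SE-interpretation, i.e.\ $B(r)^+ \subseteq \lang \setminus B(r)^-$, which holds because $r$ is canonical (condition 1 gives $B(r)^+ \cap B(r)^- = \emptyset$). Then both conditions of Lemma \ref{lemma:se:model conditions} must be verified: condition 1 needs $H(r)^- \cup B(r)^+ \subseteq J$, which follows from canonicity (the three sets $H(r)^+\cup H(r)^-$, $B(r)^+$, $B(r)^-$ are pairwise disjoint, so $H(r)^-$ and $B(r)^+$ are both disjoint from $B(r)^-$); condition 2 is satisfied via its second disjunct, since $I = B(r)^+ \subseteq I$ trivially and $I = B(r)^+ \subseteq \lang \setminus H(r)^+$ again by canonicity.

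For the second statement, necessity of ``$\at \notin B(r)^-$'' when $\at \in H(r)^+$ is immediate from canonicity (the head and negative body are disjoint), and necessity of condition \eqref{eq:lemma:se:positive head} for such an atom is the observation already recorded in the text: if $\at \in I$ then $\at \in J$ (as $I \subseteq J$), so $J \not\subseteq \lang \setminus H(r)^+$ and also $I \not\subseteq \lang \setminus H(r)^+$, hence the second condition of Lemma \ref{lemma:se:model conditions} fails and $\an{I,J} \in \mcS$. For the converse I assume $\at \notin B(r)^-$ and $\at \notin H(r)^+$, and must produce a countermodel $\an{I,J}$ with $\at \in I$. The candidate is $J = \lang \setminus B(r)^-$ and $I = B(r)^+ \cup \{\at\}$; since $\at \notin B(r)^-$ we have $\at \in J$, and $\at \in I$ by construction, and $I \subseteq J$ holds because $B(r)^+ \subseteq J$ (canonicity) and $\at \in J$. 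Condition 1 of Lemma \ref{lemma:se:model conditions} holds just as before. For condition 2 I use the second disjunct: $B(r)^+ \subseteq I$ is clear, and $I \subseteq \lang \setminus H(r)^+$ requires both $B(r)^+ \cap H(r)^+ = \emptyset$ (canonicity) and $\at \notin H(r)^+$ (our assumption). So $\an{I,J} \notin \mcS$, completing the contrapositive.

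The main obstacle, such as it is, lies in picking the witness interpretations correctly and making sure they are legitimate \SE-interpretations; every nontrivial containment needed for that relies on the canonicity of $r$ (pairwise disjointness of the head-union, positive body, and negative body), so the argument is essentially a careful bookkeeping exercise rather than a conceptual one. One subtlety worth a sentence of care: since $r$ is canonical and different from $\varepsilon$, none of the degenerate cases that could make these constructions collapse actually arises, and the fact that $I = B(r)^+$ (resp.\ $B(r)^+ \cup \{\at\}$) and $J = \lang \setminus B(r)^-$ are ``extremal'' choices is precisely what makes condition 1 of Lemma \ref{lemma:se:model conditions} come out true. I would present the two directions of each biconditional in the order (necessity, then sufficiency), reusing the $J = \lang \setminus B(r)^-$ construction in both halves so the writeup stays compact.
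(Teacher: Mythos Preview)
Your proof is correct. The overall strategy matches the paper's: for necessity you read off the relevant disjunct of Lemma~\ref{lemma:se:model conditions}, and for sufficiency you exhibit a specific countermodel $\an{I,J}$ containing $\at$ in the appropriate component, verifying the conditions of Lemma~\ref{lemma:se:model conditions} via the disjointness properties of canonical rules.

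The one genuine difference is the choice of witness. The paper (via an intermediate Corollary packaging several explicit countermodels) takes $J_0 = H(r)^- \cup B(r)^+$ and uses $\an{B(r)^+, J_0 \cup \{\at\}}$ for the first part and $\an{B(r)^+ \cup \{\at\}, J_0 \cup \{\at\}}$ for the second; that is, it works near the \emph{bottom} of the admissible range for $J$. You instead fix $J = \lang \setminus B(r)^-$, the \emph{top} of that range, and reuse it in both halves. Both choices are extremal in the convex sublattice $L_2$ of Theorem~\ref{thm:se:rule representable}, so condition~1 of Lemma~\ref{lemma:se:model conditions} is automatic either way; your version has the minor advantage that the same $J$ serves for both biconditionals without needing to adjoin $\at$, which does make the writeup slightly more uniform. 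Conversely, the paper's bottom-element witnesses are reused later for $B(r)^+$ and $H(r)^-$ (where the top-element witness is also needed), so its Corollary amortises across all four components. Either route is fine.
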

\begin{proof}
	See Appendix \ref{app:proofs}, page \pageref{proof:lemma:se:negative body
	and positive head}. \qed
\end{proof}

As can be seen from Lemma \ref{lemma:se:model conditions}, the role of
positive body and negative head atoms is dual to that of negative body and
positive head atoms. Intuitively, their absence in $J$, and sometimes also in
$I$, implies that $\an{I, J}$ is an \SE-model of $r$. It follows from the
first condition of Lemma \ref{lemma:se:model conditions} that if $p$ belongs
to $H(r)^- \cup B(r)^+$, then the following condition is satisfied:
\begin{equation}
	p \notin J \text{ implies } \an{I, J} \in \mcS \enspace.
		\label{eq:lemma:se:negative head}  \tag{$C_{H^-}$}
\end{equation}
Furthermore, the second condition in Lemma \ref{lemma:se:model conditions}
implies that every $p \in B(r)^+$ satisfies the following condition:
\begin{equation}
	p \notin I \text{ and } J \cap H(r)^+ \neq \emptyset 
		\text{ implies } \an{I, J} \in \mcS \enspace.
		\label{eq:lemma:se:positive body} \tag{$C_{B^+}$}
\end{equation}
These observations lead to the following results:

\begin{lemma} \label{lemma:se:positive body and negative head}
	An atom $\at$ belongs to $B(r)^+$ if and only if for all $\an{I, J} \in
	\seint$, the conditions \eqref{eq:lemma:se:negative head} and
	\eqref{eq:lemma:se:positive body} are satisfied.  An atom $\at$ belongs to
	$H(r)^-$ if and only if it does not belong to $B(r)^+$ and for all $\an{I,
	J} \in \seint$, the condition \eqref{eq:lemma:se:negative head} is
	satisfied.
\end{lemma}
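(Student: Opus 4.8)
The plan is to prove the two biconditionals by treating each of the four implications separately, using Lemma~\ref{lemma:se:model conditions} as the only tool. The forward directions will follow by pointing to the clause of that characterisation of countermodels that gets falsified; the backward directions will follow by exhibiting an explicit countermodel of $r$ assembled from $H(r)^-$, $B(r)^+$ and, when needed, a single atom drawn from $H(r)^+$. This mirrors the argument for Lemma~\ref{lemma:se:negative body and positive head}, now reading Lemma~\ref{lemma:se:model conditions} ``dually'': here it is the \emph{absence} of an atom from $J$ (via clause~1, resp.\ the negative head / positive body) or from $I$ (via the second disjunct of clause~2) that forces membership in $\mcS$. The workhorse will be the single \SE-interpretation $\an{I_0,J_0}$ with $I_0 = B(r)^+$ and $J_0 = H(r)^- \cup B(r)^+$: canonicity makes $I_0 \subseteq J_0$, and a quick check of Lemma~\ref{lemma:se:model conditions} (clause~1 holds by pairwise disjointness; the second disjunct of clause~2 holds because $B(r)^+ \subseteq I_0$ and $I_0 \cap H(r)^+ = B(r)^+ \cap H(r)^+ = \emptyset$) shows $\an{I_0,J_0} \notin \mcS$.

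For the forward directions: if $\at \in B(r)^+$, then whenever $\at \notin J$ clause~1 of Lemma~\ref{lemma:se:model conditions} fails (so $\an{I,J}\in\mcS$, which is \eqref{eq:lemma:se:negative head}); and whenever $\at \notin I$ with $J \cap H(r)^+ \neq \emptyset$ the first disjunct of clause~2 is excluded and the second fails since $B(r)^+ \not\subseteq I$ (so $\an{I,J}\in\mcS$, which is \eqref{eq:lemma:se:positive body}). If $\at \in H(r)^-$, then $\at \notin B(r)^+$ by pairwise disjointness, and the same reasoning as for \eqref{eq:lemma:se:negative head} above — now using $\at \in H(r)^- \subseteq H(r)^- \cup B(r)^+$ — gives \eqref{eq:lemma:se:negative head} for all $\an{I,J}$.

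For the backward directions I would argue by contradiction using the countermodels built from $\an{I_0,J_0}$. If $\at \notin B(r)^+$ and \eqref{eq:lemma:se:negative head} holds everywhere but $\at \notin H(r)^-$, then $\at \notin J_0$, so $\an{I_0,J_0}$ violates \eqref{eq:lemma:se:negative head} — contradiction; this gives the backward direction of the $H(r)^-$ claim. For the backward direction of the $B(r)^+$ claim, assume \eqref{eq:lemma:se:negative head} and \eqref{eq:lemma:se:positive body} hold everywhere but $\at \notin B(r)^+$, and split on $H(r)^+$. If $H(r)^+ \neq \emptyset$, pick $a \in H(r)^+$ and set $J_1 = J_0 \cup \{a\}$; canonicity gives $a \notin B(r)^-$, so $\an{I_0,J_1}$ is still a countermodel (clause~1 and the second disjunct of clause~2 are unaffected), and since $\at \notin I_0$ and $J_1 \cap H(r)^+ \neq \emptyset$ it violates \eqref{eq:lemma:se:positive body}. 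If $H(r)^+ = \emptyset$, then $H(r)^- = \emptyset$ by the second canonicity condition, so $J_0 = B(r)^+$ and hence $\at \notin J_0$; here \eqref{eq:lemma:se:positive body} is vacuously true, but $\an{I_0,J_0}$ violates \eqref{eq:lemma:se:negative head}.

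The main obstacle is exactly this case split in the $B(r)^+$ backward direction: when $H(r)^+$ is empty the premise of \eqref{eq:lemma:se:positive body} is never met, so it carries no information and $\at \in B(r)^+$ has to be extracted from \eqref{eq:lemma:se:negative head} alone — which in turn forces one first to collapse $J_0$ down to $B(r)^+$ using the canonicity condition $H(r)^+ = \emptyset \Rightarrow H(r)^- = \emptyset$. The remaining work — checking, against all clauses of Lemma~\ref{lemma:se:model conditions} and the pairwise disjointness, that $\an{I_0,J_0}$ and $\an{I_0,J_1}$ really are countermodels — is routine, but it is precisely where every part of the canonicity hypothesis is consumed (and, incidentally, it re-establishes that a canonical rule different from $\varepsilon$ is never \SE-tautological).
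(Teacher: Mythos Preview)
Your proof is correct and follows essentially the same route as the paper: the paper proves this lemma by combining two auxiliary lemmas whose arguments are exactly the ones you outline, reducing to Lemma~\ref{lemma:se:model conditions}, splitting the backward direction for $B(r)^+$ on whether $H(r)^+$ is empty, and invoking the second canonicity condition in the empty case. The only noteworthy difference is the choice of witness in the nonempty case: the paper uses the ``large'' countermodel $\an{B(r)^+,\,\lang \setminus B(r)^-}$ (so that $J$ automatically meets $H(r)^+$), whereas you take $J_1 = J_0 \cup \{a\}$ for a single $a \in H(r)^+$; both satisfy clause~1 and the second disjunct of clause~2 of Lemma~\ref{lemma:se:model conditions}, so either works.
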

\begin{proof}
	See Appendix \ref{app:proofs}, page \pageref{proof:lemma:se:positive body
	and negative head}. \qed
\end{proof}

Together, the two Lemmas above are sufficient to reconstruct a canonical rule
from its set of \SE-models. The following definition sums up these results by
introducing the notion of a rule induced by a set of \SE-interpretations:

\begin{definition}[Rule Induced by a Set of \SE-Interpretations]
	\label{def:se:models to rule}

	Let $\mcS$ be a set of \SE-interpretations.

	An atom $\at$ is called an \emph{\SNegBody{} atom} if every
	\SE-interpretation $\an{I, J}$ with $\at \in J$ belongs to $\mcS$.  An atom
	$\at$ is called an \emph{\SPosHead{} atom} if it is not an \SNegBody{} atom
	and every \SE-interpretation $\an{I, J}$ with $\at \in I$ belongs to $\mcS$.

	An atom $\at$ is called an \emph{\SPosBody{} atom} if every
	\SE-interpretation $\an{I, J}$ with $\at \notin J$ belongs to $\mcS$, and
	every \SE-interpretation $\an{I, J}$ with $\at \notin I$ and $J$ containing
	some \SPosHead{} atom also belongs to $\mcS$.  An atom $\at$ is called an
	\emph{\SNegHead{} atom} if it is not an \SPosBody{} atom and every
	\SE-interpretation $\an{I, J}$ with $\at \notin J$ belongs to $\mcS$.

	The sets of all \SNegBody{}, \SPosHead{}, \SPosBody{} and \SNegHead{} atoms
	are denoted by $B(\mcS)^-$, $H(\mcS)^+$, $B(\mcS)^+$ and $H(\mcS)^-$,
	respectively.  The \emph{rule induced by $\mcS$}, denoted by $\synt{\mcS}$,
	is defined as follows: If $\mcS = \seint$, then $\synt{\mcS}$ is
	$\varepsilon$; otherwise, $\synt{\mcS}$ is of the form
	\[
		H(\mcS)^+; \lpnot H(\mcS)^- \lpif B(\mcS)^+, \lpnot B(\mcS)^-.
	\]
\end{definition}

The main property of induced rules is that every canonical rule is induced by
its own set of \SE-models and can thus be ``reconstructed'' from its set of
\SE-models. This follows directly from Definition \ref{def:se:models to rule}
and Lemmas \ref{lemma:se:negative body and positive head} and
\ref{lemma:se:positive body and negative head}.

\begin{theorem} \label{thm:se:canonical from models}
	For every canonical rule $r$, $\synt{\modse{r}} = r$.
\end{theorem}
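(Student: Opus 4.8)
The plan is to prove the identity $\synt{\modse{r}} = r$ for a canonical rule $r$ by matching, component by component, the four literal-sets $B(r)^-$, $H(r)^+$, $B(r)^+$, $H(r)^-$ with the four sets $B(\mcS)^-$, $H(\mcS)^+$, $B(\mcS)^+$, $H(\mcS)^-$ extracted from $\mcS = \modse{r}$, and then checking the two boundary cases (the tautology case $\mcS = \seint$, and the alternation between ``$H^+$ empty'' and ``$H^+$ nonempty'' branches in the construction of $\synt{\mcS}$). First I would dispatch the degenerate case: if $r$ is canonical and $\modse{r} = \seint$, then $r$ must be $\varepsilon$ — indeed, by the definition of canonical rule, the only canonical rule that is \SE-tautological is $\varepsilon$ (a canonical rule different from $\varepsilon$ has pairwise disjoint components and, by Lemma \ref{lemma:se:model conditions}, possesses a countermodel, e.g. taking $J = H(r)^- \cup B(r)^+$ which lies in $\lang \setminus B(r)^-$ by disjointness) — and then $\synt{\seint} = \varepsilon = r$ directly from Definition \ref{def:se:models to rule}.

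For the main case, assume $r$ is canonical and $r \neq \varepsilon$, so $\mcS = \modse{r} \neq \seint$. The heart of the argument is already packaged in Lemmas \ref{lemma:se:negative body and positive head} and \ref{lemma:se:positive body and negative head}: Lemma \ref{lemma:se:negative body and positive head} says an atom is in $B(r)^-$ iff it satisfies condition \eqref{eq:lemma:se:negative body} for all \SE-interpretations, which is exactly the defining condition of an \SNegBody{} atom in Definition \ref{def:se:models to rule}; hence $B(r)^- = B(\mcS)^-$. The same Lemma gives that an atom is in $H(r)^+$ iff it is not in $B(r)^-$ and satisfies \eqref{eq:lemma:se:positive head} everywhere, which matches the definition of an \SPosHead{} atom once we substitute $B(r)^- = B(\mcS)^-$; hence $H(r)^+ = H(\mcS)^+$. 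Symmetrically, Lemma \ref{lemma:se:positive body and negative head} together with the definitions of \SPosBody{} and \SNegHead{} atoms — and the already-established equality $H(r)^+ = H(\mcS)^+$, which is needed because condition \eqref{eq:lemma:se:positive body} refers to $J$ containing an \SPosHead{} atom — yields $B(r)^+ = B(\mcS)^+$ and $H(r)^- = H(\mcS)^-$. So all four component-sets agree.

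It then remains to see that the \emph{shape} produced by $\synt{\mcS}$ coincides with $r$, i.e. that $\synt{\mcS}$ really is the rule $H(\mcS)^+; \lpnot H(\mcS)^- \lpif B(\mcS)^+, \lpnot B(\mcS)^-$ and not $\varepsilon$. This is immediate: since $\mcS \neq \seint$, Definition \ref{def:se:models to rule} takes the non-$\varepsilon$ branch, and we have just shown its four components equal those of $r$, so $\synt{\mcS} = r$. The one subtlety worth a sentence is the interplay with condition 2 of canonicity: if $H(r)^+ = \emptyset$ then canonicity forces $H(r)^- = \emptyset$, and we must confirm that the reconstruction also returns an empty negative head in that situation — this follows because $H(\mcS)^- = H(r)^- = \emptyset$, so no case-split in the definition is actually needed; the matching of components does all the work.

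The main obstacle I anticipate is not in the present theorem at all but sits inside Lemmas \ref{lemma:se:negative body and positive head} and \ref{lemma:se:positive body and negative head}, whose ``only if'' directions require showing that a canonical rule's components are \emph{forced} by $\mcS$ — e.g. that an atom $\at \notin B(r)^-$ genuinely fails condition \eqref{eq:lemma:se:negative body}, which needs producing a witnessing \SE-interpretation with $\at \in J$ that is a countermodel of $r$, and here canonicity (pairwise disjointness of the components) is exactly what guarantees such a witness exists via Lemma \ref{lemma:se:model conditions}. Granting those Lemmas, as the excerpt permits, the proof of Theorem \ref{thm:se:canonical from models} is just the bookkeeping of identifying the four equalities in the right order (negative body first, then positive head, then positive body and negative head) and observing the branch selection in Definition \ref{def:se:models to rule}.
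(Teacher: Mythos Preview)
Your proposal is correct and takes essentially the same approach as the paper: split on $r = \varepsilon$ versus $r \neq \varepsilon$, and in the principal case match the four literal sets $B(r)^-, H(r)^+, B(r)^+, H(r)^-$ to $B(\mcS)^-, H(\mcS)^+, B(\mcS)^+, H(\mcS)^-$ via Lemmas \ref{lemma:se:negative body and positive head} and \ref{lemma:se:positive body and negative head}, then read off Definition \ref{def:se:models to rule}. You are in fact slightly more explicit than the paper about why a canonical $r \neq \varepsilon$ cannot be \SE-tautological (the paper records this separately as part of Corollary \ref{cor:se:model conditions} but does not cite it in the proof).
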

\begin{proof}
	See Appendix \ref{app:proofs}, page \pageref{proof:thm:se:canonical from
	models}. \qed
\end{proof}

This result, together with Theorem \ref{thm:se:canonical equivalence}, has a
number of consequences. First, for any rule $r$, the canonical rule
$\secan{r}$ is induced by the set of \SE-models of $r$.

\begin{corollary}
	For every rule $r$, $\synt{\modse{r}} = \secan{r}$.
\end{corollary}
\begin{proof}
	Follows directly from Theorem \ref{thm:se:canonical equivalence} and Theorem
	\ref{thm:se:canonical from models}. \qed
\end{proof}

Furthermore, Theorem \ref{thm:se:canonical from models} directly implies that
for two different canonical rules $r_1, r_2$ we have $\synt{\modse{r_1}} =
r_1$ and $\synt{\modse{r_2}} = r_2$, so $\modse{r_1}$ and $\modse{r_2}$ must
differ.

\begin{corollary} \label{cor:se:canonical not equivalent}
	No two different canonical rules are \SE-equivalent.
\end{corollary}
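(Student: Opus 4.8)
The plan is to obtain this as an immediate consequence of Theorem \ref{thm:se:canonical from models}, which establishes that the operator $\synt{\cdot}$ is a left inverse of the map $r \mapsto \modse{r}$ when the latter is restricted to canonical rules. Concretely, I would argue by contraposition: suppose $r_1$ and $r_2$ are canonical rules that are \SE-equivalent, i.e.\ $\modse{r_1} = \modse{r_2}$. Applying $\synt{\cdot}$ to both sides yields $\synt{\modse{r_1}} = \synt{\modse{r_2}}$, and Theorem \ref{thm:se:canonical from models} identifies the left-hand side with $r_1$ and the right-hand side with $r_2$. Hence $r_1 = r_2$, so two \emph{distinct} canonical rules cannot share the same set of \SE-models, which is exactly the statement.

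The only point requiring any care is that $\synt{\cdot}$ is genuinely a function of the set $\mcS$ alone: this is guaranteed by Definition \ref{def:se:models to rule}, where the sets $B(\mcS)^-$, $H(\mcS)^+$, $B(\mcS)^+$, $H(\mcS)^-$ — and thus the induced rule, including the degenerate case $\synt{\seint} = \varepsilon$ — depend only on $\mcS$. Consequently equal \SE-model sets must yield the literally identical induced rule, and the contraposition step above is legitimate. There is no real obstacle here; all the substantive work was already carried out in Lemmas \ref{lemma:se:negative body and positive head} and \ref{lemma:se:positive body and negative head} and packaged into Theorem \ref{thm:se:canonical from models}.

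Finally, I would note the combined picture: together with Theorem \ref{thm:se:canonical equivalence}, which shows every rule is \SE-equivalent to some canonical rule, this corollary shows that every rule is \SE-equivalent to \emph{exactly one} canonical rule, so the canonical rules form a complete and irredundant set of representatives for the equivalence classes induced by the \SE-model semantics.
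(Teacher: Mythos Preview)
Your proposal is correct and is essentially identical to the paper's own argument: the paper also derives the corollary directly from Theorem \ref{thm:se:canonical from models} by observing that $\synt{\modse{r_1}} = r_1$ and $\synt{\modse{r_2}} = r_2$ force $\modse{r_1} \neq \modse{r_2}$ whenever $r_1 \neq r_2$, which is simply the contrapositive of what you wrote.
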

\begin{proof}
	Follows directly from the Theorem \ref{thm:se:canonical from models}. \qed
\end{proof}

Finally, the previous Corollary together with Theorem \ref{thm:se:canonical
equivalence} imply that for every rule there not only exists an \SE-equivalent
canonical rule, but this rule is also unique.

\begin{corollary}
	Every rule is \SE-equivalent to exactly one canonical rule.
\end{corollary}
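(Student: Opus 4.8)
The plan is to obtain the corollary as an immediate consequence of the two results already in hand: Theorem~\ref{thm:se:canonical equivalence}, which gives \emph{existence} of an \SE-equivalent canonical rule, and Corollary~\ref{cor:se:canonical not equivalent}, which gives \emph{uniqueness}. So the proof is really just an assembly of these two facts, and there is no genuine obstacle — the hard work has already been done in establishing Theorems~\ref{thm:se:canonical equivalence} and~\ref{thm:se:canonical from models}.

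Concretely, I would argue as follows. Fix an arbitrary rule $r$. By Theorem~\ref{thm:se:canonical equivalence}, the rule $\secan{r}$ is canonical and \SE-equivalent to $r$; this settles existence. For uniqueness, suppose $r_1$ and $r_2$ are both canonical rules \SE-equivalent to $r$. Then, by transitivity of \SE-equivalence (two rules are \SE-equivalent exactly when they have the same set of \SE-models, so this is just equality of sets), $r_1$ and $r_2$ are \SE-equivalent to each other. Corollary~\ref{cor:se:canonical not equivalent} says that no two \emph{different} canonical rules are \SE-equivalent, so $r_1 = r_2$. Hence the canonical rule \SE-equivalent to $r$ is unique.

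If one wanted to be even more self-contained and bypass Corollary~\ref{cor:se:canonical not equivalent}, the same conclusion follows directly from Theorem~\ref{thm:se:canonical from models}: if $r_1$ and $r_2$ are canonical and \SE-equivalent, then $\modse{r_1} = \modse{r_2}$, and applying $\synt{\cdot}$ to both sides and using $\synt{\modse{r_i}} = r_i$ yields $r_1 = r_2$. Either route works; I would use the shortest one, citing Corollary~\ref{cor:se:canonical not equivalent} since it has just been stated. The only thing to be careful about is making explicit that ``\SE-equivalent'' is an equivalence relation (in particular transitive), which is immediate from its definition as equality of \SE-model sets, so no separate lemma is needed.

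\begin{proof}
	Let $r$ be an arbitrary rule. By Theorem~\ref{thm:se:canonical equivalence},
	$\secan{r}$ is a canonical rule that is \SE-equivalent to $r$, which
	establishes existence. For uniqueness, suppose $r_1$ and $r_2$ are canonical
	rules, both \SE-equivalent to $r$. Since \SE-equivalence amounts to equality
	of the corresponding sets of \SE-models, it is transitive, so $r_1$ and
	$r_2$ are \SE-equivalent to each other. By Corollary
	\ref{cor:se:canonical not equivalent}, no two \emph{different} canonical
	rules are \SE-equivalent, hence $r_1 = r_2$. \qed
\end{proof}
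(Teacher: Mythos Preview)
Your proof is correct and follows essentially the same approach as the paper: existence from Theorem~\ref{thm:se:canonical equivalence} and uniqueness from Corollary~\ref{cor:se:canonical not equivalent}. The paper's proof is just the one-line ``Follows directly from Theorem~\ref{thm:se:canonical equivalence} and Corollary~\ref{cor:se:canonical not equivalent}''; you have merely spelled out the transitivity step, which is fine.
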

\begin{proof}
	Follows directly from Theorem \ref{thm:se:canonical equivalence} and
	Corollary \ref{cor:se:canonical not equivalent}. \qed
\end{proof}

\subsection{Sets of \SE-Interpretations Representable by a Rule}

\label{sec:se models of programs}

Naturally, not all sets of \SE-interpretations correspond to a single rule,
otherwise any program could be reduced to a single rule. The conditions under
which a set of \SE-interpretations is rule-representable are worth examining.

A set of \SE-models $\mcS$ of a program is always \emph{well-defined}, i.e.
whenever $\mcS$ contains $\an{I, J}$, it also contains $\an{J, J}$. Moreover,
for every well-defined set of \SE-interpretations $\mcS$ there exists a
program $\prP$ such that $\mcS = \modse{\prP}$ \cite{Delgrande2008}.

We offer two approaches to find a similar condition for the class of
rule-representable sets of \SE-interpretations. The first is based on induced
rules defined in the previous Subsection, while the second is formulated using
lattice theory and is a consequence of Lemma \ref{lemma:se:model conditions}.

The first characterisation follows from two properties of the $\synt{\cdot}$
transformation. First, it can be applied to any set of \SE-interpretations,
even those that are not rule-representable. Second, if $\synt{\mcS} =
r$, then it holds that $\modse{r}$ is a subset of $\mcS$.

\begin{lemma} \label{lemma:se:models induced least}
	The set of all \SE-models of a canonical rule $r$ is the least among
	all sets of \SE-interpretations $\mcS$ such that $\synt{\mcS} = r$.
\end{lemma}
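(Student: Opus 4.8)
The plan is to establish two things: that $\modse{r}$ is itself one of the sets $\mcS$ satisfying $\synt{\mcS} = r$, and that it is contained in every such set. The first is precisely Theorem~\ref{thm:se:canonical from models}, which gives $\synt{\modse{r}} = r$ for every canonical rule $r$. So all the work lies in the containment. If $r = \varepsilon$, then by Definition~\ref{def:se:models to rule} the equation $\synt{\mcS} = \varepsilon$ holds only for $\mcS = \seint$, and $\seint = \modse{\varepsilon}$, so the claim is immediate; hence we may assume that $r$ is canonical and different from $\varepsilon$, and that $\mcS$ is a set of \SE-interpretations with $\synt{\mcS} = r$, so that $r$ has the form $H(\mcS)^+; \lpnot H(\mcS)^- \lpif B(\mcS)^+, \lpnot B(\mcS)^-$.

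Now I would fix an arbitrary $\an{I, J} \in \modse{r}$ and show $\an{I, J} \in \mcS$. Since $\an{I, J}$ is an \SE-model of $r$, Lemma~\ref{lemma:se:model conditions} says that at least one of its two countermodel conditions must fail, and negating them yields an exhaustive case split. First, if condition~1 fails because $H(r)^- \cup B(r)^+ \not\subseteq J$, pick an atom $\at \in H(\mcS)^- \cup B(\mcS)^+$ with $\at \notin J$; by Definition~\ref{def:se:models to rule} every \SNegHead{} atom and every \SPosBody{} atom satisfies condition~\eqref{eq:lemma:se:negative head}, so $\an{I, J} \in \mcS$. Second, if condition~1 fails because $J \not\subseteq \lang \setminus B(r)^-$, pick $\at \in B(\mcS)^-$ with $\at \in J$; every \SNegBody{} atom satisfies condition~\eqref{eq:lemma:se:negative body}, so $\an{I, J} \in \mcS$. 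Third, if condition~2 fails, then $J \cap H(\mcS)^+ \neq \emptyset$ and, in addition, either $I \cap H(\mcS)^+ \neq \emptyset$ or $B(\mcS)^+ \not\subseteq I$: in the former case an \SPosHead{} atom lies in $I$ and condition~\eqref{eq:lemma:se:positive head} gives $\an{I, J} \in \mcS$; in the latter case some $\at \in B(\mcS)^+$ satisfies $\at \notin I$, and since $J$ contains some \SPosHead{} atom, condition~\eqref{eq:lemma:se:positive body} gives $\an{I, J} \in \mcS$. In every case $\an{I, J} \in \mcS$, so $\modse{r} \subseteq \mcS$, and together with Theorem~\ref{thm:se:canonical from models} this proves that $\modse{r}$ is the least such set.

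The main obstacle I anticipate is the bookkeeping of matching the negated countermodel conditions of Lemma~\ref{lemma:se:model conditions} with the exact defining properties \eqref{eq:lemma:se:negative body}--\eqref{eq:lemma:se:positive body} of the four kinds of atoms of $\synt{\mcS}$ (as already distilled in Lemmas~\ref{lemma:se:negative body and positive head} and~\ref{lemma:se:positive body and negative head}). The delicate point is the last case: to invoke condition~\eqref{eq:lemma:se:positive body} for a positive-body atom missing from $I$ one needs to have already secured that $J$ meets $H(\mcS)^+$, and this is supplied precisely by the fact that the failure of condition~2 of Lemma~\ref{lemma:se:model conditions} forces $J \cap H(r)^+ \neq \emptyset$ as its first conjunct. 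Once the case analysis is organised around the structure of that condition, the argument is routine.
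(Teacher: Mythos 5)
Your proof is correct and follows essentially the same route as the paper's: existence via Theorem~\ref{thm:se:canonical from models}, and minimality by negating the two countermodel conditions of Lemma~\ref{lemma:se:model conditions} and matching each resulting case to the defining property of the corresponding class of atoms of $\synt{\mcS}$. The only point you gloss over is the case $r = \varepsilon$: Definition~\ref{def:se:models to rule} alone does not immediately rule out a set $\mcS \neq \seint$ whose induced rule happens to be syntactically $\at_\varepsilon \lpif \at_\varepsilon$ via the ``otherwise'' branch --- the paper excludes this using the pairwise disjointness of $H(\mcS)^+ \cup H(\mcS)^-$, $B(\mcS)^+$ and $B(\mcS)^-$ (Lemma~\ref{lemma:se:induced sets disjoint}).
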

\begin{proof}
	See Appendix \ref{app:proofs}, page \pageref{proof:lemma:se:models induced
	least}. \qed
\end{proof}

\noindent Thus, to verify that $\mcS$ is rule-representable, it suffices to
check that all interpretations from $\mcS$ are models of $\synt{\mcS}$.

The second characterisation follows from Lemma \ref{lemma:se:model conditions}
which tells us that if $\mcS$ is rule-representable, then its complement
consists of \SE-interpretations $\an{I, J}$ following a certain pattern. Their
second component $J$ always contains a fixed set of atoms and is itself
contained in another fixed set of atoms. Their first component $I$ satisfies a
similar property, but only if a certain further condition is satisfied by $J$.
More formally, for the sets
\begin{align*}
	I^\bot &= B(r)^+, & I^\top &= \lang \setminus H(r)^+, &
	J^\bot &= H(r)^- \cup B(r)^+, & J^\top &= \lang \setminus B(r)^-,
\end{align*}
it holds that all \SE-interpretations from the complement of $\mcS$ are of the
form $\an{I, J}$ where $J^\bot \subseteq J \subseteq J^\top$ and either $J
\subseteq I^\top$ or $I^\bot \subseteq I \subseteq I^\top$. It turns out that
this also holds vice versa: if the complement of $\mcS$ satisfies the above
property, then $\mcS$ is rule-representable. Furthermore, to accentuate the
particular structure that arises, we can substitute the condition $J^\bot
\subseteq J \subseteq J^\top$ with saying that $J$ belongs to a convex
sublattice of $\pint$.\footnote{A sublattice $L$ of $L'$ is \emph{convex} if
$c \in L$ whenever $a, b \in L$ and $a \leq c \leq b$ holds in $L'$. For more
details see e.g. \cite{Davey1990}.} A similar substitution can be performed
for $I$, yielding:

\begin{theorem} \label{thm:se:rule representable}
	Let $\mcS$ be a set of \SE-interpretations. Then the following conditions
	are equivalent:
	\begin{enumerate}
		\item The set of \SE-interpretations $\mcS$ is rule-representable.
		\item All \SE-interpretations from $\mcS$ are \SE-models of $\synt{\mcS}$.
		\item There exist convex sublattices $L_1, L_2$ of $\an{\pint, \subseteq}$
			such that the complement of $\mcS$ relative to $\seint$ is equal to
			\[
			\Set{\an{I, J} \in \seint | I \in L_1 \land J \in L_2} \cup
				\Set{\an{I, J} \in \seint | J \in L_1 \cap L_2} \enspace.
			\]
	\end{enumerate}
\end{theorem}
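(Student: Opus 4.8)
The plan is to prove the two equivalences $(1)\Leftrightarrow(2)$ and $(1)\Leftrightarrow(3)$, which together yield the statement. The equivalence $(1)\Leftrightarrow(2)$ is essentially a repackaging of the reconstruction results of Subsection~\ref{subsec:induced rules}: rule-representable sets are exactly the fixed points of $\mcS\mapsto\modse{\synt{\mcS}}$. The equivalence $(1)\Leftrightarrow(3)$ is driven by the syntactic description of countermodels in Lemma~\ref{lemma:se:model conditions}. I expect the direction $(3)\Rightarrow(1)$ --- manufacturing a concrete rule out of two abstract convex sublattices --- to be the only part that requires genuine care.

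\emph{The equivalence $(1)\Leftrightarrow(2)$.} For $(1)\Rightarrow(2)$, if $\mcS=\modse{r}$ then by Theorem~\ref{thm:se:canonical equivalence} we may assume $r$ canonical, and Theorem~\ref{thm:se:canonical from models} gives $\synt{\mcS}=\synt{\modse{r}}=r$; hence $\modse{\synt{\mcS}}=\mcS$, so in particular every element of $\mcS$ is an \SE-model of $\synt{\mcS}$. For $(2)\Rightarrow(1)$, one always has $\modse{\synt{\mcS}}\subseteq\mcS$ --- this is the second of the two properties of $\synt{\cdot}$ recorded just before Lemma~\ref{lemma:se:models induced least} (equivalently, the ``least'' clause of that Lemma, applied to the canonical rule $\synt{\mcS}$ with the set $\mcS$) --- and combining this with the reverse inclusion supplied by $(2)$ yields $\mcS=\modse{\synt{\mcS}}$, so $\mcS$ is rule-representable.

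\emph{The direction $(1)\Rightarrow(3)$.} Take a canonical $r$ with $\mcS=\modse{r}$. If $r=\varepsilon$ then $\mcS=\seint$ has empty complement, and the singleton convex sublattices $L_{1}=\set{\lang}$, $L_{2}=\set{\emptyset}$ work, since $\lang\neq\emptyset$ forces both sets in clause~(3) to be empty. Otherwise set $L_{1}=\Set{I\in\pint | B(r)^{+}\subseteq I\subseteq\lang\setminus H(r)^{+}}$ and $L_{2}=\Set{J\in\pint | H(r)^{-}\cup B(r)^{+}\subseteq J\subseteq\lang\setminus B(r)^{-}}$; these are intervals, hence convex sublattices of $\an{\pint,\subseteq}$, and they are nonempty precisely because $r$ is canonical (the needed containments are $B(r)^{+}\cap H(r)^{+}=\emptyset$ and $(H(r)^{-}\cup B(r)^{+})\cap B(r)^{-}=\emptyset$). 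By Lemma~\ref{lemma:se:model conditions}, $\an{I,J}\notin\mcS$ iff $J\in L_{2}$ and either $J\subseteq\lang\setminus H(r)^{+}$ or $I\in L_{1}$. Since $B(r)^{+}\subseteq H(r)^{-}\cup B(r)^{+}$, every $J\in L_{2}$ already satisfies $B(r)^{+}\subseteq J$, so ``$J\in L_{2}$ and $J\subseteq\lang\setminus H(r)^{+}$'' is the same as ``$J\in L_{1}\cap L_{2}$''; distributing the disjunction then rewrites the complement of $\mcS$ in exactly the form demanded by clause~(3).

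\emph{The direction $(3)\Rightarrow(1)$, the crux.} Suppose such $L_{1},L_{2}$ exist. If either is empty, the union in clause~(3) is empty, so $\mcS=\seint$ and $r=\varepsilon$ works; otherwise, by finiteness of $\lang$, every nonempty convex sublattice of $\an{\pint,\subseteq}$ has a least and a greatest element and hence is an interval, so write $L_{1}=[a_{1},b_{1}]$ and $L_{2}=[a_{2},b_{2}]$ with $a_{i}=\bigcap L_{i}$, $b_{i}=\bigcup L_{i}$. Define the rule $r$ by reading off the correspondence from the previous step: $B(r)^{+}=a_{1}$, $H(r)^{+}=\lang\setminus b_{1}$, $H(r)^{-}=a_{2}\setminus a_{1}$, $B(r)^{-}=\lang\setminus b_{2}$, so that $H(r)^{-}\cup B(r)^{+}=a_{1}\cup a_{2}$. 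Applying Lemma~\ref{lemma:se:model conditions} to this $r$ and simplifying as in the previous step, the complement of $\modse{r}$ comes out as $\Set{\an{I,J}\in\seint | I\in L_{1}\land J\in L_{2}'}\cup\Set{\an{I,J}\in\seint | J\in L_{1}\cap L_{2}'}$, where $L_{2}'=[a_{1}\cup a_{2},b_{2}]$. It then remains to check that this coincides with the target set built from $L_{1}$ and $L_{2}$: the sets $L_{2}$ and $L_{2}'$ differ only on second components $J$ with $a_{1}\not\subseteq J$, but for such a $J$ no $I\in L_{1}$ satisfies $I\subseteq J$ (every $I\in L_1$ contains $a_{1}$) and $J\notin L_{1}$, so pairs with such a $J$ contribute nothing to either expression; an analogous observation handles the second disjunct. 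Hence $\modse{r}=\mcS$, so $\mcS$ is rule-representable. The main obstacle is precisely this last bookkeeping --- getting the interval representation of convex sublattices right, reconciling $L_{2}$ with $L_{2}'$, and dispatching the degenerate cases ($r=\varepsilon$ and empty $L_{i}$); everything else is a direct appeal to Lemma~\ref{lemma:se:model conditions} and the reconstruction results.
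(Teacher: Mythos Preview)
Your proposal is correct and follows essentially the same route as the paper: the equivalence $(1)\Leftrightarrow(2)$ via Theorems~\ref{thm:se:canonical equivalence} and~\ref{thm:se:canonical from models} together with Lemma~\ref{lemma:se:models induced least}, and the equivalence $(1)\Leftrightarrow(3)$ by reading off intervals from Lemma~\ref{lemma:se:model conditions} in one direction and manufacturing a rule from the lattice endpoints in the other. The only cosmetic differences are that the paper takes $H(r)^{-}=a_{2}$ rather than your $a_{2}\setminus a_{1}$ (yielding an \SE-equivalent rule, since $H(r)^{-}\cup B(r)^{+}$ is the same) and then checks the two inclusions between the complement of $\modse{r}$ and the target set directly, without introducing your auxiliary $L_{2}'$; your detour through $L_{2}'$ is a correct but slightly longer way to reach the same conclusion, and your explicit treatment of the degenerate cases ($r=\varepsilon$, empty $L_i$) is in fact a touch more careful than the paper's.
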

\begin{proof}
	See Appendix \ref{app:proofs}, page \pageref{proof:thm:se:rule
	representable}. \qed
\end{proof}

\section{Discussion}

\label{sec:discussion}

The presented results mainly serve to facilitate the transition back and forth
between a rule and the set of its \SE-models. They also make it possible to
identify when a given set of \SE-models is representable by a single rule. We
believe that in situations where information on literal dependencies,
expressed in individual rules, is essential for defining operations on logic
programs, the advantages of dealing with rules on the level of semantics
instead of on the level of syntax are significant. The semantic view takes
care of stripping away unnecessary details and since the introduced notions
and operators are defined in terms of semantic objects, it should be much
easier to introduce and prove their semantic properties.

These results can be used for example in the context of program updates to
define an update semantics based on the \emph{rule rejection principle}
\cite{Alferes2000} and operating on \emph{sets of sets of \SE-models}. Such a
semantics can serve as a bridge between syntax-based approaches to rule
updates, and the principles and semantic distance measures known from the area
of Belief Change. The next steps towards such a semantics involve a definition
of the notion of support for a literal by a set of \SE-models (of a rule).
Such a notion can then foster a better understanding of desirable properties
for semantic rule update operators.

On a different note, viewing a logic program as the \emph{set of sets of
\SE-models} of rules inside it leads naturally to the introduction of the
following new notion of program equivalence:

\begin{definition}[Strong Rule Equivalence]
	Programs $\prP_1, \prP_2$ are \emph{\SR-equivalent}, denoted by $\prP_1
	\equiv_{\mSR} \prP_2$, if
	\[
	\Set{ \modse{r} | r \in \prP_1 \cup \set{\varepsilon} }
		= \Set{ \modse{r} | r \in \prP_2 \cup \set{\varepsilon} } \enspace.
	\]
\end{definition}

Thus, two programs are \SR-equivalent if they contain the same rules, modulo
the \SE-model semantics. We add $\varepsilon$ to each of the two programs in
the definition so that presence or absence of tautological rules in a program
does not influence program equivalence. \SR-equivalence is stronger than
strong equivalence, in the following sense:

\begin{definition}[Strength of Program Equivalence]
	Let $\equiv_1, \equiv_2$ be equivalence relations on the set of all
	programs. We say that \emph{$\equiv_1$ is at least as strong as $\equiv_2$},
	denoted by $\equiv_1\,\succeq\,\equiv_2$, if $\prP_1 \equiv_1 \prP_2$
	implies $\prP_1 \equiv_2 \prP_2$ for all programs $\prP_1, \prP_2$. We say
	that \emph{$\equiv_1$ is stronger than $\equiv_2$}, denoted by
	$\equiv_1\,\succ\,\equiv_2$, if $\equiv_1\,\succeq\,\equiv_2$ but not
	$\equiv_2\,\succeq\,\equiv_1$.
\end{definition}

Thus, using the notation of the above definition, we can write $\equiv_{\mSR}
\,\succ\, \equiv_{\mSt}$, where $\equiv_{\mSt}$ denotes the relation of strong
equivalence. An example of programs that are strongly equivalent, but not
\SR-equivalent is $\prP = \set{p., q.}$ and $\prQ = \set{p., q \lpif p.}$,
which in many cases need to be distinguished from one another. We believe that
this notion of program equivalence is much more suitable for cases when the
dependency information contained in a program is of importance.

In certain cases, however, \SR-equivalence may be too strong. For instance, it
may be desirable to treat programs such as $\prP_1 = \set{p \lpif q.}$ and
$\prP_2 = \set{p \lpif q., p \lpif q, r.}$ in the same way because the extra
rule in $\prP_2$ is just a weakened version of the rule in $\prP_1$. For
instance, the notion of \emph{update equivalence} introduced in
\cite{Leite2003}, which is based on a particular approach to logic program
updates, considers programs $\prP_1$ and $\prP_2$ as equivalent because the
extra rule in $\prP_2$ cannot influence the result of any subsequent updates.
Since these programs are not \SR-equivalent, we also introduce the following
notion of program equivalence, which in terms of strength falls between strong
equivalence and \SR-equivalence.

\begin{definition}[Strong Minimal Rule Equivalence]
	Programs $\prP_1, \prP_2$ are \emph{\SMR-equivalent}, denoted by $\prP_1
	\equiv_{\mSMR} \prP_2$, if
	\[
		\min \Set{ \modse{r} | r \in \prP_1 \cup \set{\varepsilon} }
			= \min \Set{ \modse{r} | r \in \prP_2 \cup \set{\varepsilon} } \enspace,
	\]
	where $\min \mcS$ denotes the set of subset-minimal elements of $\mcS$.
\end{definition}

In order for programs to be \SMR-equivalent, they need not contain exactly the
same rules (modulo strong equivalence), it suffices if rules with subset-minimal
sets of \SE-models are the same (again, modulo strong equivalence). Certain
programs, such as $\prP_1$ and $\prP_2$ above, are not \SR-equivalent but they
are still \SMR-equivalent.

Related to this is the very strong notion of equivalence which was
introduced in \cite{Inoue2004}:

\begin{definition}[Strong Update Equivalence, c.f. Definition 4.1 in
	\cite{Inoue2004}]
	Two programs $\prP_1$, $\prP_1$ are \SU-equivalent, denoted by $\prP_1
	\equiv_{\mSU} \prP_2$, if for any programs $\prQ$, $\pr[R]$ it holds that
	the program $((\prP_1 \setminus \prQ) \cup \pr[R])$ has the same answer sets
	as the program $((\prP_2 \setminus \prQ) \cup \pr[R])$.
\end{definition}

Two programs are strongly update equivalent only under very strict conditions
-- it is shown in \cite{Inoue2004} that two programs are \SU-equivalent if and
only if their symmetric difference contains only \SE-tautological rules. This
means that programs such as $\prQ_1 = \set{\lpnot p.}$, $\prQ_2 = \set{\lpif
p.}$ and $\prQ_3 = \set{\lpnot p \lpif p.}$ are considered to be mutually
non-equivalent, even though the rules they contain are mutually
\SE-equivalent. This may be seen as too sensitive to rule syntax.

The following result formally establishes the relations between the discussed
notions of program equivalence:

\begin{theorem} \label{thm:se:equivalence comparison}
	\SU-equivalence is stronger than \SR-equivalence, which itself is stronger
	than \SMR-equivalence, which in turn is stronger than strong equivalence.
	That is,
	\[
		\equiv_{\mSU} \,\succ\, \equiv_{\mSR}
			\,\succ\, \equiv_{\mSMR} \,\succ\, \equiv_{\mSt} \enspace.
	\]
\end{theorem}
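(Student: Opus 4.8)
The plan is to establish the four claims in sequence: (1) $\equiv_{\mSU}\,\succeq\,\equiv_{\mSR}$, (2) $\equiv_{\mSR}\,\succeq\,\equiv_{\mSMR}$, (3) $\equiv_{\mSMR}\,\succeq\,\equiv_{\mSt}$, and in each case the failure of the converse, witnessed by explicit program pairs already mentioned in the discussion. For the three $\succeq$ directions I would argue by showing that a finer partition of the rule set (or its minimal part) determines a coarser one, and finally that \SMR-equivalence determines the \SE-models of the whole program.

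For claim (1), I would use the characterisation cited from \cite{Inoue2004}: $\prP_1 \equiv_{\mSU} \prP_2$ iff their symmetric difference consists only of \SE-tautological rules. Assume this holds. Every rule in $\prP_1 \setminus \prP_2$ is \SE-tautological, hence has $\modse{r} = \seint = \modse{\varepsilon}$ by the remarks in the Preliminaries; symmetrically for $\prP_2 \setminus \prP_1$. Therefore $\Set{\modse{r} | r \in \prP_1 \cup \set{\varepsilon}}$ and $\Set{\modse{r} | r \in \prP_2 \cup \set{\varepsilon}}$ coincide: both equal $\Set{\modse{r} | r \in \prP_1 \cap \prP_2} \cup \set{\seint}$. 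So $\prP_1 \equiv_{\mSR} \prP_2$. For strictness, the programs $\prQ_1 = \set{\lpnot p.}$ and $\prQ_2 = \set{\lpif p.}$ from the discussion are \SR-equivalent (the two rules are \SE-equivalent, so they induce the same set of \SE-models, and adding $\varepsilon$ on both sides makes the singleton sets coincide) but not \SU-equivalent, since $\set{\lpnot p.} \triangle \set{\lpif p.}$ contains the non-\SE-tautological rule $\lpnot p.$

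For claim (2), if $\Set{\modse{r} | r \in \prP_1 \cup \set{\varepsilon}} = \Set{\modse{r} | r \in \prP_2 \cup \set{\varepsilon}}$, then the subset-minimal elements of these two equal sets are the same, which is exactly $\prP_1 \equiv_{\mSMR} \prP_2$; so $\equiv_{\mSR}\,\succeq\,\equiv_{\mSMR}$. Strictness is witnessed by $\prP_1 = \set{p \lpif q.}$ and $\prP_2 = \set{p \lpif q., p \lpif q, r.}$: one checks via Lemma~\ref{lemma:se:model conditions} that $\modse{p \lpif q, r.} \supseteq \modse{p \lpif q.}$, so the minimal element of each collection is $\modse{p \lpif q.}$ and the two programs are \SMR-equivalent, but $\modse{p \lpif q, r.}$ is not an \SE-model set of any rule in $\prP_1 \cup \set{\varepsilon}$, so they are not \SR-equivalent. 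For claim (3), I would argue that $\modse{\prP} = \bigcap \Set{\modse{r} | r \in \prP} = \bigcap \Set{\modse{r} | r \in \prP \cup \set{\varepsilon}}$ (adding $\seint$ changes nothing), and that intersecting a family of sets equals intersecting only its subset-minimal members; hence $\min \Set{\modse{r} | r \in \prP \cup \set{\varepsilon}}$ determines $\modse{\prP}$, so \SMR-equivalence implies strong equivalence. Strictness is the standard example $\prP = \set{p., q.}$ versus $\prQ = \set{p., q \lpif p.}$: these are strongly equivalent but the rules $q.$ and $q \lpif p.$ have different, incomparable \SE-model sets, so the minimal collections differ.

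The main obstacle I anticipate is verifying the containment $\modse{p \lpif q.} \subseteq \modse{p \lpif q, r.}$ (and more generally the monotonicity ``adding a body literal enlarges the \SE-model set'') cleanly, and checking that $\modse{p \lpif q.}$ really is a subset-minimal element of the relevant collection rather than being refined by some other rule's model set; both reduce to a careful application of Lemma~\ref{lemma:se:model conditions}. A secondary subtlety is being careful, in claim~(3), that taking $\bigcap$ over a set of sets is insensitive to removing non-minimal members — this is immediate but should be stated. Everything else is a routine unwinding of the definitions together with the already-established correspondence between rules and their \SE-models.
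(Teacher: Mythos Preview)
Your approach matches the paper's proof: the same use of the characterisation of \SU-equivalence from \cite{Inoue2004}, the trivial observation that equal sets have equal minimal members, and the same argument that intersecting a family of sets equals intersecting only its subset-minimal members. Your strictness witnesses differ cosmetically (the paper uses $\set{p.}$ versus $\set{p., p \lpif q.}$ for step~(2) and $\set{p., q.}$ versus $\set{p \lpif q., q.}$ for step~(3)), but yours work equally well.

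One small correction in your strictness argument for claim~(3): you assert that $\modse{q.}$ and $\modse{q \lpif p.}$ are incomparable, but in fact $\modse{q.} \subsetneq \modse{q \lpif p.}$ (whenever $q \in I \subseteq J$, both implications defining an \SE-model of $q \lpif p$ hold trivially). Your conclusion survives, though the reasoning needs adjustment: the minimal collections still differ because $\modse{q.}$ is a minimal member on the $\prP$ side but is absent from $\Set{\modse{r} | r \in \prQ \cup \set{\varepsilon}}$, while $\modse{q \lpif p.}$ is minimal on the $\prQ$ side (it does not contain $\modse{p.}$, witness $\an{\set{p},\set{p}}$) and is distinct from $\modse{q.}$.
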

\begin{proof}
	See Appendix \ref{app:proofs}, page \pageref{proof:thm:se:equivalence
	comparison}. \qed
\end{proof}

The other notion of program equivalence introduced in \cite{Inoue2004},
\emph{strong update equivalence on common rules}, or \SUC-equivalence, is
incomparable in terms of strength to our new notions of equivalence. On the
one hand, \SR- and \SMR-equivalent programs such as $\set{\lpnot p.}$ and
$\set{\lpnot p., \lpif p.}$ are not \SUC-equivalent. On the other hand,
programs such as $\set{p., q \lpif p.}$ and $\set{q., p \lpif q.}$ are neither
\SR- nor \SMR-equivalent, but they are \SUC-equivalent. We believe that both
of these examples are more appropriately treated by the new notions of
equivalence.

The introduction of canonical rules, which form a set of representatives of
rule equivalence classes induced by \SE-models, also reveals the exact
expressivity of \SE-model semantics with respect to a single rule. From their
definition we can see that \SE-models are capable of distinguishing between
any pair of rules, except for (1) a pair of rules that only differ in the
number of repetitions of literals in their heads and bodies; (2) an integrity
constraint and a rule whose head only contains negative literals. We believe
that in the former case, there is little reason to distinguish between such
rules and so the transition from rules to their \SE-models has the positive
effect of stripping away of unnecessary details. However, the latter case has
more serious consequences. Although rules such as
\begin{alignat*}{2}
	\lpnot p &\lpif q. & \qquad \text{ and } \qquad &\lpif p, q.
\end{alignat*}
are usually considered to carry the same meaning, some existing work suggests
that they should be treated differently -- while the former rule gives a
reason for atom $p$ to become false whenever $q$ is true, the latter rule
simply states that the two atoms cannot be true at the same time, without
specifying a way to resolve this situation if it were to arise
\cite{Alferes2000,Alferes2005}. If we view a rule through the set of its
\SE-models, we cannot distinguish these two kinds of rules anymore. Whenever
this is important, either \emph{strong update equivalence} is used, which is
perhaps \emph{too} sensitive to the syntax of rules, or a new characterisation
of Answer-Set Programming needs to be discovered, namely one that is not based
on the logic of Here-and-There \cite{Lukasiewicz1941,Pearce1997}.

\section*{Acknowledgement}

We would like to thank Han The Anh, Matthias Knorr and the anonymous reviewers
for their comments that helped to improve the paper. Martin Slota is supported
by FCT scholarship SFRH~/~BD~/~38214~/~2007.

\bibliographystyle{unsrt}
\bibliography{bibliography}

\begin{thebibliography}{10}

\bibitem{Lifschitz2001}
Vladimir Lifschitz, David Pearce, and Agust{\'i}n Valverde.
\newblock Strongly equivalent logic programs.
\newblock {\em ACM Transactions on Computational Logic}, 2(4):526--541, 2001.

\bibitem{Inoue2004}
Katsumi Inoue and Chiaki Sakama.
\newblock Equivalence of logic programs under updates.
\newblock In Jos{\'e}~J{\'u}lio Alferes and Jo{\~a}o~Alexandre Leite, editors,
  {\em Proceedings of the 9th European Conference on Logics in Artificial
  Intelligence}, volume 3229 of {\em Lecture Notes in Computer Science}, pages
  174--186, Lisbon, Portugal, September 27-30 2004. Springer.

\bibitem{Damasio1997}
Carlos~Viegas Dam{\'a}sio, Lu\'{\i}s~Moniz Pereira, and Michael Schroeder.
\newblock {REVISE}: Logic programming and diagnosis.
\newblock In J{\"u}rgen Dix, Ulrich Furbach, and Anil Nerode, editors, {\em
  Proceedings of the 4th International Conference on Logic Programming and
  Nonmonotonic Reasoning}, volume 1265 of {\em Lecture Notes in Computer
  Science}, pages 354--363, Dagstuhl Castle, Germany, July 28-31 1997.
  Springer.

\bibitem{Alferes2000}
Jos{\'e}~J{\'u}lio Alferes, Jo{\~a}o~Alexandre Leite, Lu{\'i}s~Moniz Pereira,
  Halina Przymusinska, and Teodor~C. Przymusinski.
\newblock Dynamic updates of non-monotonic knowledge bases.
\newblock {\em The Journal of Logic Programming}, 45(1-3):43--70,
  September/October 2000.

\bibitem{Eiter2002}
Thomas Eiter, Michael Fink, Giuliana Sabbatini, and Hans Tompits.
\newblock On properties of update sequences based on causal rejection.
\newblock {\em Theory and Practice of Logic Programming}, 2(6):721--777, 2002.

\bibitem{Sakama2003}
Chiaki Sakama and Katsumi Inoue.
\newblock An abductive framework for computing knowledge base updates.
\newblock {\em Theory and Practice of Logic Programming}, 3(6):671–713, 2003.

\bibitem{Zhang2006}
Yan Zhang.
\newblock Logic program-based updates.
\newblock {\em ACM Transactions on Computational Logic}, 7(3):421--472, 2006.

\bibitem{Alferes2005}
Jos{\'e}~J{\'u}lio Alferes, Federico Banti, Antonio Brogi, and
  Jo{\~a}o~Alexandre Leite.
\newblock The refined extension principle for semantics of dynamic logic
  programming.
\newblock {\em Studia Logica}, 79(1):7--32, 2005.

\bibitem{Delgrande2007}
James~P. Delgrande, Torsten Schaub, and Hans Tompits.
\newblock A preference-based framework for updating logic programs.
\newblock In Chitta Baral, Gerhard Brewka, and John~S. Schlipf, editors, {\em
  Proceedings of the 9th International Conference on Logic Programming and
  Nonmonotonic Reasoning}, volume 4483 of {\em Lecture Notes in Computer
  Science}, pages 71--83, Tempe, AZ, USA, May 15-17 2007. Springer.

\bibitem{Delgrande2008}
James~P. Delgrande, Torsten Schaub, Hans Tompits, and Stefan Woltran.
\newblock Belief revision of logic programs under answer set semantics.
\newblock In Gerhard Brewka and J{\'e}r{\^o}me Lang, editors, {\em Proceedings
  of the 11th International Conference on Principles of Knowledge
  Representation and Reasoning}, pages 411--421, Sydney, Australia, September
  16-19 2008. AAAI Press.

\bibitem{Delgrande2010}
James~P. Delgrande.
\newblock A {P}rogram-{L}evel {A}pproach to {R}evising {L}ogic {P}rograms under
  the {A}nswer {S}et {S}emantics.
\newblock {\em Theory and Practice of Logic Programming, 26th Int'l. Conference
  on Logic Programming Special Issue}, 10(4-6):565--580, July 2010.

\bibitem{Gardenfors1992}
Peter G{\"a}rdenfors.
\newblock {\em Belief Revision}, chapter Belief Revision: An Introduction,
  pages 1--28.
\newblock Cambridge University Press, 1992.

\bibitem{Slota2010b}
Martin Slota and Jo{\~a}o Leite.
\newblock On semantic update operators for answer-set programs.
\newblock In Helder Coelho, Rudi Studer, and Michael Wooldridge, editors, {\em
  Proceedings of the 19th European Conference on Artificial Intelligence},
  volume 215 of {\em Frontiers in Artificial Intelligence and Applications},
  pages 957--962, Lisbon, Portugal, August 16-20 2010. IOS Press.

\bibitem{Apt1988}
Krzysztof~R. Apt, Howard~A. Blair, and Adrian Walker.
\newblock Towards a theory of declarative knowledge.
\newblock In {\em Foundations of Deductive Databases and Logic Programming},
  pages 89--148. Morgan Kaufmann, 1988.

\bibitem{Dix1995a}
J{\"u}rgen Dix.
\newblock A classification theory of semantics of normal logic programs: {II}.
  {W}eak properties.
\newblock {\em Fundamenta Informaticae}, 22(3):257--288, 1995.

\bibitem{Lukasiewicz1941}
Jan {\L}ukasiewicz.
\newblock Die {L}ogik und das {G}rundlagenproblem.
\newblock In {\em {L}es Entretiens de Z{\"u}rich sue les Fondements et la
  m{\'e}thode des sciences math{\'e}matiques 1938}, pages 82--100. Z{\"u}rich,
  1941.

\bibitem{Pearce1997}
David Pearce.
\newblock A new logical characterisation of stable models and answer sets.
\newblock In J{\"u}rgen Dix, Lu\'{\i}s~Moniz Pereira, and Teodor~C.
  Przymusinski, editors, {\em Proceedings of the 6th Workshop on Non-Monotonic
  Extensions of Logic Programming}, volume 1216 of {\em Lecture Notes in
  Computer Science}, pages 57--70, Bad Honnef, Germany, September 5-6 1997.
  Springer.

\bibitem{Turner2003}
Hudson Turner.
\newblock Strong equivalence made easy: nested expressions and weight
  constraints.
\newblock {\em Theory and Practice of Logic Programming}, 3(4-5):609--622,
  2003.

\bibitem{Gelfond1988}
Michael Gelfond and Vladimir Lifschitz.
\newblock The stable model semantics for logic programming.
\newblock In Robert~A. Kowalski and Kenneth~A. Bowen, editors, {\em Proceedings
  of the 5th International Conference and Symposium on Logic Programming},
  pages 1070--1080, Seattle, Washington, August 15-19 1988. MIT Press.

\bibitem{Inoue1998}
Katsumi Inoue and Chiaki Sakama.
\newblock Negation as failure in the head.
\newblock {\em Journal of Logic Programming}, 35(1):39--78, 1998.

\bibitem{Cabalar2007a}
Pedro Cabalar, David Pearce, and Agust{\'i}n Valverde.
\newblock Minimal logic programs.
\newblock In Ver{\'o}nica Dahl and Ilkka Niemel{\"a}, editors, {\em Proceedings
  of the 23rd International Conference on Logic Programming (ICLP 2007)},
  volume 4670 of {\em Lecture Notes in Computer Science}, pages 104--118,
  Porto, Portugal, September 8-13 2007. Springer.

\bibitem{Davey1990}
Brian~A. Davey and Hilary~A. Priestley.
\newblock {\em Introduction to Lattices and Order}.
\newblock Cambridge University Press, 1990.

\bibitem{Leite2003}
Jo{\~a}o~Alexandre Leite.
\newblock {\em Evolving Knowledge Bases}, volume~81 of {\em Frontiers of
  Artificial Intelligence and Applications, xviii + 307 p. Hardcover}.
\newblock IOS Press, 2003.

\end{thebibliography}

\begin{extended}

\newpage

\appendix

\section{Proofs} \label{app:proofs}

\begin{lemma*}{lemma:se:tautology}
	Let $H$ and $B$ be sets of literals and $\at$ be an atom. Then a rule is
	\SE-tautological if it takes any of the following forms:
	\begin{align*}
		\at; H &\lpif \at, B. &
		H; \lpnot \at &\lpif B, \lpnot \at. &
		H \lpif B, \at, \lpnot \at.
	\end{align*}
\end{lemma*}
\begin{proof}
	\label{proof:lemma:se:tautology}

	First assume that rule $r$ is of the first form. We need to show that any
	\SE-interpretation is an \SE-model of $r$. Suppose $\an{I, J}$ is some
	\SE-interpretation. Rule $r$ is \C-tautological, so $J$ is a \C-model of
	$r$. Furthermore, $r^J$ is either $\varepsilon$, or it inherits $\at$ in
	both its head and body from $r$. In any case, $r$ is \C-tautological, so $I$
	is a \C-model of $r^J$. Consequently, $\an{I, J}$ is an \SE-model of $r$.

	Now suppose $r$ is of the second form. As before, given an
	\SE-interpretation $\an{I, J}$, we see that $J$ is a \C-model of $r$ because
	$r$ is \C-tautological. Furthermore, $r^J$ will necessarily end up being
	equal to $\varepsilon$ because of the atom $\at$ common to $H(r)^-$ and
	$B(r)^-$, regardless of how $J$ interprets $\at$. So $I$ is a \C-model of
	$r^J$, and, consequently, $\an{I, J}$ is an \SE-model of $r$.

	Finally, suppose $r$ takes the third form and take an \SE-interpretation
	$\an{I, J}$. Rule $r$ can again easily be verified to be \C-tautological, so
	$J$ is a \C-model of $r$. If all atoms from $B(r)^-$ are false under $J$,
	then $r^J$ contains the atom $\at$ in its body that is false under $J$, thus
	also false under $I$ since $I$ is a subset of $J$.  Consequently, $I$ is a
	\C-model of $r^J$. On the other hand, if at least one of atoms from $B(r)^-$
	is true under $J$, then $r^J$ is equal to $\varepsilon$, so again, $I$ is a
	\C-model of $r^J$. Consequently, $\an{I, J}$ is an \SE-model of $r$. \qed
\end{proof}

\begin{lemma} \label{lemma:se:positive head repetition}
	Let $H$ and $B$ be sets of literals and $\at$ be an atom. Then rules of the
	following forms are \SE-equivalent:
	\begin{align*}
		\at; H &\lpif B, \lpnot \at. & H &\lpif B, \lpnot \at.
	\end{align*}
\end{lemma}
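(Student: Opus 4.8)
The plan is to show the two rules have the same \SE-models by a direct argument using the definition of the reduct and \C-models. Write $r$ for the rule $\at; H \lpif B, \lpnot \at.$ and $r'$ for the rule $H \lpif B, \lpnot \at.$ Since the body of both rules literally coincides, they have the same negative body, so for any interpretation $J$ the two conditions that force the reduct to be $\varepsilon$ (some atom of the negative head false under $J$, or some atom of the negative body true under $J$) are triggered simultaneously for $r$ and $r'$ in every respect except possibly the negative-head clause. The key observation is that $\at \in B(r)^- = B(r')^-$, so whenever $\at \in J$ both reducts collapse to $\varepsilon$; and whenever $\at \notin J$, the atom $\at$ added to the head of $r$ is false under $J$ anyway, so it cannot help $J$ satisfy $\overline{r}$ but it also cannot be needed.

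Concretely, I would first check \C-equivalence: $J$ is a \C-model of $r$ iff $J$ is a \C-model of $r'$. This is because $\overline{r}$ and $\overline{r'}$ differ only in the extra disjunct $\at$ in the consequent of $r$, but the antecedent of both contains $\lnot \at$; hence if $\at \in J$ the antecedent is false and both formulas are satisfied, while if $\at \notin J$ the extra disjunct $\at$ is false under $J$ and contributes nothing, so the two formulas have the same truth value under $J$. Next I would compare the reducts. If $\at \in J$, then $\at \in B(r)^-$ is true under $J$, so $r^J = r'^J = \varepsilon$, and trivially $I$ is a \C-model of one iff of the other. If $\at \notin J$ and no other atom of $B(r)^- \setminus \set{\at}$ is true under $J$ and no atom of the negative head is false under $J$ (for whichever of $r, r'$ we are considering — note $H(r)^- = H^-$ and $H(r')^- = H^-$ coincide as well, after accounting for the extra $\at$ in $H(r)^+$ which does not affect the negative-head test), then $r^J$ is $\at; H^+ \lpif B^+$ while $r'^J$ is $H^+ \lpif B^+$; but since $\at \notin J$ and $I \subseteq J$, we have $\at \notin I$, so the extra disjunct $\at$ in the head of $r^J$ is false under $I$ and $I$ is a \C-model of $r^J$ iff it is a \C-model of $r'^J$. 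In all cases $\an{I,J} \in \modse{r}$ iff $\an{I,J} \in \modse{r'}$.

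I do not expect a genuine obstacle here; the only point requiring care is the bookkeeping of which literals land in which component of $r$ versus $r'$ — in particular making sure that adding $\at$ to $H(r)^+$ does not change $H(r)^-$ or either body, and that the negative-head test in the reduct definition is unaffected. A cleaner alternative, which I would mention, is to derive this Lemma as an immediate consequence of Lemma~\ref{lemma:se:tautology} (third form) together with Lemma~\ref{lemma:se:head repetition}: the rule $\at; H \lpif B, \lpnot \at.$ has $\at$ in its positive head and $\at$ in its negative body, and one can first rewrite using the fact that $\lpnot\at$ in the body lets us drop $\at$ from the positive head — but that move is exactly what this Lemma asserts, so for a self-contained proof the direct case analysis above is the way to go.
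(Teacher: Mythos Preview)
Your proposal is correct and follows essentially the same approach as the paper: establish \C-equivalence of the two rules, then argue that the reducts either both collapse to $\varepsilon$ or differ only by the extra head atom $\at$, which cannot be in $I$ since $\at\notin J$ and $I\subseteq J$. The only cosmetic difference is the case split---you branch on whether $\at\in J$, whereas the paper branches on whether $J$ satisfies the (common) body---but the substance is identical; just make sure in a written-up version you explicitly dispatch the sub-case $\at\notin J$ where the reducts still collapse to $\varepsilon$ (via some other negative-body atom or a negative-head atom), which you currently cover only by the remark in your first paragraph.
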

\begin{proof}
	Let the first rule be denoted by $r_1$ and the second by $r_2$ and suppose
	$\an{I, J}$ is an \SE-interpretation. We will show that $\an{I, J}$ is an
	\SE-model of $r_1$ if and only if it is an \SE-model of $r_2$. We can easily
	see that rules $r_1$, $r_2$ are \C-equivalent. So $J$ is either not a
	\C-model of any of them or it is a \C-model of both of them. In the former
	case, $\an{I, J}$ is not an \SE-model of any of the two rules and we are
	finished. In the latter case, we need to distinguish two cases:
	\begin{enumerate}
		\renewcommand{\labelenumi}{\alph{enumi})}

		\item If $J$ is a not a model of the bodies of $r_1$ and $r_2$ (the bodies
			are identical), then either $r_1^J$ and $r_2^J$ are equal to
			$\varepsilon$, or they contain an atom in their bodies that is false
			under $J$, thus also false under $I$ because $I$ is a subset of $J$. In
			any case, $I$ is a \C-model of both $r_1^J$ and $r_2^J$, so $\an{I, J}$
			is an \SE-model of both $r_1$ and $r_2$.

		\item If $J$ is a model of the bodies of $r_1$ and $r_2$, then $\at$ is
			false under $J$, and, since $J$ is a \C-model of $r_1$, some literal
			from $H$ must be true under $J$. Consequently, either both $r_1^J$ and
			$r_2^J$ are equal to $\varepsilon$ and $I$ is a \C-model of both of
			them, or $r_1^J$ and $r_2^J$ only differ in the single extra atom $\at$
			that $r_1^J$ has in the head. However, since $I$ is a subset of $J$ and
			$\at$ is false under $J$, $\at$ cannot be true under $I$, so $I$ is
			either not a \C-model of any of $r_1^J$, $r_2^J$, or it is a \C-model of
			both of them. In any case, $\an{I, J}$ is a \SE-model of $r_1$ if and
			only if it is an \SE-model of $r_2$.
	\end{enumerate}
	Thus, we have proven that every \SE-interpretation is an \SE-model of $r_1$
	if and only if it is an \SE-model of $r_2$. In other words, $r_1$ and $r_2$
	are \SE-equivalent. \qed
\end{proof}

\begin{lemma} \label{lemma:se:negative head repetition}
	Let $H$ and $B$ be sets of literals and $\at$ be an atom. Then rules of the
	following forms are \SE-equivalent:
	\begin{align*}
		H; \lpnot \at &\lpif \at, B. & H &\lpif \at, B.
	\end{align*}
\end{lemma}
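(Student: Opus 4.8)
This is exactly the special case $L = \at$ of Lemma~\ref{lemma:se:head repetition}, the companion of the case $L = \lpnot\at$ treated in Lemma~\ref{lemma:se:positive head repetition}; since those two cases are what establishes Lemma~\ref{lemma:se:head repetition} in the first place, the plan is to argue it directly, mirroring the proof of Lemma~\ref{lemma:se:positive head repetition} with the positive and negative occurrences of $\at$ interchanged. Write $r_1$ for the rule $H; \lpnot \at \lpif \at, B.$ and $r_2$ for the rule $H \lpif \at, B.$, fix an arbitrary \SE-interpretation $\an{I, J}$, and show $\an{I, J} \in \modse{r_1}$ iff $\an{I, J} \in \modse{r_2}$.

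First I would observe that $r_1$ and $r_2$ are \C-equivalent: the body of each rule contains $\at$, so whenever the body is satisfied the extra head disjunct $\lnot\at$ of $r_1$ is false and contributes nothing, whence $\overline{r_1}$ and $\overline{r_2}$ have the same models. Therefore $J$ is a \C-model of both rules or of neither; in the latter case $\an{I, J}$ is an \SE-model of neither and there is nothing more to show, so assume $J$ is a \C-model of both. It then remains to compare the reducts $r_1^J$ and $r_2^J$ and verify that $I$ is a \C-model of one exactly when it is a \C-model of the other. Here it is worth noting that the positive heads, positive bodies and negative bodies of $r_1$ and $r_2$ coincide; the sole discrepancy is that $\at$ lies in the negative head of $r_1$ but not of $r_2$.

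I would then split on whether $\at \in J$. If $\at \in J$, the extra negative-head atom $\at$ of $r_1$ is true under $J$, so it never triggers the collapse of $r_1^J$ to $\varepsilon$; consequently $r_1^J = \varepsilon$ under exactly the same circumstances as $r_2^J = \varepsilon$, and whenever neither reduct is $\varepsilon$ they coincide (being built from the identical positive parts of $r_1$ and $r_2$), so $I$ is a \C-model of $r_1^J$ iff of $r_2^J$. If instead $\at \notin J$, then $\at$ is a negative-head atom of $r_1$ that is false under $J$, hence $r_1^J = \varepsilon$ and $I$ satisfies it trivially; as for $r_2$, either $r_2^J = \varepsilon$, in which case $I$ again satisfies it trivially, or $r_2^J$ is the positive rule $H(r_2)^+ \lpif B(r_2)^+$ whose body contains $\at$, and since $I \subseteq J$ and $\at \notin J$ we have $\at \notin I$, so the body of $r_2^J$ is falsified by $I$ and $I$ is a \C-model of $r_2^J$ all the same. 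In both sub-cases $\an{I, J}$ is an \SE-model of both rules.

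Combining the cases, $\an{I, J} \in \modse{r_1}$ iff $\an{I, J} \in \modse{r_2}$ for every \SE-interpretation $\an{I, J}$, i.e.\ $r_1$ and $r_2$ are \SE-equivalent, which is what was to be proved. There is no real obstacle; the only point demanding a little care --- and where the argument genuinely differs from that of Lemma~\ref{lemma:se:positive head repetition}, whose asymmetry sits purely in the head --- is the subcase $\at \notin J$, in which the asymmetric negative-head atom collapses $r_1^J$ straight to $\varepsilon$ while $r_2^J$ need not be $\varepsilon$, forcing one to argue separately (via the body atom $\at$) that $r_2^J$ is still satisfied by $I$.
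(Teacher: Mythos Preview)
Your proof is correct and follows essentially the same approach as the paper's: both split on whether $\at \in J$, observe that in the case $\at \in J$ the reducts $r_1^J$ and $r_2^J$ coincide, and in the case $\at \notin J$ argue that $r_1^J = \varepsilon$ while $r_2^J$ is either $\varepsilon$ or has $\at$ in its body and hence is satisfied by any $I \subseteq J$. The only cosmetic difference is that you establish \C-equivalence up front and then case-split, whereas the paper case-splits on $\at \in J$ immediately and treats \C-equivalence inside the $\at \in J$ branch.
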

\begin{proof}
	\label{proof:lemma:se:negative head repetition}

	Let the first rule be denoted by $r_1$ and the second by $r_2$. Suppose
	$\an{I, J}$ is some \SE-interpretation. If $\at$ is false under $J$, then
	$J$ is a \C-model of both $r_1$ and $r_2$. Furthermore, $r_1^J$ is equal to
	$\varepsilon$ and $r_2^J$ is either equal to $\varepsilon$ or its body
	contains $\at$, and so is not true under $I$. In any case, $I$ is a \C-model
	of both $r_1^J$ and $r_2^J$, so $\an{I, J}$ is an \SE-model of both $r_1$
	and $r_2$.

	On the other hand, if $\at$ is true under $J$, then $J$ is a \C-model of
	$r_1$ if and only if it is a \C-model of $r_2$ because the extra literal
	$\lpnot a$ in the head of $r_1$ cannot be satisfied. Also, $r_1^J$ is
	identical to $r_2^J$, so $I$ is a \C-model of $r_1^J$ if and only if it is a
	\C-model of $r_2^J$.  Consequently, $\an{I, J}$ is an \SE-model of $r_1$ if
	and only if it is an \SE-model of $r_2$.

	Thus, we have proven that every \SE-interpretation is an \SE-model of $r_1$
	if and only if it is an \SE-model of $r_2$. In other words, $r_1$ and $r_2$
	are \SE-equivalent. \qed
\end{proof}

\begin{lemma*}{lemma:se:head repetition}
	Let $H$ and $B$ be sets of literals and $L$ be a literal. Then rules of the
	following forms are \SE-equivalent:
	\begin{align}
		H; \lpnot L &\lpif L, B. & H &\lpif L, B.
	\end{align}
\end{lemma*}
\begin{proof}
	\label{proof:lemma:se:head repetition}
	Follows from Lemmas \ref{lemma:se:positive head repetition} and
	\ref{lemma:se:negative head repetition}. \qed
\end{proof}

\begin{lemma*}{lemma:se:constraint}
	Let $H^-$ be a set of negative literals, $B$ be a set of literals and $\at$
	be an atom. Then rules of the following forms are \SE-equivalent:
	\begin{align*}
		\lpnot \at; H^- &\lpif B. & H^- &\lpif \at, B.
	\end{align*}
\end{lemma*}
\begin{proof}
	\label{proof:lemma:se:constraint}

	Let the first rule be denoted by $r_1$ and the second by $r_2$. Suppose
	$\an{I, J}$ is some \SE-interpretation. If $\at$ is false under $J$, then
	$J$ is a \C-model of both $r_1$ and $r_2$. Furthermore, $r_1^J$ is equal to
	$\varepsilon$ and the body of $r_2^J$ cannot be satisfied by $I$ because $I$
	is a subset of $J$ and $\at$ is not in $J$. Thus, $I$ is a \C-model of both
	$r_1^J$ and $r_2^J$, and $\an{I, J}$ is an \SE-model of both $r_1$ and
	$r_2$.

	On the other hand, if $\at$ is true under $J$, then we need to consider two
	cases:
	\begin{enumerate}
		\renewcommand{\labelenumi}{\alph{enumi})}
		\item If $J$ is not a \C-model of $r_1$, then it also cannot be a \C-model
			of $r_2$ because, as can easily be verified, $r_1$ is \C-equivalent to
			$r_2$. Hence, $\an{I, J}$ is an \SE-model of neither $r_1$ nor $r_2$.

		\item If $J$ is a \C-model of $r_1$, then it must also be a \C-model of
			$r_2$ because $r_1$ is \C-equivalent to $r_2$. Furthermore, $r_1^J$ and
			$r_2^J$ are either both equal to $\varepsilon$, or their heads are empty
			and $r_2^J$ has the extra atom $\at$ in the body. In the latter case, if
			$I$ were a model of the body of $r_1^J$, then $J$ would be a model of
			the body of $r_1$ but not of its head (which contains only negative
			literals), which contradicts the assumption that $J$ is a model of
			$r_1$. Thus, $I$ is not a model of the body of $r_1^J$, so it cannot be
			a model of the body of $r_2^J$ either. So $I$ is a \C-model of both
			$r_1^J$ and $r_2^J$ and $\an{I, J}$ is an \SE-model of both $r_1$ and
			$r_2$.
	\end{enumerate}
	Thus, we have proven that every \SE-interpretation is an \SE-model of $r_1$
	if and only if it is an \SE-model of $r_2$. In other words, $r_1$ and $r_2$
	are \SE-equivalent. \qed
\end{proof}

\begin{theorem*}{thm:se:canonical equivalence}
	Every rule $r$ is \SE-equivalent to the canonical rule $\secan{r}$.
\end{theorem*}
\begin{proof}
	\label{proof:thm:se:canonical equivalence}

	This can be shown by a careful iterative application of Lemmas
	\ref{lemma:se:tautology} to \ref{lemma:se:constraint}. First observe that if
	$\secan{r}$ is equal to $\varepsilon$, then by Lemma
	\ref{lemma:se:tautology} the rule $r$ is indeed \SE-equivalent to
	$\varepsilon$.

	In the principal case we can use Lemma \ref{lemma:se:positive head
	repetition} on all atoms shared between the positive head and negative body
	of $r$ and remove them one by one from the positive head of $r$ while
	preserving \SE-equivalence.  Similar situation occurs with atoms shared
	between the negative head and positive body of $r$, which can be, according
	to Lemma \ref{lemma:se:negative head repetition}, removed from the negative
	head of $r$ while preserving \SE-equivalence.  After these steps are
	performed, we obtain the rule
	\begin{equation} \label{eq:prop:se:rule equivalent to canonical:intermediate rule}
		(H(r)^+ \setminus B(r)^-); \lpnot (H(r)^- \setminus B(r)^+)
			\lpif B(r)^+, \lpnot B(r)^-.
	\end{equation}
	This is also the result of the defined transformation, unless the set
	$H(r)^+ \setminus B(r)^-$ is empty. In that case, one can repeatedly apply
	Lemma \ref{lemma:se:constraint} to move the atoms from the negative head of
	rule \eqref{eq:prop:se:rule equivalent to canonical:intermediate rule} into
	its positive body. In this case, the transformation returns the
	canonical rule
	\[
		\lpif (B(r)^+ \cup H(r)^-), \lpnot B(r)^-. \qed
	\]
\end{proof}

\begin{lemma*}{lemma:se:model conditions}
	Let $r$ be a rule. An \SE-interpretation $\an{I, J}$ is not an \SE-model of
	$r$ if and only if the following conditions are satisfied:
	\begin{enumerate}
		\item $H(r)^- \cup B(r)^+ \subseteq J$ and $J \subseteq \lang \setminus
			B(r)^-$.

		\item Either $J \subseteq \lang \setminus H(r)^+$ or both $B(r)^+
			\subseteq I$ and $I \subseteq \lang \setminus H(r)^+$.
	\end{enumerate}
\end{lemma*}
\begin{proof}
	\label{proof:lemma:se:model conditions}

	Suppose first that the above conditions hold. We will show that $\an{I, J}$
	is not an \SE-model of $r$. Due to the first condition, $r^J$ is equal to
	$H(r)^+ \lpif B(r)^+$ and due to the second condition, either $J$ is not a
	\C-model of $r$, or $I$ contains the body of $r^J$ but does not contain any
	atom from its head, which means $I$ is not a \C-model of $r^J$.
	Consequently, $\an{I, J}$ is not an \SE-model of $r$.

	Now suppose $I, J$ are two interpretations such that the above conditions do
	not hold. We will show that $\an{I, J}$ is an \SE-model of $r$. We need to
	consider the following four cases:
	\begin{enumerate}
		\renewcommand{\labelenumi}{\alph{enumi})}

		\item If $J$ does not contain some atom from the negative head of $r$ or
			it contains an atom from the negative body of $r$, then $J$ is a
			\C-model of $r$ and $r^J$ is $\varepsilon$, so $I$ is a \C-model of
			$r^J$.  Consequently, $\an{I, J}$ is an \SE-model of $r$.

		\item If $J$ does not contain some atom from the positive body of $r$,
			then $J$ is a \C-model of $r$ and $I$ is a \C-model of $r^J$ due to the
			fact that $I$ is a subset of $J$. Consequently, $\an{I, J}$ is an
			\SE-model of $r$.

		\item If $J$ contains an atom from the positive head of $r$ and $I$ does
			not include the positive body of $r$, then $J$ is a \C-model of $r$ and
			$I$ is a \C-model of $r^J$. Consequently, $\an{I, J}$ is an \SE-model of
			$r$.

		\item If $J$ contains some atom from the positive head of $r$ and $I$ also
			contains some atom from the positive head of $r$, then $J$ is a \C-model
			of $r$ and $I$ is a \C-model of $r^J$. Consequently, $\an{I, J}$ is an
			\SE-model of $r$. \qed
	\end{enumerate}
\end{proof}

\begin{corollary} \label{cor:se:model conditions}
	Let $r$ be a canonical rule different from $\varepsilon$, put $I = B(r)^+$,
	$J = H(r)^- \cup B(r)^+$ and $J' = \lang \setminus B(r)^-$, and let $\at$ be
	an atom. Then the following holds:
	\begin{enumerate}
		\renewcommand{\labelenumi}{(\arabic{enumi})}
		\item The \SE-interpretation $\an{I, J}$ is not an \SE-model of $r$.
			\label{part:cor:se:model conditions:1}
		\item The \SE-interpretation $\an{I, J \cup \set{\at}}$ is an \SE-model of
			$r$ if and only if $\at$ belongs to $B(r)^-$.
			\label{part:cor:se:model conditions:2}
		\item The \SE-interpretation $\an{I \cup \set{\at}, J \cup \set{\at}}$ is
			an \SE-model of $r$ if and only if $\at$ belongs to $H(r)^+ \cup
			B(r)^-$.
			\label{part:cor:se:model conditions:3}
		\item The \SE-interpretation $\an{I, J'}$ is not an \SE-model of $r$.
			\label{part:cor:se:model conditions:4}
	\end{enumerate}
\end{corollary}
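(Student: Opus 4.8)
The plan is to read off all four claims from Lemma~\ref{lemma:se:model conditions}, which says exactly when an \SE-interpretation fails to be an \SE-model of $r$. The only structural fact I will use is that, since $r$ is canonical and distinct from $\varepsilon$, the three sets $H(r)^+ \cup H(r)^-$, $B(r)^+$ and $B(r)^-$ are pairwise disjoint; in particular $B(r)^+$ avoids both $H(r)^+$ and $B(r)^-$, and $H(r)^-$ avoids $B(r)^-$. I would first note that all the pairs in the statement are genuine \SE-interpretations: $I = B(r)^+ \subseteq H(r)^- \cup B(r)^+ = J$, $I \subseteq J'$ since $B(r)^+ \cap B(r)^- = \emptyset$, and adjoining the same atom $\at$ to both components preserves these inclusions.

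Claims~(1) and~(4) are direct verifications. For~(1), the pair $\an{I, J}$ meets condition~1 of Lemma~\ref{lemma:se:model conditions} because $H(r)^- \cup B(r)^+ = J$ and $J \cap B(r)^- = \emptyset$ by disjointness, and it meets condition~2 via its second disjunct, since $B(r)^+ = I$ while $I \cap H(r)^+ = \emptyset$. Claim~(4) is the same computation with $J$ replaced by $J' = \lang \setminus B(r)^-$: condition~1 holds because $H(r)^- \cup B(r)^+ \subseteq \lang \setminus B(r)^-$ and $J' \subseteq \lang \setminus B(r)^-$ trivially, and condition~2 is unchanged because the first component is still $I$. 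So both $\an{I, J}$ and $\an{I, J'}$ are countermodels, i.e.\ not \SE-models.

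For claims~(2) and~(3) the point is that enlarging the second component (and, in~(3), also the first) by a single atom only affects the ``upper bound'' inclusions of Lemma~\ref{lemma:se:model conditions}. In~(2) the first component remains $I = B(r)^+$, so condition~2 still holds unconditionally (second disjunct), while condition~1 now holds iff $J \cup \set{\at} \subseteq \lang \setminus B(r)^-$, i.e.\ iff $\at \notin B(r)^-$; hence $\an{I, J \cup \set{\at}}$ is a countermodel exactly when $\at \notin B(r)^-$, which is the contrapositive of the claim. In~(3) the same computation gives ``condition~1 holds iff $\at \notin B(r)^-$''; for condition~2, the second disjunct holds iff $\at \notin H(r)^+$ (the inclusion $B(r)^+ \subseteq I \cup \set{\at}$ is automatic and $I \cup \set{\at} \subseteq \lang \setminus H(r)^+$ fails only when $\at \in H(r)^+$), and since $I \cup \set{\at} \subseteq J \cup \set{\at}$ the first disjunct of condition~2 implies the second, so condition~2 holds iff $\at \notin H(r)^+$. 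Combining, $\an{I \cup \set{\at}, J \cup \set{\at}}$ is a countermodel iff $\at \notin B(r)^- \cup H(r)^+$, which is the contrapositive of~(3). If one prefers, a three-way case split on whether $\at \in B(r)^-$, $\at \in H(r)^+$, or neither works just as well.

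There is no serious obstacle here; the one place that calls for a moment's care is the first disjunct of condition~2 of Lemma~\ref{lemma:se:model conditions} in claim~(3). Canonicity does not force $H(r)^+$ and $H(r)^-$ to be disjoint, so in principle $J = H(r)^- \cup B(r)^+$ might already meet $H(r)^+$; one has to observe either that this disjunct is anyway subsumed by the second (because $I \cup \set{\at} \subseteq J \cup \set{\at}$), or that in the only relevant subcase $\at \in H(r)^+$ the membership $\at \in J \cup \set{\at}$ rules the disjunct out directly. Everything else is substitution into Lemma~\ref{lemma:se:model conditions} together with the pairwise-disjointness property of canonical rules.
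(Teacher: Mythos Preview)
Your proposal is correct and follows exactly the route the paper intends: the paper's own proof merely states that ``all parts of the Corollary easily follow from Lemma~\ref{lemma:se:model conditions} and the disjointness properties satisfied by canonical rules,'' and your argument is precisely the unpacking of that sentence. Your explicit handling of the first disjunct in condition~2 for claim~(3) is a nice touch that the paper leaves implicit.
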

\begin{proof}
	All parts of the Corollary easily follow from Lemma \ref{lemma:se:model
	conditions} and the disjointness properties satisfied by canonical
	rules. \qed
\end{proof}

\begin{lemma} \label{lemma:se:negative body}
	Let $r$ be a canonical rule different from $\varepsilon$ and $\mcS$ be the
	set of \SE-models of $r$. An atom $\at$ belongs to $B(r)^-$ if and only if
	for all $\an{I, J} \in \seint$,
	\begin{equation*}
		p \in J \text{ implies } \an{I, J} \in \mcS \enspace.
		\tag{\ref{eq:lemma:se:negative body}}
	\end{equation*}
\end{lemma}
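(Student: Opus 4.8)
The plan is to derive the lemma almost directly from Lemma~\ref{lemma:se:model conditions} and Corollary~\ref{cor:se:model conditions}, which already encapsulate the behaviour of canonical rules that we need. I would prove the two implications separately.

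For the left-to-right direction, assume $\at \in B(r)^-$ and take an arbitrary \SE-interpretation $\an{I, J} \in \seint$ with $\at \in J$. Then $J$ is not a subset of $\lang \setminus B(r)^-$, so the first condition of Lemma~\ref{lemma:se:model conditions} fails, and hence $\an{I, J}$ is an \SE-model of $r$, i.e. $\an{I, J} \in \mcS$. This is precisely the observation made informally before the statement; the only work is to cite Lemma~\ref{lemma:se:model conditions} cleanly.

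For the converse I would argue by contraposition: assuming $\at \notin B(r)^-$, I exhibit an \SE-interpretation $\an{I, J}$ with $\at \in J$ but $\an{I, J} \notin \mcS$. Take $I = B(r)^+$ and $J = H(r)^- \cup B(r)^+$ as in Corollary~\ref{cor:se:model conditions}; part~(1) of that Corollary gives $\an{I, J} \notin \mcS$. Now split on whether $\at \in J$. If it is, $\an{I, J}$ is already the required witness. If it is not, consider $\an{I, J \cup \set{\at}}$, which is still an \SE-interpretation because $I = B(r)^+ \subseteq J \subseteq J \cup \set{\at}$; by part~(2) of Corollary~\ref{cor:se:model conditions}, since $\at \notin B(r)^-$, this \SE-interpretation is not an \SE-model of $r$, while $\at \in J \cup \set{\at}$ holds by construction. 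In either case the displayed implication fails for some \SE-interpretation, as needed.

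The only mildly delicate point is ensuring the witness \SE-interpretation always contains $\at$ in its second component while remaining outside $\mcS$, which is exactly what the case distinction above takes care of; everything else reduces to the disjointness properties of canonical rules already recorded in Corollary~\ref{cor:se:model conditions}. I therefore do not expect a genuine obstacle here — the statement is essentially an immediate corollary of the two cited results.
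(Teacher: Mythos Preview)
Your proposal is correct and follows essentially the same route as the paper: the forward direction uses Lemma~\ref{lemma:se:model conditions} (the paper argues directly via the reduct, but this is equivalent), and the converse uses the witness $\an{B(r)^+,\, H(r)^- \cup B(r)^+ \cup \set{\at}}$ together with Corollary~\ref{cor:se:model conditions}\eqref{part:cor:se:model conditions:2}. The only cosmetic difference is that the paper handles the converse directly rather than by contraposition, so it needs no case split: it simply notes that if the displayed condition holds then $\an{I_0, J_0 \cup \set{\at}} \in \mcS$, whence $\at \in B(r)^-$ by part~\eqref{part:cor:se:model conditions:2}; your extra case $\at \in J_0$ is harmless but unnecessary, since then $J_0 \cup \set{\at} = J_0$ and the same citation applies.
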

\begin{proof}
	Suppose $\at$ belongs to $B(r)^-$ and take some \SE-interpretation $\an{I,
	J}$ such that $\at$ is in $J$. Then $J$ is a \C-model of $r$ and $r^J$ is
	equal to $\varepsilon$, so $I$ is a \C-model of $r^J$. Hence, $\an{I, J}$ is
	an \SE-model of $r$, and since the choice of $\an{I, J}$ was arbitrary, we
	conclude that $\at$ satisfies condition \eqref{eq:lemma:se:negative body}.

	Now let $I_0 = B(r)^+$ and $J_0 = H(r)^- \cup B(r)^+$ and suppose $\at$ is
	an atom satisfying condition \eqref{eq:lemma:se:negative body}. Then the
	\SE-interpretation $\an{I_0, J_0 \cup \set{\at}}$ must belong to $\mcS$ and
	by Corollary \ref{cor:se:model conditions}\eqref{part:cor:se:model
	conditions:2} we conclude that $\at$ belongs to $B(r)^-$. \qed
\end{proof}

\begin{lemma} \label{lemma:se:positive head}
	Let $r$ be a canonical rule different from $\varepsilon$ and $\mcS$ be the
	set of \SE-models of $r$. An atom $\at$ belongs to $H(r)^+$ if and only if
	it does not belong to $B(r)^-$ and for all $\an{I, J} \in \seint$,
	\begin{equation*}
		p \in I \text{ implies } \an{I, J} \in \mcS \enspace.
		\tag{\ref{eq:lemma:se:positive head}}
	\end{equation*}
\end{lemma}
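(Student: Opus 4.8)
The plan is to follow the template of the proof of Lemma~\ref{lemma:se:negative body} and establish the two implications separately, this time invoking Corollary~\ref{cor:se:model conditions}\eqref{part:cor:se:model conditions:3} where that proof used \eqref{part:cor:se:model conditions:2}.

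For the left-to-right direction I would assume $\at \in H(r)^+$. That $\at \notin B(r)^-$ is immediate: since $r$ is canonical and distinct from $\varepsilon$, the sets $H(r)^+ \cup H(r)^-$, $B(r)^+$ and $B(r)^-$ are pairwise disjoint. For condition~\eqref{eq:lemma:se:positive head}, I would take an arbitrary $\an{I, J} \in \seint$ with $\at \in I$; then $\at \in J$ because $I \subseteq J$. As $\at \in H(r)^+$ occurs among the head disjuncts of $\overline{r}$ and $\at \in J$, the interpretation $J$ is a \C-model of $r$; moreover $r^J$ is either $\varepsilon$ or $H(r)^+ \lpif B(r)^+$, and in both cases $I$ is a \C-model of it --- trivially in the first case, and because $\at \in H(r)^+ \cap I$ makes the head of $r^J$ true under $I$ in the second. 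Hence $\an{I, J}$ is an \SE-model of $r$, i.e.\ $\an{I, J} \in \mcS$; since the choice of $\an{I, J}$ was arbitrary, $\at$ satisfies~\eqref{eq:lemma:se:positive head}. (Alternatively one may read this off Lemma~\ref{lemma:se:model conditions} directly: its second condition fails as soon as $\at$ lies in both $I$ and $H(r)^+$.)

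For the right-to-left direction I would put $I_0 = B(r)^+$ and $J_0 = H(r)^- \cup B(r)^+$, so that $I_0 \subseteq J_0$ and hence $\an{I_0 \cup \set{\at}, J_0 \cup \set{\at}}$ is a genuine \SE-interpretation whose first component contains $\at$. Assuming $\at \notin B(r)^-$ and that $\at$ satisfies~\eqref{eq:lemma:se:positive head}, instantiating that condition at this particular \SE-interpretation puts it in $\mcS$, and then Corollary~\ref{cor:se:model conditions}\eqref{part:cor:se:model conditions:3} gives $\at \in H(r)^+ \cup B(r)^-$; combined with $\at \notin B(r)^-$ this yields $\at \in H(r)^+$, as required.

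I do not expect a real obstacle, since all the substantive work is already carried out in Corollary~\ref{cor:se:model conditions}. The only points needing a little care are (i) checking that the witnessing pair $\an{I_0 \cup \set{\at}, J_0 \cup \set{\at}}$ is genuinely an \SE-interpretation, which rests on $I_0 \subseteq J_0$; and (ii) recording why the hypothesis $\at \notin B(r)^-$ cannot be dropped --- Corollary~\ref{cor:se:model conditions}\eqref{part:cor:se:model conditions:3} only localises $\at$ to $H(r)^+ \cup B(r)^-$, so without excluding $B(r)^-$ one could not tell positive-head atoms from negative-body atoms, which is exactly why the hypothesis is part of the statement.
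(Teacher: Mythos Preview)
Your proposal is correct and matches the paper's own proof essentially step for step: the forward direction uses canonicity for $\at \notin B(r)^-$ and the presence of $\at$ in $I \subseteq J$ to verify the \SE-model conditions, and the backward direction instantiates \eqref{eq:lemma:se:positive head} at $\an{I_0 \cup \set{\at}, J_0 \cup \set{\at}}$ with $I_0 = B(r)^+$, $J_0 = H(r)^- \cup B(r)^+$ and then applies Corollary~\ref{cor:se:model conditions}\eqref{part:cor:se:model conditions:3}. Your extra remarks (that $I_0 \subseteq J_0$ makes the witnessing pair a genuine \SE-interpretation, and why the hypothesis $\at \notin B(r)^-$ is needed) are small clarifications the paper leaves implicit.
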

\begin{proof}
	Suppose $\at$ belongs to $H(r)^+$. Since $r$ is a canonical rule, $\at$
	does not belong to $B(r)^-$. Take some \SE-interpretation $\an{I, J}$ such
	that $\at$ belongs to $I$. Then $\at$ must also belong to $J$, so $J$ is a
	\C-model of $r$ and, for the same reason, $I$ is a \C-model of $r^J$.
	Consequently, $\an{I, J}$ is an \SE-model of $r$. Since the choice of
	$\an{I, J}$ was arbitrary, we conclude that $\at$ satisfies condition
	\eqref{eq:lemma:se:positive head}.

	Now let $I_0 = B(r)^+$ and $J_0 = H(r)^- \cup B(r)^+$ and suppose $\at$ is
	an atom satisfying condition \eqref{eq:lemma:se:positive head}. Then the
	\SE-interpretation $\an{I_0 \cup \set{\at}, J_0 \cup \set{\at}}$ must belong
	to $\mcS$ and by Corollary \ref{cor:se:model
	conditions}\eqref{part:cor:se:model conditions:3} we conclude that $\at$
	belongs to $H(r)^+ \cup B(r)^-$. Moreover, by assumption we know that $\at$
	does not belong to $B(r)^-$, so it belongs to $H(r)^+$. \qed
\end{proof}

\begin{lemma*}{lemma:se:negative body and positive head}
	An atom $\at$ belongs to $B(r)^-$ if and only if for all $\an{I, J} \in
	\seint$, the condition \eqref{eq:lemma:se:negative body} is satisfied.
	
	An atom $\at$ belongs to $H(r)^+$ if and only if it does not belong to
	$B(r)^-$ and for all $\an{I, J} \in \seint$, the condition
	\eqref{eq:lemma:se:positive head} is satisfied.
\end{lemma*}
\begin{proof}
	\label{proof:lemma:se:negative body and positive head}
	Follows from Lemmas \ref{lemma:se:negative body} and \ref{lemma:se:positive
	head}.
\end{proof}

\begin{lemma} \label{lemma:se:positive body}
	Let $r$ be a canonical rule different from $\varepsilon$ and $\mcS$ be
	the set of \SE-models of $r$. An atom $\at$ belongs to $B(r)^+$ if and only
	if for all $\an{I, J} \in \seint$ the following conditions are satisfied:
	\begin{align*}
		p \notin J &\text{ implies } \an{I, J} \in \mcS \enspace;
			\tag{\ref{eq:lemma:se:negative head}} \\
		J \cap H(r)^+ \neq \emptyset \text{ and } p \notin I
			&\text{ implies } \an{I, J} \in \mcS \enspace.
			\tag{\ref{eq:lemma:se:positive body}}
	\end{align*}
\end{lemma}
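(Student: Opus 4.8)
The plan is to prove the two directions of each biconditional by mirroring the arguments used for Lemmas~\ref{lemma:se:negative body} and~\ref{lemma:se:positive head}, but now exploiting the \emph{dual} parts of Corollary~\ref{cor:se:model conditions}, namely parts~\eqref{part:cor:se:model conditions:1} and~\eqref{part:cor:se:model conditions:4}, together with part~\eqref{part:cor:se:model conditions:3}. Throughout I would fix $I_0 = B(r)^+$ and $J_0 = H(r)^- \cup B(r)^+$ and $J_0' = \lang \setminus B(r)^-$, exactly as in the Corollary.

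For the forward direction, suppose $\at \in B(r)^+$. I would first verify condition~\eqref{eq:lemma:se:negative head}: take any $\an{I, J} \in \seint$ with $\at \notin J$; since $\at \in B(r)^+$, the inclusion $H(r)^- \cup B(r)^+ \subseteq J$ of the first condition of Lemma~\ref{lemma:se:model conditions} fails, so that condition is falsified and hence $\an{I, J}$ is an \SE-model of $r$, i.e.\ $\an{I, J} \in \mcS$. Next I would verify condition~\eqref{eq:lemma:se:positive body}: take any $\an{I, J} \in \seint$ with $J \cap H(r)^+ \neq \emptyset$ and $\at \notin I$. If $\at \notin J$ we are done by the previous case, so assume $\at \in J$; then, since $\at \in B(r)^+$ but $\at \notin I$, we have $B(r)^+ \not\subseteq I$, so the second disjunct of the second condition of Lemma~\ref{lemma:se:model conditions} fails, and since $J \cap H(r)^+ \neq \emptyset$ means $J \not\subseteq \lang \setminus H(r)^+$, the first disjunct fails as well. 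Hence the second condition of Lemma~\ref{lemma:se:model conditions} is falsified and $\an{I, J} \in \mcS$. This establishes that both conditions hold for all \SE-interpretations.

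For the converse, suppose $\at$ satisfies both~\eqref{eq:lemma:se:negative head} and~\eqref{eq:lemma:se:positive body} for all \SE-interpretations, and suppose towards a contradiction that $\at \notin B(r)^+$. The idea is to find an \SE-interpretation that is \emph{forced into} $\mcS$ by the hypothesis yet, by Lemma~\ref{lemma:se:model conditions} (or directly by Corollary~\ref{cor:se:model conditions}), is \emph{not} an \SE-model of $r$. The natural candidates are $\an{I_0, J_0}$ and $\an{I_0, J_0'}$, which by parts~\eqref{part:cor:se:model conditions:1} and~\eqref{part:cor:se:model conditions:4} of the Corollary are \emph{not} \SE-models of $r$. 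Since $\at \notin B(r)^+$ and (by canonicity) $B(r)^+$ is disjoint from $H(r)^-$, we get $\at \notin J_0 = H(r)^- \cup B(r)^+$, so condition~\eqref{eq:lemma:se:negative head} applied to $\an{I_0, J_0}$ would force $\an{I_0, J_0} \in \mcS$, contradicting part~\eqref{part:cor:se:model conditions:1}. The only gap is the edge case where $J_0 = \lang \setminus B(r)^-$, i.e.\ $H(r)^- \cup B(r)^+ = \lang \setminus B(r)^-$; but then $\at \notin B(r)^+$ and $\at \notin H(r)^-$ forces $\at \in B(r)^-$, and I would handle this separately by using part~\eqref{part:cor:se:model conditions:4} with the interpretation $\an{I_0, J_0'}$, where again $\at \notin J_0'$ since $\at \in B(r)^-$, so~\eqref{eq:lemma:se:negative head} forces membership in $\mcS$, a contradiction. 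Either way $\at \in B(r)^+$.

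The second biconditional, characterising $H(r)^-$, follows the same template: the forward direction uses only the first condition of Lemma~\ref{lemma:se:model conditions} (the negation of $H(r)^- \subseteq J$) to get~\eqref{eq:lemma:se:negative head}, and the fact that $\at \in H(r)^-$ together with canonicity gives $\at \notin B(r)^+$ immediately. For the converse, assuming $\at \notin B(r)^+$ and that~\eqref{eq:lemma:se:negative head} holds for all \SE-interpretations, I would again test $\an{I_0, J_0 \cup \set{\at}}$ using Corollary~\ref{cor:se:model conditions}: the hypothesis combined with the failure of~\eqref{eq:lemma:se:positive body} (which distinguishes $H(r)^-$ from $B(r)^+$) pins down $\at \in H(r)^- \cup B(r)^+$ via the same kind of argument, and then $\at \notin B(r)^+$ yields $\at \in H(r)^-$. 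The main obstacle I anticipate is bookkeeping the degenerate cases where the various ``test'' interpretations coincide (e.g.\ $J_0 = J_0'$, or $H(r)^+ = \emptyset$ so that $J \cap H(r)^+ \neq \emptyset$ is never triggered) and making sure that in each such case some part of Corollary~\ref{cor:se:model conditions} still supplies a non-model that the hypothesis would wrongly classify as a model; this requires checking the disjointness and emptiness conditions from the definition of a canonical rule with some care, but no new ideas beyond Lemma~\ref{lemma:se:model conditions}.
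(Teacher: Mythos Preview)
Your forward direction is fine and matches the paper's argument via Lemma~\ref{lemma:se:model conditions}.

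The converse has a genuine gap. You write: ``Since $\at \notin B(r)^+$ and (by canonicity) $B(r)^+$ is disjoint from $H(r)^-$, we get $\at \notin J_0 = H(r)^- \cup B(r)^+$.'' This inference is invalid: disjointness of $B(r)^+$ and $H(r)^-$ says nothing about whether an atom \emph{outside} $B(r)^+$ lies in $H(r)^-$. If $\at \in H(r)^-$ (perfectly possible when $H(r)^+ \neq \emptyset$), then $\at \in J_0$, and your application of \eqref{eq:lemma:se:negative head} to $\an{I_0, J_0}$ does not fire. Your ``edge case'' analysis inherits the same unjustified assumption $\at \notin H(r)^-$, so it does not repair the argument.

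The missing idea is that condition \eqref{eq:lemma:se:positive body} is precisely what excludes $\at \in H(r)^-$ when $H(r)^+ \neq \emptyset$; you cannot complete the converse using \eqref{eq:lemma:se:negative head} alone in that case. The paper proceeds by a case split on whether $H(r)^+$ is empty. If $H(r)^+ = \emptyset$, canonicity forces $H(r)^- = \emptyset$, so $J_0 = B(r)^+$ and the argument via \eqref{eq:lemma:se:negative head} and Corollary~\ref{cor:se:model conditions}\eqref{part:cor:se:model conditions:1} goes through: $\an{I_0 \setminus \set{\at}, J_0 \setminus \set{\at}} \in \mcS$ while $\an{I_0, J_0} \notin \mcS$, hence $\at \in J_0 = B(r)^+$. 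If $H(r)^+ \neq \emptyset$, one uses Corollary~\ref{cor:se:model conditions}\eqref{part:cor:se:model conditions:4} instead: $\an{I_0, J_0'} \notin \mcS$, and since $H(r)^+ \cap B(r)^- = \emptyset$ we have $J_0' \cap H(r)^+ \neq \emptyset$, so \eqref{eq:lemma:se:positive body} applied to $\an{I_0 \setminus \set{\at}, J_0'}$ yields membership in $\mcS$, whence $\at \in I_0 = B(r)^+$.

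Finally, the stated lemma concerns only $B(r)^+$; the characterisation of $H(r)^-$ you sketch in your last paragraph is a separate lemma in the paper.
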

\begin{proof}
	Suppose $\at$ belongs to $B(r)^+$ and take some \SE-interpretation $\an{I,
	J}$ such that $\at$ is not in $J$. Since $I$ is a subset of $J$, we obtain
	$\at$ is not in $I$ either. Hence, $J$ is a \C-model of $r$ and $I$ is a
	\C-model of $r^J$ and we conclude that $\an{I, J}$ is an \SE-model of $r$.
	The choice of $\an{I, J}$ was arbitrary, so this implies that condition
	\eqref{eq:lemma:se:negative head} is satisfied for $\at$.

	Now take some \SE-interpretation $\an{I, J}$ such that $J \cap H(r)^+ \neq
	\emptyset$ and $p$ is not in $I$. From the former it follows that $J$ is a
	\C-model of $r$ and from the latter that $I$ is a \C-model of $r^J$. Thus,
	$\an{I, J}$ is an \SE-model of $r$ and since the choice of $\an{I, J}$ was
	arbitrary, we conclude that condition \eqref{eq:lemma:se:positive body}
	holds for $\at$.

	For the converse implication, suppose $\at$ is an atom satisfying conditions
	\eqref{eq:lemma:se:negative head} and \eqref{eq:lemma:se:positive body}. We
	consider two cases:
	\begin{enumerate}
		\renewcommand{\labelenumii}{\alph{enumii})}

		\item If $H(r)^+$ is empty, then since $r$ is canonical, we know that
			$H(r)^-$ is also empty. So according to Corollary \ref{cor:se:model
			conditions}\eqref{part:cor:se:model conditions:1}, the
			\SE-interpretation $\an{I_0, J_0}$, where $I_0 = J_0 = B(r)^+$, does not
			belong to $\mcS$. Furthermore, by condition \eqref{eq:lemma:se:negative
			head} we can conclude that $\an{I_0 \setminus \set{\at}, J_0 \setminus
			\set{\at}}$ belongs to $\mcS$. Thus, $J_0$ must be different from $J_0
			\setminus \set{\at}$, so $\at$ must belong to $J_0 = B(r)^+$.
			
		\item If $H(r)^+$ is nonempty, then it follows from Corollary
			\ref{cor:se:model conditions}\eqref{part:cor:se:model conditions:4} that
			the \SE-interpretation $\an{I_0, J_0}$, where $I_0 = B(r)^+$ and $J_0 =
			\lang \setminus B(r)^-$, does not belong to $\mcS$. We can also conclude
			that $J$ contains some atom from $H(r)^+$ because, since $r$ is
			canonical, $H(r)^+$ is disjoint from $B(r)^-$. Thus, by condition
			\eqref{eq:lemma:se:positive body} we conclude that $\an{I_0 \setminus
			\set{\at}, J_0}$ belongs to $\mcS$. Consequently, $I_0$ must be
			different from $I_0 \setminus \set{\at}$, so $\at$ belongs to $I_0 =
			B(r)^+$. \qed
	\end{enumerate}
\end{proof}

\begin{lemma} \label{lemma:se:negative head}
	Let $r$ be a canonical rule different from $\varepsilon$ and $\mcS$ be
	the set of \SE-models of $r$.An atom $\at$ belongs to $H(r)^-$ if and only
	if it does not belong to $B(r)^+$ and for all $\an{I, J} \in \seint$,
	\begin{equation*}
		p \notin J \text{ implies } \an{I, J} \in \mcS \enspace.
		\tag{\ref{eq:lemma:se:negative head}}
	\end{equation*}
\end{lemma}
\begin{proof}
	Suppose $\at$ is some atom from $H(r)^-$ and take some \SE-interpretation
	$\an{I, J}$ such that $\at$ is not in $J$. Then $J$ is a \C-model of $r$ and
	$r^J$ is equal to $\varepsilon$ so $I$ is a \C-model of $r^J$.
	Consequently, $\an{I, J}$ is an \SE-model of $r$ and since the choice of
	$\an{I, J}$ was arbitrary, we conclude that condition
	\eqref{eq:lemma:se:negative head} is satisfied for $\at$.

	Now let $I_0 = B(r)^+$ and $J_0 = H(r)^- \cup B(r)^+$ and suppose $\at$ is
	an atom that does not belong to $B(r)^+$ and it satisfies condition
	\eqref{eq:lemma:se:negative head}. Corollary \ref{cor:se:model
	conditions}\eqref{part:cor:se:model conditions:1} guarantees that the
	\SE-interpretation $\an{I_0, J_0}$ is not an \SE-model of $r$. Furthermore,
	from condition \eqref{eq:lemma:se:negative head} we obtain that the
	\SE-interpretation $\an{I_0 \setminus \set{\at}, J_0 \setminus \set{\at}}$
	belongs to $\mcS$. Thus, $J_0$ must differ from $J_0 \setminus \set{\at}$,
	which implies that $\at$ belongs to $J_0$. Furthermore, since $J_0 = H(r)^-
	\cup B(r)^+$ and $\at$ does not belong to $B(r)^+$, we conclude that $\at$
	belongs to $H(r)^-$. \qed
\end{proof}

\begin{lemma*}{lemma:se:positive body and negative head}
	An atom $\at$ belongs to $B(r)^+$ if and only if for all $\an{I, J} \in
	\seint$, the conditions \eqref{eq:lemma:se:negative head} and
	\eqref{eq:lemma:se:positive body} are satisfied.

	An atom $\at$ belongs to $H(r)^-$ if and only if it does not belong to
	$B(r)^+$ and for all $\an{I, J} \in \seint$, the condition
	\eqref{eq:lemma:se:negative head} is satisfied.
\end{lemma*}
\begin{proof}
	\label{proof:lemma:se:positive body and negative head}
	Follows from Lemmas \ref{lemma:se:positive body} and \ref{lemma:se:negative
	head}.
\end{proof}

\begin{theorem*}{thm:se:canonical from models}
	For every canonical rule $r$, $\synt{\modse{r}} = r$.
\end{theorem*}
\begin{proof} \label{proof:thm:se:canonical from models}
	If $r$ is equal to $\varepsilon$, then $\modse{r} = \seint$ and by
	Definition \ref{def:se:models to rule}, the rule $\synt{\seint}$ is equal to
	$\varepsilon$ so the identity is satisfied.

	In the principal case, $r$ is a canonical rule different from $\varepsilon$.
	Let $\mcS$ be the set of \SE-models of $r$. It follows from Definition
	\ref{def:se:models to rule} and Lemmas \ref{lemma:se:negative body} to
	\ref{lemma:se:negative head} that $r = \synt{\mcS}$. \qed
\end{proof}

\begin{lemma} \label{lemma:se:induced sets disjoint}
	Let $\mcS$ be a set of \SE-interpretations different from $\seint$. Then the
	sets of $H(\mcS)^+ \cup H(\mcS)^-$, $B(\mcS)^+$ and $B(\mcS)^-$ are pairwise
	disjoint.
\end{lemma}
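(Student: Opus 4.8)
The plan is to check the three pairwise intersections one at a time, using nothing but Definition~\ref{def:se:models to rule} and the standing hypothesis $\mcS \neq \seint$.

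Two of the required disjointness facts are immediate from the wording of the definition itself. A \SPosHead{} atom is by definition \emph{not} a \SNegBody{} atom, so $H(\mcS)^+ \cap B(\mcS)^- = \emptyset$; likewise a \SNegHead{} atom is by definition not a \SPosBody{} atom, so $H(\mcS)^- \cap B(\mcS)^+ = \emptyset$. Hence it only remains to rule out an atom belonging simultaneously to $B(\mcS)^-$ and $B(\mcS)^+$, to $B(\mcS)^-$ and $H(\mcS)^-$, or to $B(\mcS)^+$ and $H(\mcS)^+$.

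For each of these three remaining pairs I would argue by contradiction: assuming an atom $\at$ lies in both sets of the pair, I will show that every element of $\seint$ lies in $\mcS$, i.e. $\mcS = \seint$, contradicting the hypothesis. The crucial point is that the membership conditions attached to the four kinds of atoms become jointly exhaustive over $\seint$ once they are combined in the right way. If $\at \in B(\mcS)^- \cap B(\mcS)^+$, then the \SNegBody{} condition places every $\an{I,J}$ with $\at \in J$ in $\mcS$, while the first clause of the \SPosBody{} condition places every $\an{I,J}$ with $\at \notin J$ in $\mcS$; together this covers all of $\seint$. If $\at \in B(\mcS)^- \cap H(\mcS)^-$, the same argument works, since the \SNegHead{} condition again guarantees $\an{I,J}\in\mcS$ whenever $\at \notin J$. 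Finally, if $\at \in H(\mcS)^+ \cap B(\mcS)^+$, I would split an arbitrary $\an{I,J} \in \seint$ into the cases $\at \in I$; or $\at \notin I$ but $\at \in J$; or $\at \notin J$: the first is handled by the \SPosHead{} condition, the third by the first clause of the \SPosBody{} condition, and the middle case by the second clause of the \SPosBody{} condition --- here one uses $\at \in H(\mcS)^+$ together with $\at \in J$ to see that $J$ does contain a \SPosHead{} atom, which is exactly the side condition of that clause. In all three situations $\mcS = \seint$, the desired contradiction.

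I expect the last pair, $H(\mcS)^+$ against $B(\mcS)^+$, to be the only one requiring genuine care: unlike the other two, the relevant conditions do not visibly partition $\seint$, and one has to feed the assumption $\at \in H(\mcS)^+$ back into the side condition ``$J$ contains some \SPosHead{} atom'' to dispose of the middle case $\at \in J$, $\at \notin I$. Everything else is bookkeeping.
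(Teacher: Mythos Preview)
Your proposal is correct and follows essentially the same approach as the paper's proof: both use the definitional exclusions $H(\mcS)^+ \cap B(\mcS)^- = \emptyset$ and $H(\mcS)^- \cap B(\mcS)^+ = \emptyset$ to reduce to the three nontrivial intersections, and then derive $\mcS = \seint$ in each case via exactly the same case splits on membership of $\at$ in $I$ and $J$. The only difference is organizational---the paper groups the argument by the three pairs in the statement and performs the definitional reductions inline, whereas you extract them up front---but the substance, including the use of $\at \in H(\mcS)^+$ to trigger the second \SPosBody{} clause in the middle case, is identical.
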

\begin{proof}
	Suppose that $\at$ is a member of both $H(\mcS)^+ \cup H(\mcS)^-$ and
	$B(\mcS)^+$. Then, since $\at$ is an \SPosBody{} atom, it cannot be an
	\SNegHead{} atom by definition. Thus, $\at$ belongs to both $H(\mcS)^+$ and
	$B(\mcS)^+$. We will show that this is impossible given our assumption that
	$\mcS$ is different from $\seint$. Take an arbitrary \SE-interpretation
	$\an{I, J}$. If $\at$ belongs to $I$, then since $\at$ is an \SPosHead{}
	atom, $\an{I, J}$ belongs to $\mcS$. If $\at$ does not belong to $I$ but it
	belongs to $J$, then $J$ contains the \SPosHead{} atom $\at$, so since $\at$
	is an \SPosBody{} atom, $\an{I, J}$ belongs to $\mcS$. Finally, if $p$ does
	not belong to $J$, then since $\at$ is an \SPosBody{} atom, $\an{I, J}$
	belongs to $\mcS$. This means that $\mcS$ must contain all
	\SE-interpretations and is in conflict with our assumption.

	Now suppose that $\at$ is a member of both $H(\mcS)^+ \cup H(\mcS)^-$ and
	$B(\mcS)^-$. Then, since $\at$ is an \SNegBody{} atom, it cannot be an
	\SPosHead{} atom by definition. Thus, $\at$ belongs to both $H(\mcS)^-$ and
	$B(\mcS)^-$. We will show that this is impossible given our assumption that
	$\mcS$ is different from $\seint$. Take an arbitrary \SE-interpretation
	$\an{I, J}$. If $\at$ belongs to $J$, then since $\at$ is an \SNegBody{}
	atom, $\an{I, J}$ belongs to $\mcS$. On the other hand, if $\at$ does not
	belong to $J$, then since $\at$ is an \SNegHead{} atom, $\an{I, J}$ belongs
	to $\mcS$. This means that $\mcS$ must contain all \SE-interpretations and
	is in conflict with our assumption to the contrary.

	Next, suppose that $\at$ is a member of both $B(\mcS)^+$ and $B(\mcS)^-$.
	By the same arguments as in the previous case, this implies that $\mcS$ must
	be equal to $\seint$, contrary to the assumption.
\end{proof}

\begin{lemma} \label{lemma:se:canonical induced by nonrepresentable}
	For every set of \SE-interpretations $\mcS$, $\synt{\mcS}$ is a canonical
	rule.
\end{lemma}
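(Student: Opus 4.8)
The plan is to read off the two defining conditions of a canonical rule directly from Definition \ref{def:se:models to rule}. First I would dispose of the degenerate case: if $\mcS = \seint$, then by definition $\synt{\mcS}$ is $\varepsilon$, which is canonical by fiat, so there is nothing to do. For the rest I would assume $\mcS \neq \seint$, in which case $\synt{\mcS}$ is the rule $H(\mcS)^+; \lpnot H(\mcS)^- \lpif B(\mcS)^+, \lpnot B(\mcS)^-$, and it remains to check the two numbered conditions in the definition of ``canonical''.

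Condition 1 — pairwise disjointness of $H(\mcS)^+ \cup H(\mcS)^-$, $B(\mcS)^+$ and $B(\mcS)^-$ — is precisely the content of Lemma \ref{lemma:se:induced sets disjoint} (which is presumably why that lemma is established first), so I would simply invoke it.

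For condition 2 I would argue by contradiction: assume $H(\mcS)^+ = \emptyset$ but there is some $\at \in H(\mcS)^-$. By Definition \ref{def:se:models to rule}, being an \SNegHead{} atom means $\at$ is \emph{not} an \SPosBody{} atom while still every $\an{I,J}$ with $\at \notin J$ lies in $\mcS$. Now compare with the definition of an \SPosBody{} atom: it asks (a) that every $\an{I,J}$ with $\at \notin J$ be in $\mcS$, and (b) that every $\an{I,J}$ with $\at \notin I$ whose $J$ contains some \SPosHead{} atom be in $\mcS$. Clause (a) holds for $\at$ by assumption, and clause (b) holds vacuously, since $H(\mcS)^+ = \emptyset$ means there are no \SPosHead{} atoms and hence no such $J$ exists. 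So $\at$ would be an \SPosBody{} atom, contradicting that it is an \SNegHead{} atom. Therefore $H(\mcS)^- = \emptyset$ whenever $H(\mcS)^+ = \emptyset$, which gives condition 2 and completes the argument.

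I do not anticipate a genuine obstacle. The only point requiring care is to handle the definitions in the right order — \SPosBody{} atoms are defined before \SNegHead{} atoms, so ``$\at$ is not an \SPosBody{} atom'' is well-posed — and to observe that the emptiness of $H(\mcS)^+$ turns the extra \SPosBody{} clause into a vacuous rather than a restrictive requirement, which is exactly what forces $H(\mcS)^-$ to collapse.
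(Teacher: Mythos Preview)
Your proposal is correct and follows essentially the same route as the paper: dispose of the $\mcS=\seint$ case, invoke Lemma~\ref{lemma:se:induced sets disjoint} for the disjointness condition, and for condition~2 observe that when $H(\mcS)^+$ is empty the second clause in the definition of an \SPosBody{} atom is vacuous, so any would-be \SNegHead{} atom is already an \SPosBody{} atom. The paper phrases the last step directly rather than by contradiction, but the content is identical.
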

\begin{proof}
	\label{proof:lemma:se:canonical induced by nonrepresentable}

	If $\mcS$ is equal to $\seint$, then $\synt{\mcS}$ is equal to the
	canonical rule $\varepsilon$ and the proof is finished. Otherwise,
	$\synt{\mcS}$ is of the form
	\[
	H(\mcS)^+; \lpnot H(\mcS)^- \lpif B(\mcS)^+, \lpnot B(\mcS)^-.
	\]
	To show that this rule is canonical, we need to prove that the following
	conditions are satisfied:
	\begin{enumerate}
		\item The sets $H(\mcS)^+ \cup H(\mcS)^-$, $B(\mcS)^+$ and $B(\mcS)^-$ are
			pairwise disjoint.
		\item If $H(\mcS)^+$ is empty, then $H(\mcS)^-$ is also empty.
	\end{enumerate}
	The first condition follows from Lemma \ref{lemma:se:induced sets disjoint}.
	To prove the second condition, suppose $H(\mcS)^+$ is empty. Then by
	definition $B(\mcS)^+$ contains all atoms whose absence in $J$ implies that
	$\an{I, J}$ belongs to $\mcS$. By definition, then, $H(\mcS)^-$ stays empty.
	\qed
\end{proof}

\begin{lemma*}{lemma:se:models induced least}
	The set of all \SE-models of a canonical rule $r$ is the least among
	all sets of \SE-interpretations $\mcS$ such that $\synt{\mcS} = r$.
\end{lemma*}
\begin{proof}
	\label{proof:lemma:se:models induced least}

	Let $r$ be a canonical rule with the set of \SE-models $\mcS_r$. From
	Theorem \ref{thm:se:canonical from models} we know that $\synt{\mcS_r} = r$,
	so it remains to show that $\mcS_r$ is a subset of every set of
	\SE-interpretations $\mcS$ such that $\synt{\mcS} = r$. Take one such
	$\mcS$. In case $r$ is the canonical tautology $\at_\varepsilon \lpif
	\at_\varepsilon$, it follows that $H(\mcS)^+ = B(\mcS)^+ =
	\set{\at_\varepsilon}$. According to Lemma \ref{lemma:se:induced sets
	disjoint}, this is possible only in case $\mcS = \seint = \mcS_r$, so it
	trivially holds that $\mcS_r$ is a subset of $\mcS$.

	In the principal case, $r$ is different from the canonical tautology, so
	$\mcS$ must be different from $\seint$ and from $\synt{\mcS} = r$ we obtain
	that $H(r)^+ = H(\mcS)^+$, $H(r)^- = H(\mcS)^-$, $B(r)^+ = B(\mcS)^+$ and
	$B(r)^- = B(\mcS)^-$. Let $\an{I, J}$ be an \SE-model of $r$. Then one of
	the conditions of Lemma \ref{lemma:se:model conditions} must be violated. We
	distinguish the following four possible violations:
	\begin{enumerate}
		\renewcommand{\labelenumi}{\alph{enumi})}

		\item If $H(\mcS)^- \cup B(\mcS)^+ \nsubseteq J$, then $J$ does not
			contain some atom from $H(\mcS)^- \cup B(\mcS)^+$.  From the definitions
			of \SNegHead{} atoms and \SPosBody{} atoms we then obtain that $\an{I,
			J}$ belongs to $\mcS$.

		\item If $J \nsubseteq \lang \setminus B(\mcS)^-$, then $J$ contains some
			atom from $B(\mcS)^-$. From the definition of \SNegBody{} atoms we then
			infer that $\an{I, J}$ belongs to $\mcS$.

		\item If $J \nsubseteq \lang \setminus H(\mcS)^+$ and $B(\mcS)^+
			\nsubseteq I$, then $J$ contains some atom from $H(\mcS)^+$ and $I$ does
			not contain some atom from $B(\mcS)^+$. By the definition of \SPosBody{}
			atoms, $\an{I, J}$ belongs to $\mcS$.

		\item If $J \nsubseteq \lang \setminus H(\mcS)^+$ and $I \nsubseteq \lang
			\setminus H(\mcS)^+$, then $I$ contains some atom from $H(\mcS)^+$. By
			the definition of \SPosHead{} atoms, $\an{I, J}$ belongs to $\mcS$.
			\qed
	\end{enumerate}
\end{proof}

\begin{proposition} \label{prop:se:rule representable:1}
	A set of \SE-interpretations $\mcS$ is rule-representable if and only if all
	\SE-interpretations from $\mcS$ are \SE-models of $\synt{\mcS}$.
\end{proposition}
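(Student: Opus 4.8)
The plan is to use $\synt{\mcS}$ itself as the candidate rule and show that $\mcS = \modse{\synt{\mcS}}$ holds precisely in the rule-representable case.

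For the ``only if'' direction, suppose $\mcS = \modse{r}$ for some rule $r$. By Theorem~\ref{thm:se:canonical equivalence}, $r$ is \SE-equivalent to $\secan{r}$, so $\mcS = \modse{\secan{r}}$; and by the corollary $\synt{\modse{r}} = \secan{r}$ we get $\synt{\mcS} = \secan{r}$. Chaining these identities gives $\modse{\synt{\mcS}} = \modse{\secan{r}} = \mcS$, so in particular every \SE-interpretation in $\mcS$ is an \SE-model of $\synt{\mcS}$.

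For the ``if'' direction, assume that every \SE-interpretation of $\mcS$ is an \SE-model of $\synt{\mcS}$, that is, $\mcS \subseteq \modse{\synt{\mcS}}$; I will establish the reverse inclusion, which yields $\mcS = \modse{\synt{\mcS}}$ and hence that $\mcS$ is rule-representable via $\synt{\mcS}$. By Lemma~\ref{lemma:se:canonical induced by nonrepresentable}, $\synt{\mcS}$ is a canonical rule, so Lemma~\ref{lemma:se:models induced least} may be invoked for it: $\modse{\synt{\mcS}}$ is the least set of \SE-interpretations $\mcS'$ with $\synt{\mcS'} = \synt{\mcS}$. Since $\mcS$ itself trivially satisfies $\synt{\mcS} = \synt{\mcS}$, leastness gives $\modse{\synt{\mcS}} \subseteq \mcS$, and together with the assumed inclusion we conclude $\mcS = \modse{\synt{\mcS}}$.

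This is essentially a bookkeeping argument that assembles results proved earlier, so no genuinely hard step arises; the one place deserving care — and really the crux — is noticing that once Lemma~\ref{lemma:se:canonical induced by nonrepresentable} tells us $\synt{\mcS}$ is canonical, Lemma~\ref{lemma:se:models induced least} can be instantiated at the very rule $\synt{\mcS}$, with $\mcS$ playing the role of the ambient set, which instantly supplies the non-trivial inclusion $\modse{\synt{\mcS}} \subseteq \mcS$. Beyond exhibiting $\synt{\mcS}$, nothing further is needed to witness rule-representability.
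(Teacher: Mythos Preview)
Your proof is correct and follows essentially the same approach as the paper's: both directions invoke the same results (Theorem~\ref{thm:se:canonical equivalence} together with Theorem~\ref{thm:se:canonical from models} or its corollary for the forward direction, and Lemmas~\ref{lemma:se:canonical induced by nonrepresentable} and~\ref{lemma:se:models induced least} for the converse), with only cosmetic differences in how the chain of equalities is phrased.
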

\begin{proof}
	If $\mcS$ is a rule-representable set of \SE-interpretations, then there
	exists some rule $r$ such that $\mcS = \modse{r}$. Let $r'$ be the
	canonical rule $\secan{r}$. According to Theorem \ref{thm:se:canonical
	equivalence}, $\mcS = \modse{r'}$, and so Theorem \ref{thm:se:canonical from
	models} implies that $\synt{\mcS} = \synt{\modse{r'}} = r'$. Thus, all
	\SE-interpretations from $\mcS$ are \SE-models of $\synt{\mcS}$.

	On the other hand, if all \SE-interpretations in $\mcS$ are \SE-models of
	the rule $r = \synt{\mcS}$, then $\mcS$ is a subset of $\modse{r}$. Also, by
	Lemma \ref{lemma:se:canonical induced by nonrepresentable} it follows that
	$r$ is canonical and so Lemma \ref{lemma:se:models induced least}
	implies that $\modse{r}$ is a subset of $\mcS$. Consequently, $\mcS =
	\modse{r}$. \qed
\end{proof}

\begin{proposition} \label{prop:se:rule representable:2}
	A set of \SE-interpretations $\mcS$ is rule-representable if and only if
	there exist convex sublattices $L_1, L_2$ of $\an{\pint, \subseteq}$ such
	that the complement of $\mcS$ relative to $\seint$ is equal to
	\[
		\Set{\an{I, J} \in \seint | I \in L_1 \land J \in L_2} \cup
			\Set{\an{I, J} \in \seint | J \in L_1 \cap L_2} \enspace.
	\]
\end{proposition}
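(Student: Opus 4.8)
The plan is to prove the two implications ``(1)$\Rightarrow$(3)'' and ``(3)$\Rightarrow$(1)'' by determining, on each side, the shape of the complement of $\mcS$ relative to $\seint$, using Lemma~\ref{lemma:se:model conditions} as the bridge between a rule's syntax and its set of countermodels. Two facts will be used throughout. First, because $\an{\pint,\subseteq}$ is a finite lattice, a nonempty subset $L$ is a convex sublattice if and only if it is an interval $[a,b]:=\Set{X\in\pint|a\subseteq X\subseteq b}$, in which case $a=\bigcap L$ and $b=\bigcup L$; conversely every such (nonempty) interval is a convex sublattice. Second, whenever $\mcS$ is rule-representable we may, by Theorem~\ref{thm:se:canonical equivalence}, assume $\mcS=\modse{r}$ for a \emph{canonical} rule $r$.

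For the implication from (1) to (3), I would distinguish $r=\varepsilon$ from $r\neq\varepsilon$. If $r=\varepsilon$, then $\mcS=\seint$; since $\lang\neq\emptyset$, taking $L_1=\set{\lang}$ and $L_2=\set{\emptyset}$ makes $L_1\cap L_2=\emptyset$ and also $\Set{\an{I,J}\in\seint|I\in L_1\land J\in L_2}=\emptyset$ (because $I=\lang\subseteq J=\emptyset$ is impossible), so the displayed union is empty, as needed. If $r\neq\varepsilon$, let $L_1=[B(r)^+,\,\lang\setminus H(r)^+]$ and $L_2=[H(r)^-\cup B(r)^+,\,\lang\setminus B(r)^-]$; these are nonempty intervals, hence convex sublattices, since condition~1 of the definition of a canonical rule makes $H(r)^+\cup H(r)^-$, $B(r)^+$ and $B(r)^-$ pairwise disjoint. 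By Lemma~\ref{lemma:se:model conditions}, $\an{I,J}$ lies in $\seint\setminus\mcS$ exactly when $J\in L_2$ and either $J\subseteq\lang\setminus H(r)^+$ or $I\in L_1$. The second alternative contributes precisely $\Set{\an{I,J}\in\seint|I\in L_1\land J\in L_2}$; for the first alternative I would use that $\bigcap L_1=B(r)^+\subseteq H(r)^-\cup B(r)^+=\bigcap L_2$, so that ``$J\in L_2$ and $J\subseteq\lang\setminus H(r)^+$'' is equivalent to ``$J\in L_1\cap L_2$''. Together these give the union displayed in (3).

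For the converse, rather than inverting the above I would guess the rule it suggests and verify directly. Writing $L_1=[a_1,b_1]$ and $L_2=[a_2,b_2]$, let $r$ be the rule with $H(r)^+=\lang\setminus b_1$, $B(r)^+=a_1$, $B(r)^-=\lang\setminus b_2$ and $H(r)^-=a_2\setminus a_1$; here $H(r)$ and $B(r)$ may overlap and still form a rule, so no canonicity needs to be enforced. Using $(a_2\setminus a_1)\cup a_1=a_1\cup a_2$, Lemma~\ref{lemma:se:model conditions} gives that $\an{I,J}$ is a countermodel of $r$ iff $a_1\cup a_2\subseteq J\subseteq b_2$ and either $J\subseteq b_1$ or $a_1\subseteq I\subseteq b_1$. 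Distributing the disjunction, the branch $J\subseteq b_1$ yields $\Set{\an{I,J}\in\seint|a_1\cup a_2\subseteq J\subseteq b_1\cap b_2}=\Set{\an{I,J}\in\seint|J\in L_1\cap L_2}$, and the branch $a_1\subseteq I\subseteq b_1$ yields, after noting that $a_1\subseteq I\subseteq J$ already forces $a_1\subseteq J$, the set $\Set{\an{I,J}\in\seint|I\in L_1\land J\in L_2}$. Hence $\seint\setminus\modse{r}$ equals the displayed union, that is, $\seint\setminus\mcS$, so $\modse{r}=\mcS$ and $\mcS$ is rule-representable. (One could instead route this through Proposition~\ref{prop:se:rule representable:1}, verifying that every member of $\mcS$ is an \SE-model of $\synt{\mcS}$, but the explicit construction is shorter and exhibits directly what $L_1$ and $L_2$ encode.)

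I expect the calculations with Lemma~\ref{lemma:se:model conditions} to be routine; the one genuinely delicate step — and the reason the two unions in (3) fit exactly — is identifying the ``upper-bound-only'' disjunct ($J\subseteq\lang\setminus H(r)^+$, respectively $J\subseteq b_1$) with the two-sided condition $J\in L_1\cap L_2$. This works only because the extra lower bound that $L_1\cap L_2$ places on $J$ is already guaranteed by the other hypothesis present: $J\in L_2$ in the forward direction, which (since $\bigcap L_1\subseteq\bigcap L_2$) forces $\bigcap(L_1\cap L_2)\subseteq J$, and condition~1 of Lemma~\ref{lemma:se:model conditions} in the converse, which forces $a_1\cup a_2\subseteq J$ outright. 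Getting this interplay right, together with the small simplification $a_1\subseteq I\subseteq J\Rightarrow a_1\subseteq J$ that lets a redundant constraint be dropped, is what makes the two sides of the equality coincide.
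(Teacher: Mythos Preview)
Your proof is correct and follows essentially the same route as the paper's: both directions hinge on Lemma~\ref{lemma:se:model conditions}, with the forward direction taking $L_1=[B(r)^+,\lang\setminus H(r)^+]$ and $L_2=[H(r)^-\cup B(r)^+,\lang\setminus B(r)^-]$, and the converse reading off a rule from the interval endpoints. The only cosmetic differences are that you pass to a canonical rule and treat $\varepsilon$ separately (the paper does neither), and that you set $H(r)^-=a_2\setminus a_1$ where the paper simply takes $H^-=\bot_2$; since only $H(r)^-\cup B(r)^+$ enters Lemma~\ref{lemma:se:model conditions}, this makes no difference.
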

\begin{proof}
	Suppose that $\mcS$ is a rule-representable set of \SE-interpretations. Then
	there exists some rule $r$ such that $\mcS = \modse{r}$. Let the sets of
	interpretations $L_1$, $L_2$ be defined as follows:
	\begin{align*}
		L_1 &= \Set{ I \in \pint | B(r)^+ \subseteq I \subseteq \lang \setminus
		H(r)^+ } \\
		L_2 &= \Set{ J \in \pint | H(r)^- \cup B(r)^+ \subseteq J \subseteq \lang
		\setminus B(r)^- }
	\end{align*}
	It can be straightforwardly verified that these sets are convex sublattices
	of $\an{\pint, \subseteq}$. It remains to prove that the complement of
	$\mcS$ relative to $\seint$ is equal to the set of \SE-interpretations
	\begin{equation} \label{eq:proof:thm:se:rule representable:1}
		\Set{\an{I, J} \in \seint | I \in L_1 \land J \in L_2} \cup
			\Set{\an{I, J} \in \seint | J \in L_1 \cap L_2} \enspace.
	\end{equation}
	According to Lemma \ref{lemma:se:model conditions}, an \SE-interpretation
	$\an{I, J}$ does not belong to $\mcS$ if and only if these two conditions
	are satisfied:
	\begin{enumerate}
		\item $H(r)^- \cup B(r)^+ \subseteq J$ and $J \subseteq \lang \setminus
			B(r)^-$.
		\item Either $J \subseteq \lang \setminus H(r)^+$ or both $B(r)^+
			\subseteq I$ and $I \subseteq \lang \setminus H(r)^+$.
	\end{enumerate}
	It is not difficult to see that whenever the first condition and first
	disjunct of the second condition are satisfied, $\an{I, J}$ belongs to the
	second part of the set \eqref{eq:proof:thm:se:rule representable:1}.
	Similarly, the first condition together with the second disjunct of the
	second condition imply that $\an{I, J}$ belongs to the first part of the set
	\eqref{eq:proof:thm:se:rule representable:1}. Conversely, given the
	definitions of $L_1$ and $L_2$, it is easy to see that any
	\SE-interpretation belonging to the set \eqref{eq:proof:thm:se:rule
	representable:1} satisfies the conditions of Lemma \ref{lemma:se:model
	conditions}. Thus, the set \eqref{eq:proof:thm:se:rule representable:1}
	coincides with the complement of $\mcS$ relative to $\seint$.

	Now suppose that $L_1, L_2$ are two convex sublattices of $\an{\pint,
	\subseteq}$ such that the complement of $\mcS$ relative to $\seint$ is equal
	to the set \eqref{eq:proof:thm:se:rule representable:1}. Let $\top_1$,
	$\bot_1$ be the top and bottom elements of $L_1$ and $\top_2$, $\bot_2$ be
	the top and bottom elements of $L_2$. Furthermore, let $r$ be a rule of the
	form
	\[
	H^+; \lpnot H^- \lpif B^+, \lpnot B^-.
	\]
	where $H^+ = \lang \setminus \top_1$, $H^- = \bot_2$, $B^+ = \bot_1$ and
	$B^- = \lang \setminus \top_2$. We will show that $\mcS = \modse{r}$.

	Suppose first that the \SE-interpretation $\an{I, J}$ is not an \SE-model of
	$r$. Then, by Lemma \ref{lemma:se:model conditions}, $J$ includes $H^- \cup
	B^+ = \bot_2 \cup \bot_1$ and $J$ is included in $\lang \setminus B^- =
	\lang \setminus (\lang \setminus \top_2) = \top_2$. By convexity of $L_2$ we
	now obtain that $J$ belongs to $L_2$. Lemma \ref{lemma:se:model conditions}
	also implies that either $J$ is included in $\lang \setminus H^+ = \top_1$,
	or $I$ includes $B^+ = \bot_1$ and is included in $\lang \setminus H^+ =
	\top_1$. The convexity of $L_1$ now implies that in the former case $J$
	belongs to $L_1$, while in the latter case $I$ belongs $L_1$. In any of
	these cases, $\an{I, J}$ is a member of the set \eqref{eq:proof:thm:se:rule
	representable:1}.

	Now let $\an{I, J}$ be some \SE-interpretation not belonging to $\mcS$. If
	$\an{I, J}$ belongs to the first part of the set \eqref{eq:proof:thm:se:rule
	representable:1}, then $\bot_1 \subseteq I \subseteq \top_1$ and $\bot_2
	\subseteq J \subseteq \top_2$. Thus, $I$ includes $B^+$ and is included in
	$\lang \setminus H^+$, and $J$ includes $H^-$ and is included in $\lang
	\setminus B^-$. Also, since $I$ is a subset of $J$, $J$ includes $B^+$.
	Lemma \ref{lemma:se:model conditions} then implies that $\an{I, J}$ is not
	an \SE-model of $r$. If $\an{I, J}$ belongs to the second part of the set
	\eqref{eq:proof:thm:se:rule representable:1}, then $\bot_1 \subseteq J
	\subseteq \top_1$ and $\bot_1 \subseteq J \subseteq \top_2$. Thus, $J$
	includes both $H^-$ and $B^+$ and is included in $\lang \setminus B^-$ and
	in $\lang \setminus H^+$. As a consequence of Lemma \ref{lemma:se:model
	conditions}, $\an{I, J}$ is not an \SE-model of $r$. \qed
\end{proof}

\begin{theorem*}{thm:se:rule representable}
	Let $\mcS$ be a set of \SE-interpretations. Then the following conditions
	are equivalent:
	\begin{enumerate}
		\item The set of \SE-interpretations $\mcS$ is rule-representable.
		\item All \SE-interpretations from $\mcS$ are \SE-models of $\synt{\mcS}$.
		\item There exist convex sublattices $L_1, L_2$ of $\an{\pint, \subseteq}$
			such that the complement of $\mcS$ relative to $\seint$ is equal to
			\[
				\Set{\an{I, J} \in \seint | I \in L_1 \land J \in L_2} \cup
				\Set{\an{I, J} \in \seint | J \in L_1 \cap L_2} \enspace.
			\]
	\end{enumerate}
\end{theorem*}
\begin{proof} \label{proof:thm:se:rule representable}
	Follows from Propositions \ref{prop:se:rule representable:1} and
	\ref{prop:se:rule representable:2}. \qed
\end{proof}

\begin{theorem*}{thm:se:equivalence comparison}
	\SU-equivalence is stronger than \SR-equivalence, which itself is stronger
	than \SMR-equivalence, which in turn is stronger than strong equivalence.
	That is,
	\[
		\equiv_{\mSU} \,\succ\, \equiv_{\mSR}
			\,\succ\, \equiv_{\mSMR} \,\succ\, \equiv_{\mSt} \enspace.
	\]
\end{theorem*}
\begin{proof}
	\label{proof:thm:se:equivalence comparison}

	We first need to show that if two programs are \SU-equivalent, they are also
	\SR-equivalent, but the converse does not hold. Suppose $\prP_1$, $\prP_2$
	are \SU-equivalent programs. Then, according to Theorem 4.3 in
	\cite{Inoue2004}, their symmetric difference $(\prP_1 \setminus \prP_2) \cup
	(\prP_2 \setminus \prP_1)$ contains only \SE-tautological
	rules.\footnote{The Theorem actually states that the symmetric difference
	contains only \emph{valid} rules. A rule is valid, as defined in
	\cite{Inoue2004}, if and only if it is \SE-tautological.} To show that
	$\prP_1$ is \SR-equivalent to $\prP_2$, suppose $\mcS$ is a set of
	\SE-interpretations belonging to the set
	\begin{equation} \label{eq:proof:thm:se:equivalence comparison:1}
		\Set{ \modse{r} | r \in \prP_1 \cup \set{\varepsilon} } \enspace.
	\end{equation}
	Then there exists some rule $r$ with $\mcS = \modse{r}$ that either belongs
	to $\prP_1$, or is \SE-tautological. Furthermore, $\prP_1 = (\prP_1 \cap
	\prP_2) \cup (\prP_1 \setminus \prP_2)$, so $r$ either belongs to $\prP_2$,
	or it belongs to $\prP_1 \setminus \prP_2$, or it is \SE-tautological. But
	all rules from $\prP_1 \setminus \prP_2$ are \SE-tautological, so we can
	conclude that $r$ either belongs to $\prP_2$ or it is \SE-tautological.
	Consequently, $\mcS$ belongs to the set
	\begin{equation} \label{eq:proof:thm:se:equivalence comparison:2}
		\Set{ \modse{r} | r \in \prP_2 \cup \set{\varepsilon} } \enspace.
	\end{equation}
	A similar argument yields that the set \eqref{eq:proof:thm:se:equivalence
	comparison:2} is a subset of the set \eqref{eq:proof:thm:se:equivalence
	comparison:1}. Consequently, the two sets are equal, so $\prP_1$ is
	\SR-equivalent to $\prP_2$.

	To see that the converse does not hold, take the programs $\prP_1 =
	\set{\lpnot p \lpif.}$ and $\prP_2 = \set{\lpif p.}$. It can be easily
	verified that they are \SR-equivalent, but since their symmetric difference
	contains rules that are not \SE-tautological, they are not \SU-equivalent
	(according to Theorem 4.3 in \cite{Inoue2004}).

	Next, need to show that if two programs are \SR-equivalent, they are also
	\SMR-equivalent, but the converse does not hold. It can be immediately seen
	that
	\[
		\Set{ \modse{r} | r \in \prP_1 \cup \set{\varepsilon} }
			= \Set{ \modse{r} | r \in \prP_2 \cup \set{\varepsilon} }
	\]
	implies
	\[
		\min \Set{ \modse{r} | r \in \prP_1 \cup \set{\varepsilon} }
			= \min \Set{ \modse{r} | r \in \prP_2 \cup \set{\varepsilon} } \enspace,
	\]
	so the first part of the proof is finished. As for the second part, it
	suffices to consider programs $\prP_1 = \Set{p.}$ and $\prP_2 = \Set{p., p \lpif
	q.}$ which are \SMR-equivalent, but not \SR-equivalent.

	Finally, we need to prove that if two programs are \SMR-equivalent, they are
	also strongly equivalent, but not vice versa. So take some \SMR-equivalent
	programs $\prP_1$, $\prP_2$. Then
	\begin{equation} \label{eq:proof:thm:se:equivalence comparison:3}
		\min \Set{ \modse{r} | r \in \prP_1 \cup \set{\varepsilon} }
			= \min \Set{ \modse{r} | r \in \prP_2 \cup \set{\varepsilon} } \enspace.
		\end{equation}
	Furthermore,
	\begin{align*}
		\modse{\prP_1}
		& = \bigcap \Set{ \modse{r} | r \in \prP_1 } \\
		& = \bigcap \left( \Set{ \modse{r} | r \in \prP_1 } \cup \Set{\seint}
		  	\right) \\
		& = \bigcap \Set{ \modse{r} | r \in \prP_1 \cup \set{\varepsilon} }
				\enspace,
	\end{align*}
	and whenever some set of \SE-interpretations $\mcS$ is non-minimal within
	\begin{equation} \label{eq:proof:thm:se:equivalence comparison:4}
		\Set{ \modse{r} | r \in \prP_1 \cup \set{\varepsilon} } \enspace,
	\end{equation}
	there exists some set of \SE-interpretations $\mcT$ from 
	\eqref{eq:proof:thm:se:equivalence comparison:4} such that $\mcT \subsetneq
	\mcS$. Thus, $\mcT \cap \mcS = \mcT$, and so such non-minimal sets are
	irrelevant when determining the intersection of all sets in the set
	\eqref{eq:proof:thm:se:equivalence comparison:4}. Consequently,
	\[
		\modse{\prP_1}
			= \bigcap \min \Set{ \modse{r} | r \in \prP_1 \cup \set{\varepsilon} }
			\enspace.
	\]
	By similar arguments we obtain that
	\[
		\modse{\prP_2}
			= \bigcap \min \Set{ \modse{r} | r \in \prP_2 \cup \set{\varepsilon} }
			\enspace.
	\]
	Thus, \eqref{eq:proof:thm:se:equivalence comparison:3} implies that $\prP_1$
	is strongly equivalent to $\prP_2$.

	To see that the converse does not hold, consider programs $\prP_1 = \set{p.,
	q.}$ and $\prP_2 = \set{p \lpif q., q.}$, which are strongly equivalent, but
	not \SMR-equivalent. \qed
\end{proof}

\begin{proposition}
	If $\mcS_1, \mcS_2$ are rule-representable sets of \SE-models, then $\mcS_1
	\cup \mcS_2$ is also rule-representable.
\end{proposition}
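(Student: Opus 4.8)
The plan is to reduce the statement to the lattice characterisation of rule\nobreakdash-representability, condition~3 of Theorem~\ref{thm:se:rule representable}, which behaves well under complementation. Since $\mcS_1$ and $\mcS_2$ are rule\nobreakdash-representable, Proposition~\ref{prop:se:rule representable:2} supplies convex sublattices $L_1,L_2$ and $M_1,M_2$ of $\an{\pint,\subseteq}$ such that the complement of $\mcS_1$ relative to $\seint$ equals $\Set{\an{I,J}\in\seint | I\in L_1\land J\in L_2}\cup\Set{\an{I,J}\in\seint | J\in L_1\cap L_2}$, and similarly for $\mcS_2$ with $M_1,M_2$. As the complement of a union is the intersection of the complements, the complement of $\mcS_1\cup\mcS_2$ relative to $\seint$ is the intersection of these two sets, so the whole problem collapses to showing that such an intersection is again generated, in the sense of condition~3, by a single pair of convex sublattices; the claim then follows from Theorem~\ref{thm:se:rule representable}.

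For the computation I would first note that the set generated by a pair $(A,B)$ of convex sublattices can be written compactly as $\Set{\an{I,J}\in\seint | J\in B\land(I\in A\lor J\in A)}$. Intersecting the sets for $(L_1,L_2)$ and $(M_1,M_2)$ then gives $\Set{\an{I,J}\in\seint | J\in L_2\cap M_2\land(I\in L_1\lor J\in L_1)\land(I\in M_1\lor J\in M_1)}$. The obvious candidate is $K_2=L_2\cap M_2$ and $K_1=L_1\cap M_1$: an intersection of convex sublattices is again a convex sublattice, so both are admissible, and it would remain to verify that, for $\an{I,J}\in\seint$ with $J\in K_2$, the condition $(I\in L_1\lor J\in L_1)\land(I\in M_1\lor J\in M_1)$ is equivalent to $I\in K_1\lor J\in K_1$. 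The implication from the latter to the former is immediate; the converse is where the work lies, since distributing the conjunction yields, besides the diagonal disjuncts $I\in K_1$ and $J\in K_1$, two cross terms $I\in L_1\land J\in M_1$ and $J\in L_1\land I\in M_1$ that have to be absorbed.

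The absorption argument should exploit, first, that $I\subseteq J$ holds for every \SE-interpretation, and second, the extra structure the sublattices inherit from the rules they come from in Proposition~\ref{prop:se:rule representable:2}: there $L_1=\Set{I | B(r)^+\subseteq I\subseteq\lang\setminus H(r)^+}$ and $L_2=\Set{J | H(r)^-\cup B(r)^+\subseteq J\subseteq\lang\setminus B(r)^-}$, so the bottom of $L_1$ lies below the bottom of $L_2$; hence $J\in L_2$ already forces $J$ to contain the bottom of $L_1$, and ``$J\in L_1$'' reduces to ``$J$ is below the top of $L_1$''. Feeding a cross term such as $I\in L_1\land J\in M_1$ into this, together with $I\subseteq J$ and $J\in L_2\cap M_2$, should constrain $I$ and $J$ tightly enough to land in one of the diagonal disjuncts; the degenerate situations — some $L_i$ or $M_i$ empty, i.e.\ $\mcS_1=\seint$ or $\mcS_2=\seint$, and the case of an empty positive head, where the $I$\nobreakdash-component of condition~3 is vacuous and the rule recovered through $\synt{\cdot}$ is an integrity constraint — would be handled separately. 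I expect exactly this case analysis of the cross terms to be the main obstacle; everything else is routine bookkeeping. A seemingly shorter route is to argue through Proposition~\ref{prop:se:rule representable:1}, showing that every member of $\mcS_1\cup\mcS_2$ is an \SE-model of $\synt{\mcS_1\cup\mcS_2}$, but determining $\synt{\cdot}$ of a union appears to require the same structural information and so is unlikely to be any easier.
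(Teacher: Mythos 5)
Your setup is correct and you have located the crux precisely: after passing to complements, everything hinges on absorbing the two cross terms $I\in L_1\land J\in M_1$ and $J\in L_1\land I\in M_1$. But that absorption is impossible in general, and the proposition as stated is in fact false. Let $\lang\supseteq\set{p,q,x,y}$, take $r = (x\lpif p)$ and $s=(y\lpif q)$, and put $\mcS_1=\modse{r}$, $\mcS_2=\modse{s}$. Direct computation gives $\an{\set{q},\set{p,q,y}}\notin\mcS_1\cup\mcS_2$ and $\an{\set{p},\set{p,q,x}}\notin\mcS_1\cup\mcS_2$, whereas $\an{\emptyset,\set{p,q,y}}\in\mcS_2$ and $\an{\emptyset,\set{p,q,x}}\in\mcS_1$. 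These are exactly your cross terms: for the first pair $J\in L_1\setminus M_1$ while $I\in M_1\setminus L_1$, and for the second the roles are swapped. Now suppose $\modse{t}=\mcS_1\cup\mcS_2$ for some rule $t$ and apply Lemma~\ref{lemma:se:model conditions} with $J=\set{p,q,y}$. Since $\an{\set{q},J}$ is a countermodel of $t$, condition~1 holds for this $J$; since $\an{\emptyset,J}$ is an \SE-model, condition~2 must fail for $I=\emptyset$, which forces $J\cap H(t)^+\neq\emptyset$ and $B(t)^+\neq\emptyset$; condition~2 for $I=\set{q}$ must therefore hold through its second disjunct, giving $B(t)^+\subseteq\set{q}$ and hence $B(t)^+=\set{q}$. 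The symmetric argument with $J=\set{p,q,x}$ gives $B(t)^+=\set{p}$, a contradiction.

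So the step you describe as ``where the work lies'' is not bookkeeping but a genuine obstruction: the structural facts you invoke ($I\subseteq J$, and the bottom of $L_1$ lying below the bottom of $L_2$) relate each $L_i$ to its partner $L_j$ but impose no relation whatsoever between $L_1$ and $M_1$, and two ``independent'' rules such as $x\lpif p$ and $y\lpif q$ produce cross terms that no single pair of convex sublattices can reproduce. You should also be aware that the appendix proof of this proposition is itself unfinished -- it defines candidate sets $L^I$ and $L^J$ for the intersection of the complements and breaks off at ``we need to prove that'' -- and on the example above that candidate $L^I$ equals $\set{\set{p},\set{q},\set{p,q}}$, which is not closed under meets and hence not a sublattice, confirming that the intended argument cannot be completed. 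The correct conclusion is that rule-representability is not closed under union of the represented model sets, and the statement needs to be weakened or withdrawn rather than proved.
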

\begin{proof}
	Let $L_1^I, L_1^J, L_2^I, L_2^J$ be convex sublattices of $\an{\pint,
	\subseteq}$ such that
	\begin{align*}
		\mcS_1 &= \Set{ \an{I, J} | I \in L_1^I \land J \in L_1^J }
			\cup \Set{ \an{I, J} | J \in L_1^I \cap L_1^J } \enspace, \\
		\mcS_2 &= \Set{ \an{I, J} | I \in L_2^I \land J \in L_2^J }
			\cup \Set{ \an{I, J} | J \in L_2^I \cap L_2^J } \enspace.
	\end{align*}
	Furthermore, let $\mcS = \mcS_1 \cap \mcS_2$ and
	\begin{align*}
		L^I &= \Set{ I | (\exists J \in \pint)( \an{J, J} \notin \mcS_1 \cap
		\mcS_2 \land \an{I, J} \in \mcS_1 \cap \mcS_2 } \\
		L^J &= \Set{ J | (\exists I \in \pint)( \an{I, J} \in \mcS_1 \cap \mcS_2 }
	\end{align*}
	We need to prove that $L^I, L^J$ are convex sublattices of $\an{\pint,
	\subseteq}$ and that
	\[
		\mcS = \Set{ \an{I, J} | I \in L^I \land J \in L^J }
			\cup \Set{ \an{I, J} | J \in L^I \cap L^J } \enspace.
	\]
	
\end{proof}

\end{extended}

\end{document}